\renewcommand{\KwData}{ \textbf{Input:}} 
\title{Minimizing UCB: a Better Local Search Strategy in Local Bayesian Optimization}
\author{$\textbf{Zheyi Fan}^{1,2},\ \textbf{Wenyu Wang}^{3}\ ,\ \textbf{Szu Hui Ng}^{3}\ ,\ \textbf{Qingpei Hu}^{1,2}$\\
$^{1}$Academy of Mathematics and Systems Science, Chinese Academy of Sciences, China\\
$^{2}$School of Mathematical Sciences, University of Chinese Academy of Sciences, China\\
$^{3}$Department of Industrial Systems Engineering \& Management, National University of Singapore, Singapore\\
$^{1,2}$\texttt{\{fanzheyi,qingpeihu\}@amss.ac.cn}, $^{3}$\texttt{\{wenyu\_wang,isensh\}@nus.edu.sg}\\ }
\date{February 2024}
\begin{document}

\maketitle
\begin{abstract}
    Local Bayesian optimization is a promising practical approach to solve the high dimensional black-box function optimization problem. Among them is the approximated gradient class of methods, which implements a strategy similar to gradient descent. These methods have achieved good experimental results and theoretical guarantees. However, given the distributional properties of the Gaussian processes applied on these methods, there may be potential to further exploit the information of the Gaussian processes to facilitate the BO search. In this work, we develop the relationship between the steps of the gradient descent method and one that minimizes the Upper Confidence Bound (UCB), and show that the latter can be a better strategy than direct gradient descent when a Gaussian process is applied as a surrogate. Through this insight, we propose a new local Bayesian optimization algorithm, MinUCB, which replaces the gradient descent step with minimizing UCB in GIBO \cite{muller2021local}. We further show that MinUCB maintains a similar convergence rate with GIBO. We then improve the acquisition function of MinUCB further through a look ahead strategy, and obtain a more efficient algorithm LA-MinUCB. We apply our algorithms on different synthetic and real-world functions, and the results show the effectiveness of our method. Our algorithms also illustrate improvements on local search strategies from an upper bound perspective in Bayesian optimization, and provides a new direction for future algorithm design.
\end{abstract}

\section{Introduction}\label{sec:introduction}
   Bayesian Optimization \cite{frazier2018tutorial} is one of the most well-known black box function optimization methods, where objectives can be extremely expensive to evaluate, noisy, and multimodal. The high efficiency of  Bayesian Optimization in finding global optima leads to the widespread application in various research fields, such as hyperparameter tuning \cite{gardner2014bayesian,gelbart2014bayesian,snoek2012practical}, neural architecture search \cite{shen2023proxybo}, chemical experiment design \cite{guo2023bayesian}, reinforcement learning \cite{turchetta2020robust}, aerospace engineering \cite{lam2018advances}. However, the performance of Bayesian optimization is limited by the input dimension $d$, as the theoretical regret bound grows exponentially with input dimension \cite{shekhar2021significance}. This difficulty hinders the application of Bayesian optimization when the actual dimension of problem exceeds $20$ \cite{frazier2018tutorial}.

    There are various methods that have been proposed to handle this difficult task, including works that rely on some assumptions on the model structure, such as the assumption that the majority of the variables have no effect \cite{eriksson2021high} or the kernel satisfies an additive structure \cite{gardner2017discovering}. Local Bayesian optimization methods, which focus on finding a local optima (instead of the global one), have also been a popular (and less restrictive) compromise to manage the curse of dimensionality. Representative methods of these include those based on local trust region methods \cite{eriksson2019scalable}, local latent space \cite{maus2022local}, and approximated gradient methods \cite{muller2021local,nguyen2022local,wu2023behavior}. Among them the approximate gradient method has demonstrated strong performance in practical applications compared with other methods. The approximate gradient method can be described as a two-stage algorithm, which loops through the following two processes: first sample points to decrease the uncertainty of the local area according to a local exploration acquisition function, and then moving to the next point with a trustworthy high reward through a local exploitation acquisition function. M{\"u}ller et al. \cite{muller2021local} applied the idea of gradient descent and first proposed the GIBO algorithm, which was designed to alternate between sampling points to minimize the posterior variance of the gradient at a given location, and then moving in the direction of the expected gradient. Nguyen et al. \cite{nguyen2022local} proposed MPD, which improved the local exploitation acquisition function through defining the descent direction by maximizing the probability of descent, and designed corresponding local exploration acquisition function to match this strategy.
    Wu et al. \cite{wu2023behavior} further provided the detailed proof on the local convergence of GIBO with a polynomial convergence rate, for both the noiseless and noisy cases.

    Although the approximated gradient method has been shown to be practical in dealing with high-dimensional problems, there may be still some room to potentially improve on the current methods. We motivate the ideas for improvement with the following two questions. 
    
    1) We observe that GIBO only utilizes the posterior distributions of the gradient at a point, which will ignore most of the information provided by Gaussian process surrogate in the region, which may lead to an inefficient descent. MPD attempts to make better use of Gaussian processes by performing multi-step descent, but this strategy can exhibits numerical instability and may lead to suboptimal performance of the algorithm (as seen in Section \ref{experiment}). This motivates us to think: is there a better local exploitation acquisition function that can ensure the algorithm fully utilizes Gaussian process information, and also guarantee the convergence to local optima points? 

    2) Do these acquisition functions necessarily need to depend on accurate gradient estimates at a point, or are there other acquisition functions that can improve the efficiency and still ensure local convergence?


    In this paper, we attempt to answer the above two questions through our two new local Bayesian optimization algorithms. To address the first question, we first develop the relationship between the step of the gradient descent method and minimizing the Upper Confidence Bound (UCB). When the Gaussian process is applied as the surrogate model, minimizing the UCB can usually achieve a point with a higher reward than simply doing gradient descent. Motivated by this idea, we propose our first algorithm, Local Bayesian Optimization through Minimizing UCB (MinUCB), which replaces gradient descent step with a step that minimizes the UCB in the GIBO algorithm. We show that MinUCB will also converge to local optima with a similar convergence rate as GIBO.  This discovery is also meaningful as it opens up possibilities for new designs on local Bayesian optimization algorithms. In this work we further apply the look ahead strategy to construct the local exploration acquisition function that is more compatible with minimizing the UCB, and propose our second algorithm, Look Ahead Bayesian Optimization through Minimizing UCB (LA-MinUCB). This algorithm is shown to be one step Bayesian optimal, and address our second question as a local efficient BO acquisition function does not require the additional step and accuracy of a gradient estimate. We implement extensive experiments to demonstrate the performance of our algorithms under different settings. The experimental results illustrate that our algorithms have better performance compared to other methods across many synthetic and real-world functions. We summarize our contributions as follows:
\begin{itemize}[leftmargin=0.5cm]
    \item We develop the relationship between the gradient descent step and minimizing the UCB, and show that minimizing UCB is more efficient when the Gaussian process is the underlying surrogate.
    \item We show that minimizing UCB is an efficient and accurate objective for local exploitation and propose MinUCB.
    \item We improve the local exploration acquisition function of MinUCB and obtain a more efficient local Bayesian optimization algorithm LA-MinUCB.
    \item  We apply different synthetic and real-world function on our algorithm, and the results show the effectiveness of our methods.
\end{itemize}

\begin{figure}[tbp]\label{motivation_fig}
    \centering
    \includegraphics[width=6cm]{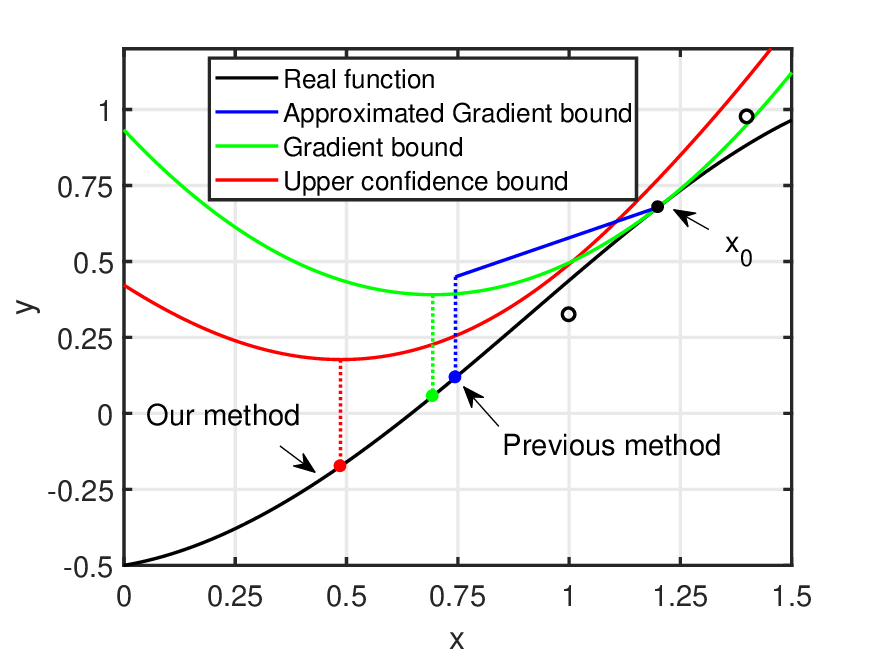}
    \includegraphics[width=6cm]{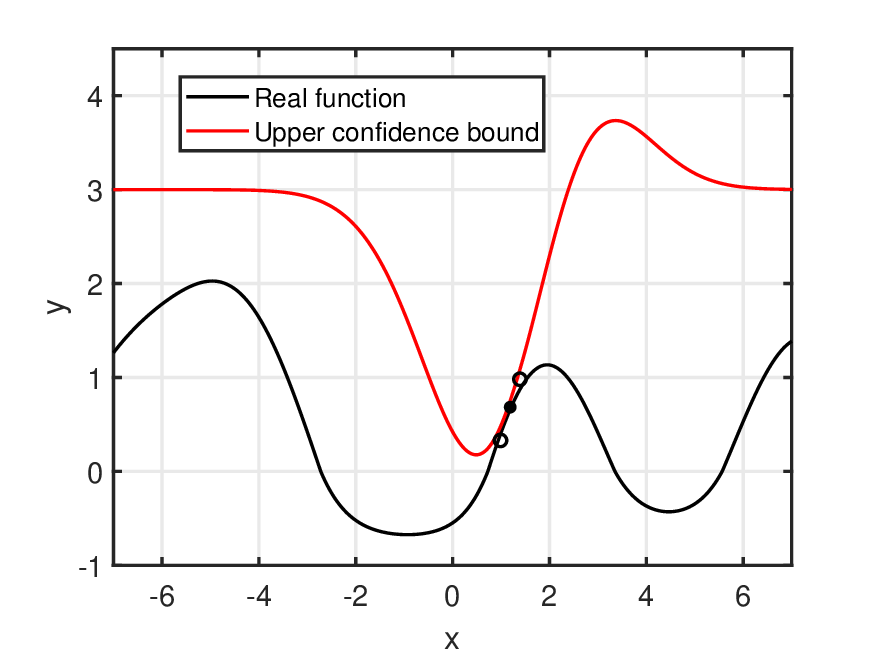}
    \caption{This function $f$ is sampled from $GP(0,k(x,x^{'}))$, where $k(x,x^{'})=exp(-\frac{1}{4}(x-x^{'})^2)$, with standard derivation of white noise $\sigma=0.05$. The dataset contains 2 points, which is marked as black hollow circle. We attempt to search the next point from $\mathbf{x}_0$. The left figure shows that UCB bound is much tighter than other two gradient based bounds, and the minimum points of UCB has the best performance. This shows that minimizing UCB in this example can achieve a much better move to lower point than the gradient descent approach. The right figure illustrates UCB across the design space. Here we see that it is small only near the sampled point, and increases as it moves further away, indicating that minimizing UCB can be viewed as local strategy.}
    \end{figure}     

\section{Literature Review}
High dimensional optimization is a growing research area, where many different methods have been proposed to solve this problem, including Bayesian optimization methods. Among the BO methods, the most widely studied approach focuses on structured Gaussian processes, which imposes additional assumptions, such as low dimensional active spaces or additive spaces, on kernel function or data structure. This includes the Additive and ANOVA Models \cite{binois2022survey}, in which the kernel of Gaussian process is defined through the summation of univariate kernels. This essentially decompose the original Gaussian process into a sum of low dimensional Gaussian processes, which facilitates computations and convergence \cite{kandasamy2015high,li2016high,gardner2017discovering}. Another approach along these lines assumes that the objective function depends only on low dimensional subspace, and examples include hyperparameter optimization for neural networks \cite{bergstra2012random}. Most of these methods suppose a subspace is a linear subspace of the original Euclidean space. Wang et al.\cite{wang2016bayesian} apply
random linear embeddings to approximate this subspace and propose REMBO. Letham et al.  \cite{letham2020re} improve on this and tries to fix over exploration of boundary and distortions in embedding through adaptive linear embeddings \cite{letham2020re}. Other methods to learn the subspace structure include low-dimension matrix recovery \cite{djolonga2013high}, hash
functions construction \cite{nayebi2019framework} and nonlinear embeddings \cite{moriconi2020high}.

As mentioned in the Section \ref{sec:introduction}, another line of research to address the computational and high dimensional challenges is to compromise and focus the BO to be more local, limiting the search region to facilitate computational feasibility and efficiency. Typical approaches along these lines look at incorporating the information about the local optimum \cite{akrour2017local}, using trust regions \cite{eriksson2019scalable,wan2021think,maus2022local}, which expands and shrinks the size of its trust regions with information in each iteration, and approximated gradient methods \cite{muller2021local,nguyen2022local,wu2023behavior}. Among them, the gradient based method MPD \cite{nguyen2022local}, proposed by Nguyen et al. has demonstrated very strong performance compared with other local methods, including gradient based method GIBO \cite{muller2021local}, trust region method Turbo \cite{eriksson2019scalable} and Augmented Random Search (ARS) \cite{mania2018simple}. Given the promising performance of approximate gradient methods, we leverage and extend the works of GIBO and MPD, to provide a simplier and more general local BO approach that can perform as well.

\section{Preliminaries}
\subsection{Bayesian Optimization}
In this paper, we focus on the problem of minimizing a black-box function $f(\cdot)$:
\begin{equation*}
    \min_{\mathbf{x}\in\mathbb{X}}f(\mathbf{x})
\end{equation*}
We assume that no higher order information can be obtained from the oracle (Zeroth-Order-Oracle), where only $i.i.d$ noisy function evaluations $y=f(\mathbf{x})+\varepsilon$, $\varepsilon\sim\mathcal{N}(0,\sigma^2)$ can be observed. Here we focus on applying Bayesian optimization to solve the problem. BO is a surrogate based optimization approach where the objective function is typically modeled with a Gaussian process, and an acquisition function is constructed to sequentially determine next evaluation points in the optimization process.

\subsection{Gaussian process and its derivatives}
 Currently Gaussian process (GP) is one of the most widely used surrogate model today as it has nice analytical form and is flexible to capture various functional forms. A $GP(m(\mathbf{x}),k(\mathbf{x},\mathbf{x}^{'}))$ is specified by its mean function $m(\cdot)$ and kernel function $k(\cdot,\cdot)$. Without loss of generality, we assume the mean function $m(\mathbf{x})\equiv 0$. Suppose $f$ is sampled from the $GP(0,k(\mathbf{x},\mathbf{x}^{'}))$, and we already have a dataset $\mathcal{D}=\{(\mathbf{x}_1,y_1),...,(\mathbf{x}_n,y_n)\}$, and set $\mathbf{X}_{\mathcal{D}}=[\mathbf{x}_1,...,\mathbf{x}_n]$, $\mathbf{y}_{\mathcal{D}}=[y_1,...,y_n]^{T}$, then the posterior over $f$ is also a $GP(\mu_{\mathcal{D}}(\mathbf{x}),k_{\mathcal{D}}(\mathbf{x},\mathbf{x}^{'}))$, where
 \begin{align*}
     \mu_{\mathcal{D}}(\mathbf{x})&=k(\mathbf{x},\mathbf{X}_{\mathcal{D}})(k(\mathbf{X}_{\mathcal{D}},\mathbf{X}_{\mathcal{D}})+\sigma^2I)^{-1}\mathbf{y}_{\mathcal{D}}
     \\
     k_{\mathcal{D}}(\mathbf{x},\mathbf{x}^{'})&=k(\mathbf{x},\mathbf{x}^{'})-k(\mathbf{x},\mathbf{X}_{\mathcal{D}})(k(\mathbf{X}_{\mathcal{D}},\mathbf{X}_{\mathcal{D}})+\sigma^2I)^{-1}k(\mathbf{X}_{\mathcal{D}},\mathbf{x}^{'})
     \\
     \sigma_{\mathcal{D}}^2(\mathbf{x})&=k_{\mathcal{D}}(\mathbf{x},\mathbf{x})
 \end{align*}
 where $k(\mathbf{x},\mathbf{X}_{\mathcal{D}})=[k(\mathbf{x},\mathbf{x}_1),...,k(\mathbf{x},\mathbf{x}_n)]$, $k(\mathbf{X}_{\mathcal{D}},\mathbf{x})=k(\mathbf{x},\mathbf{X}_{\mathcal{D}})^{T}$, and $k(\mathbf{X}_{\mathcal{D}},\mathbf{X}_{\mathcal{D}})$ is the positive definite kernel matrix $[k(\mathbf{x}_{i},\mathbf{x}_{j})]_{i,j\le n}$. 

 It should be noted that if the kernel function $k$ is differentiable, then the derivative of $f$ is also a GP. Given the dataset $\mathcal{D}$, The posterior of $f$ satisfies:
 \begin{equation*}
     \nabla f|\mathcal{D} \sim GP(\nabla \mu_{\mathcal{D}}(\cdot), \nabla  k_{\mathcal{D}}(\cdot,\cdot)\nabla^{T})
 \end{equation*}

In this work we only consider the noisy case, i.e. the standard derivation of white noise  $\sigma>0$.

 \section{The relationship between Gradient descent and minimizing UCB}\label{relationship}
 We first review the traditional Gradient descent approach. This approach is fundamentally based on the assumption of the smoothness of the function:
 \theoremstyle{plain} 
\newtheorem{definition_main}{Definition}
\begin{definition_main}{(Smoothness)}
A function $f$ is L-smooth if and only if for all $\mathbf{x}_1, \mathbf{x}_2\in\mathbb{X}$, we have
\begin{equation*}
    \|\nabla f(\mathbf{x}_1)-\nabla f(\mathbf{x}_2)\|_2\le L\|\mathbf{x}_1-\mathbf{x}_2\|_2
\end{equation*}
\end{definition_main}
 
Suppose the initial point is $\mathbf{x}_{0}$, and function $f(\cdot)$ is $L-smooth$, then we will have the following inequality:
 \begin{equation}\label{gradient_bound}
     f(\mathbf{x})\le f(\mathbf{x}_{0})+<\nabla f(\mathbf{x}_{0}),\mathbf{x}-\mathbf{x}_0> + \frac{L}{2}\|\mathbf{x}-\mathbf{x}_0\|_2^2
 \end{equation}
This provides a quadratic upper bound on $f$, and the minimum value of this upper bound is taken at $\mathbf{x}=\mathbf{x}_{0}-\eta\nabla f(\mathbf{x}_{0})$. In the gradient descent approach, the minimum value of this upper bound is taken as the descent step, where $\eta=\frac{1}{L}$ is used as the step size. In this view Gradient descent can be treated as selecting the minimum point of this quadratic upper bound Eq. (\ref{gradient_bound}).

As direct gradient information is not observable in practice, the upper bound Eq. (\ref{gradient_bound}) cannot be obtained, and approximate gradient methods instead attempt to derive a looser upper bound based on Eq. (\ref{gradient_bound}). In GIBO \cite{muller2021local,wu2023behavior}, they replace the gradient $\nabla f(\mathbf{x}_{0})$ with the derivative of the Gaussian process $\nabla \mu_{\mathcal{D}}(\mathbf{x}_{0})$, and apply the strategy $\mathbf{x}=\mathbf{x}_0-\eta\nabla \mu_{\mathcal{D}}(\mathbf{x}_{0})$ when the variance of this gradient is small enough. According to the proof of Lemma 15 in \cite{wu2023behavior}, GIBO is also a result of optimizing a different upper bound of $f(\cdot)$. If $\mathbf{x}$ is chosen to be the form of $\mathbf{x}=\mathbf{x}_0-\eta\nabla \mu_{\mathcal{D}}(\mathbf{x}_{0})$ and $\eta \le \frac{1}{L}$, this upper bound is:
 \begin{equation}\label{approx_gradient_bound}
     f(\mathbf{x})\le f(\mathbf{x}_{0})-\frac{1}{2}\eta\|\nabla f(\mathbf{x}_{0})\|_2^2 + \frac{1}{2}\eta\|\nabla \mu_{\mathcal{D}}(\mathbf{x}_{0})-\nabla f(\mathbf{x}_{0})\|_2^2
 \end{equation}
If the approximation error of gradient $\|\nabla \mu_{\mathcal{D}}(\mathbf{x}_{0})-\nabla f(\mathbf{x}_{0})\|_2$ is small enough , then the optimal $\eta$ is chosen as $\frac{1}{L}$. This upper bound Eq.(\ref{approx_gradient_bound}) is actually the local exploitation acquisition function of GIBO.

Although these two bounds have intuitive application in the gradient descent, they do have some limitations. The first is the obtainment of the $L-smooth$ coefficient. Although it is possible to estimate the $L-smooth$ coefficient through the Gaussian process, this estimation is expensive as it needs many samples, especially in high dimensional cases. The second is that these two bounds are relatively loose, and the minimum points of these two bounds tend to be too close to $\mathbf{x}_{0}$. Upper bound Eq. (\ref{gradient_bound}) is quadratic and increase very fast when the point $\mathbf{x}$ is far from $\mathbf{x}_0$, and upper bound Eq. (\ref{approx_gradient_bound}) only allows the stepsize $\eta$ to be less than $\frac{1}{L}$. When $L$ is unknown and we have to give it a large estimate to ensure the convergence (usually in real case), the above phenomenon becomes more severe. Taking this view of gradient descent approaches as moving along the minimum of an upper bound of $f(\cdot)$, then leads us to explore if it is possible to discover some tighter upper bounds, where the minimum point is lower than that in Eq. (\ref{gradient_bound}) and Eq. (\ref{approx_gradient_bound}). This can lead us to find a point with a possible higher reward.

A commonly used concept in Bayesian Optimzation is the upper confiedence bound (UCB), which is defined as followed:
\begin{equation}\label{UCB bound}
    UCB(\mathbf{x})=\mu_{\mathcal{D}}(\mathbf{x})+\beta \sigma_{\mathcal{D}}(\mathbf{x})
\end{equation}
Previous work mainly focused on maximizing UCB to find the maximum value of a function \cite{srinivas2009gaussian}. However, it should be noted that UCB is also a natural bound for function $f(\cdot)$. UCB fully utilizes the posterior distribution of $f(\cdot)$, and give every point a probabilistic bound depending on the coefficient $\beta$. The standard deviation term $\sigma_{\mathcal{D}}(\mathbf{x})$ has an upper bound and will not grow faster than the quadratic function, which means the UCB will not change drastically.  This indicates that if we select $\mathbf{x}$ to be
\begin{equation}
    \mathbf{x}^*=\arg\min_{\mathbf{x}}\mu_{\mathcal{D}}(\mathbf{x})+\beta \sigma_{\mathcal{D}}(\mathbf{x})
\end{equation}
Then the function value $f(\mathbf{x}^{*})$ is more likely to be lower than the points obtained through optimizing the upper bound Eq. (\ref{gradient_bound}) and Eq. (\ref{approx_gradient_bound}). This is because the points obtained by minimizing UCB can be further away from the initial point compared to simply gradient descent.

Fig (\ref{motivation_fig}) shows a simple 1-dimension illustrative example. In this example, we sample a function $f$ from $GP(0,k(x,x^{'}))$, where $k(x,x^{'})=exp(-\frac{1}{4}(x-x^{'})^2)$, and the standard deviation of white noise $\sigma=0.05$. Here we illustrate the search of the next point from $x_0$ based on the upper bound perspective of three methods. Suppose we have already sampled two points, which are marked as black hollow circles. These two data points are selected through Central Finite Difference Approximations \cite{shyy1985study}, which is aimed to better estimate the gradient of $\nabla f(\mathbf{x}_0)$. In the left figure, the green line represent the real quadratic upper bound Eq. (\ref{gradient_bound}) at $\mathbf{x}_0$, and the blue line is calculated through Eq. (\ref{approx_gradient_bound}). The coefficient $\beta$ in UCB bound Eq. (\ref{UCB bound}) is set as 3, which means that for any point $\mathbf{x}$, the probability of $f (\mathbf{x})<UCB (\mathbf{x})$ is close to $99.9\%$. It can be seen from the left figure in Fig (\ref{motivation_fig}) that the UCB bound is much tighter than other two bounds, and the minimum point of the UCB bound has a much lower function value than the gradient based method. The right figure plots the UCB across the input space, and we see that the UCB changes relatively slowly and will not reach infinity. Further we observe that UCB is only small near the sampled point, indicating that minimizing the UCB can be view as a local strategy. This simple example illustrates that with a Gaussian process function, minimizing UCB can achieve a better point than the gradient methods, as UCB efficiently utilizes the information from Gaussian process. Based on these insights, we propose two new local Bayesian optimization algorithms, and demonstrate their performances in several numerical examples.

\begin{algorithm}[t]
\SetAlgoLined
\caption{MinUCB: Local Bayesian Optimization through Minimizing UCB}
\label{alg:MinUCB}
\KwData{ A black-box function $f$, and initial point $\mathbf{x}_1$}\\
\For{$t=1,2,...,T$}{
$\mathbf{X}_1=[\mathbf{x}_{t}^{T},...,\mathbf{x}_{t}^{T}]^{T} \in\mathbb{R}^{b^{(1)}_{t}\times d}${\hfill \textit{\#Resample multiple times on $\mathbf{x}_{t}$}}\\
$\mathbf{X}_2=\arg\min_{\mathbf{Z}}\alpha_{trace}(\mathbf{x}_{t},\mathbf{Z})$ where $\mathbf{Z}\in\mathbb{R}^{b^{(2)}_{t}\times d}$ {\hfill \textit{\#Local exploration (sampling)}}\\
$\mathbf{X}=[\mathbf{X}_1^{T},\mathbf{X}_2^{T}]^{T}$  \\  
evaluate the black-box function $f$ on $\mathbf{X}$, obtaining noisy measurements $\mathbf{y}$\\
$\mathcal{D}_{t}=\mathcal{D}_{t-1}\cup(\mathbf{X},\mathbf{y})$\\
$\mathbf{x}_{t+1}=\arg\min_{\mathbf{x}}\mu_{\mathcal{D}_{t}}(\mathbf{x})+\beta_{t} \sigma_{\mathcal{D}_{t}}(\mathbf{x})$ {\hfill \textit{\#Local exploitation (step move)}}\\

}

\end{algorithm}

\section{Local Bayesian Optimization through Minimizing UCB}\label{sec:MinUCB}
The analysis the above section provides us with an important idea, that if we replace the gradient descent step with a step that minimizes a tighter upper bound such as UCB, we may be able to achieve a better result in local optimization. Our first algorithm, Local Bayesian Optimization through Minimizing UCB (MinUCB) (Algorithm \ref{alg:MinUCB}), is developed with this idea, and we show that minimizing the UCB for the step move is an efficient objective for the local exploitation, that can guarantee the convergence with an appropriate local exploration acquisition function.

MinUCB can be viewed as a modified version of GIBO \cite{muller2021local,wu2023behavior} (we list GIBO algorithm in Appendix \ref{sec:details} for reference). In our approach, we adopt the same local exploration acquisition function to sample points as in GIBO (to keep that constant) (line 4 in Algorithm \ref{alg:MinUCB}), and only set the objective of local exploitation acquisition function that drives the step move to minimizing the UCB, instead of gradient descent step $\mathbf{x}_{t+1}=\mathbf{x}_{t}-\eta_{t}\nabla\mu_{\mathcal{D}_{t}}(\mathbf{x}_{t})$, as shown in line 8 in Algorithm \ref{alg:MinUCB}. We first introduce some notations here for better illustration of our algorithm. We define $k_{\mathcal{D}\cup \mathbf{Z}}(\mathbf{x}_{t},\mathbf{x}_{t})=k_{\mathcal{D}}(\mathbf{x}_{t},\mathbf{x}_{t})-k_{\mathcal{D}}(\mathbf{x}_{t},\mathbf{Z})(k_{\mathcal{D}}(\mathbf{Z},\mathbf{Z})+\sigma^2I)^{-1}k_{\mathcal{D}}(\mathbf{Z},\mathbf{x}_{t})$, which is exactly the posterior variance of $f(\mathbf{x}_{t})$ conditioned on the dataset $\mathcal{D}$ and a new input $ \mathbf{Z}$. Because the estimation of variance does not require $\mathbf{y}$, we have omitted the symbol here. The local exploration acquisition function for sampling is defined on this posterior variance:
\begin{equation}
    \alpha_{trace}(\mathbf{x}_{t},\mathbf{Z})=\textbf{tr}(\nabla k_{\mathcal{D}_{t-1}\cup \mathbf{Z}}(\mathbf{x}_{t},\mathbf{x}_{t}) \nabla^{T})
\end{equation}
 which is the trace of the posterior covariance matrix of the $\nabla f(\mathbf{x}_{t})$ conditioned on the dataset and input. This trace quantifies the uncertainty of gradient $\nabla f(\mathbf{x}_{t})$. With a large batch size $b_{t}^{(2)}$, minimizing this trace will result in a lower uncertainty on the estimation of gradient. Although UCB doesn't involve gradient descent, we keep this step constant and argue that the candidates selected through this local exploration acquisition function will still efficiently decrease the uncertainty on this local area, which will benefit the local exploitation move when minimizing the UCB.

In the local exploitation part, the $\beta_t$ controls the search area for each step. The larger the $\beta_t$, the closer $\mathbf{x}_{t+1}$ will be to the existing data point. Minimizing the UCB can bring performance improvements, and we show later in Section \ref{sec:convergence} (Theorem \ref{convergence_speed}) that MinUCB will have a similar convergence rate as GIBO with carefully selection on coefficients $\beta_t$ and batch size $b^{(1)}_{t}$, $b^{(2)}_{t}$. The results of MinUCB can provide inspiration on the design for more efficient local Bayesian optimization algorithm, as shown in Section \ref{sec:LA-MinUCB}.
 
It should be noted that in MinUCB we resample multiple times on the local exploitation result $\mathbf{x}_t$ with a batch size $b_{t}^{(1)}$ (line 3 in Algorithm \ref{alg:MinUCB}). This step is mainly added to ensure the theoretical convergence of the algorithm. This step will typically be only a very small proportion of sampling points, especially in the high dimensional case.

\theoremstyle{plain} 
\newtheorem{theorem_main}{Theorem}
\setcounter{theorem_main}{0}
\newtheorem{assumption}{Assumption}

\section{Convergence Analysis of MinUCB}\label{sec:convergence}
In this section, we establish a convergence analysis of MinUCB to demonstrate the effectiveness of using minimizing UCB as the objective of local exploitation. We prove that MinUCB has a polynomial convergence rate, and this rate also exhibits a polynomial relationship with the input dimension, indicating that MinUCB performs very well in high-dimensional case. For the whole convergence proof of MinUCB, please refer to Appendix B.1-B.4.  In our convergence analysis, we set a mild assumption on kernel function $k$:
\begin{assumption}
    The kernel $k(\cdot,\cdot)$ is stationary, four times continuously differentiable, strictly positive definite, and bounded: $\max_{\mathbf{x}\in\mathbb{X}}k(\mathbf{x},\mathbf{x})\le 1$
\end{assumption}

Many common kernels such as RBF kernel and Matérn kernel with $\gamma>2$ will satisfy this assumption. We also need the definition domain of Gaussian process is bounded:
\begin{assumption}
    The Gaussian process $f(\mathbf{x})$ is defined on a bounded closed set $\mathbb{X}$, i.e. there exist a constant $r>0$ that $\forall \mathbf{x}_1,\mathbf{x}_2\in\mathbb{X}$, $\|\mathbf{x}_1-\mathbf{x}_2\|_2\le r$.
\end{assumption}


Based on the above assumptions, we develop the convergence theory for MinUCB:


\begin{theorem_main}\label{convergence_speed}
     Suppose $f$ is sampled from a zero mean Gaussian process with a continuously differentiable convariance function $k(\cdot,\cdot)$, then if the kernel is RBF kernel or Matérn kernel with $\gamma=2.5$, and satisfy $\beta_{t}=\sqrt{2\log\frac{\pi^2t^2}{\delta}}$ , batch size
		\begin{equation*}
                  b_{t}^{(1)}=
			\begin{cases}
				\log^{2}t\\
				t\\
				t^2
			\end{cases}
            \qquad \text{and} \qquad
			b_{t}^{(2)}=
			\begin{cases}
				d\log^{2}t\\
				dt\\
				dt^2
			\end{cases}
		\end{equation*}
		 Then MinUCB will achieve the convergence rate of
		\begin{equation*}
			\min_{T/2\le i\le T}\|\nabla f(x_i)\|_2^2\le
			\begin{cases}
				O(\sigma d^{\frac{3}{2}}T^{-1}\log^\frac{3}{2} \frac{d^2T^2}{\delta})+O(\sigma d^{2}) \\
				O(\sigma d^{2} T^{-\frac{1}{2}}\log^\frac{5}{2} \frac{d^2T^2}{\delta})=O(\sigma d^{\frac{9}{4}}n^{-\frac{1}{4}}\log^{\frac{5}{2}} \frac{dn}{\delta})\\
				O(\sigma d^{2} T^{-1}\log^\frac{5}{2} \frac{d^2T^2}{\delta}))=O(\sigma d^{\frac{7}{3}}n^{-\frac{1}{3}}\log^{\frac{5}{2}} \frac{dn}{\delta}))
			\end{cases}
		\end{equation*}
\end{theorem_main}
Our method achieves similar results to Wu et al. \cite{wu2023behavior}, except for some logarithmic term and the order of $d$. The dimension $d$ in our boundary is larger than that in Wu's work, which is because we also take the upper bound of the L-smooth coefficient of Gaussian process into consideration, while this upper bound also increases at the polynomial rate with the data dimension $d$, as seen in Theorem \ref{GP_smooth} in Appendix B.1. According to Theorem \ref{convergence_speed}, we need to iteratively increase the UCB coefficient $\beta_{t}$ and  batch size $b_t^{(1)}$, $b_t^{(2)}$ to guarantee the convergence of MinUCB. This phenomenon can be explained that, when the algorithm approaches the local optima, the area around local optima will usually be flatter than other areas (as the gradient is near 0). The algorithm needs more detailed local exploration to ensure that it can descent to a better point. The polynomial convergence rate demonstrates that our local exploitation strategy, minimizing UCB, is accurate and powerful. It can ensure the accuracy of the local search and fully utilize the information of Gaussian processes.

 \section{Look Ahead Bayesian Optimization through Minimizing UCB}\label{sec:LA-MinUCB}
Our proposed MinUCB enjoy good theoretical properties and provide an alternative idea of minimizing the UCB as a good way to progress the local search under a Gaussian process surrogate. However, there are still improvements that can be made to the local exploration in MinUCB. Specifically, UCB itself does not require any gradient information, and the local exploration in MinUCB still focuses on learning the information at a single current point. This under utilizes the Gaussian process surrogate, and the potential information in the local region. Based on the above, we focus here on selecting a better local exploration acquisition function for minimizing UCB, which can help to accelerate the local Bayesian optimization.

In this section we apply a look ahead strategy. The motivation here is to obtain desired candidates to improve the UCB bound, and help the next local exploitation achieve better results under an expectation view. With this idea we propose our second algorithm, Look Ahead Bayesian Optimization through Minimizing UCB (LA-MinUCB) (Algorithm \ref{alg:LA-MinUCB}).

\begin{algorithm}[t]
\SetAlgoLined
\caption{LA-MinUCB: Look Ahead Bayesian Optimization through Minimizing UCB}
\label{alg:LA-MinUCB}
\KwData{ A black-box function $f$.}\\
\For{$t=1,2,...,T$}{
$\mathbf{X}=\arg\min_{\mathbf{Z}}\mathbb{E}_{\mathbf{y_{Z}}}\min_{\mathbf{x}}UCB(\mathbf{x},\mathcal{D}_{t-1},\mathbf{Z},\mathbf{y_{Z}})$ where $\mathbf{Z}\in\mathbb{R}^{b_{t}\times d}${\hfill \textit{\#Local exploration}}\\
evaluate the black-box function $f$ on $\mathbf{X}$, obtaining noisy measurements $\mathbf{y}$\\
$\mathcal{D}_{t}=\mathcal{D}_{t-1}\cup(\mathbf{X},\mathbf{y})$\\
$\mathbf{x}_{t+1}=\arg\min_{\mathbf{x}}\mu_{\mathcal{D}_{t}}(\mathbf{x})+\beta_{t} \sigma_{\mathcal{D}_{t}}(\mathbf{x})${\hfill \textit{\#Local exploitation via minimizing UCB}}\\
$y_{t+1}=f(\mathbf{x}_{t+1})+\varepsilon_{t}$\\
$\mathcal{D}_{t}=\mathcal{D}_{t}\cup(\mathbf{x}_{t+1},y_{t+1})$
}
$\mathbf{x}_{T}=\arg\min_{\mathbf{x}}\mu_{\mathcal{D}_{T}}(\mathbf{x})+\beta_{T} \sigma_{\mathcal{D}_{T}}(\mathbf{x})$
\end{algorithm}

Suppose we have the input $\mathbf{Z}$ and their labels $\mathbf{y}_Z$ (this part is unknown before sampled), we define $\mathcal{D}_Z$ as $\mathcal{D}_Z=\{(\mathbf{z}_{i},y_{i})\}$, $i=1,...,b_t$ as the dataset formed through $\mathbf{Z}$ and $\mathbf{y}_Z$. Then we define
\begin{equation*}
    UCB(\mathbf{x},\mathcal{D}_{t-1},\mathbf{Z},\mathbf{y_{Z}})=\mu_{\mathcal{D}_{t-1}\cup\mathcal{D}_Z}(\mathbf{x})+\beta_{t} \sigma_{\mathcal{D}_{t-1}\cup \mathcal{D}_Z}(\mathbf{x})
\end{equation*}
This is the upper confidence bound when we already have the dataset $\mathcal{D}_{t-1}\cup \mathcal{D}_Z$, and we want to find the input $\mathbf{Z}$ to minimize the minimum point of UCB: $\min_{\mathbf{x}}UCB(\mathbf{x},\mathcal{D}_{t-1},\mathbf{Z},\mathbf{y_{Z}})$. However, as the label $\mathbf{y}_Z$ is unknown, we can only choose to optimize it through its expectation. We adopt this look ahead predictive as the local exploration acquisition function in LA-MinUCB:
\begin{equation}
    \mathbf{X}=\arg\min_{\mathbf{Z}}\mathbb{E}_{\mathbf{y_{Z}}}\min_{\mathbf{x}}UCB(\mathbf{x},\mathcal{D}_{t-1},\mathbf{Z},\mathbf{y_{Z}})
\end{equation}

Although the local exploration in LA-MinUCB does not need to be specified around a certain point, it is still necessary to have an local exploitation step. Local exploitation step may find a point with current best reward, and will provide a better foundation for subsequent local exploration.

LA-MinUCB has a similar structure with traditional Knowledge Gradient \cite{frazier2018tutorial} except for the standard derivation term $\sigma_{\mathcal{D}_{t}}(\mathbf{x})$. This standard derivation term behaved as a regularization term, that force the sampled points to be not too far away from current area. This is because only when the selected points are closer to the current optimal point, are they more likely to learn which nearby area may contain smaller values, thereby achieving a greater decrease in the next exploitation step. This extra standard derivation term restrict the over-exploration of the original Knowledge Gradient to help it perform better exploitation. Benefiting from this property, LA-MinUCB is able to focus on doing local search and quickly find local optima points.

The significant advantage of using LA-MinUCB is that its strategy has good theoretical properties:
\begin{theorem_main}
    If only one iteration is left and we can observe the function value through sampling, then the local exploration in LA-MinUCB is Bayes-optimal among all feasible policies.
\end{theorem_main}
The proof of this theorem is listed in Appendix B.5. This indicates that LA-MinUCB is a greedy strategy, where in each step it use the optimal acquisition function to enhance the exploitation. From this perspective, LA-MinUCB is likely to be superior to MinUCB because LA-MinUCB is better prepared for the next step of exploitation. This is also reflected in our experiments results, which shows that LA-MinUCB have a very competitive performance in numerical experiments and practical applications. Meanwhile, this also illustrates that good local Bayesian optimization algorithms can be constructed without the need for approximate gradients.

\begin{figure}[tbp]

    \hspace{-1cm}\includegraphics[width=16cm]{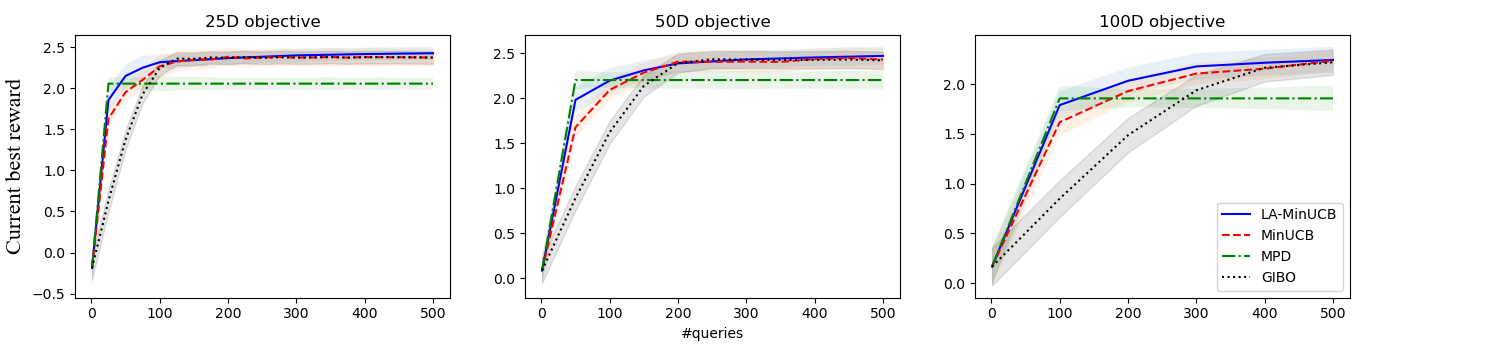}
    \caption{Progressive optimized reward on high-dimensional synthetic functions. LA-MinUCB demonstrates fast and accurate convergence compared to other methods.}
    \label{experimental_1}
\end{figure}     

\begin{figure}[tbp]
    \hspace{-1cm}
    \includegraphics[width=16cm]{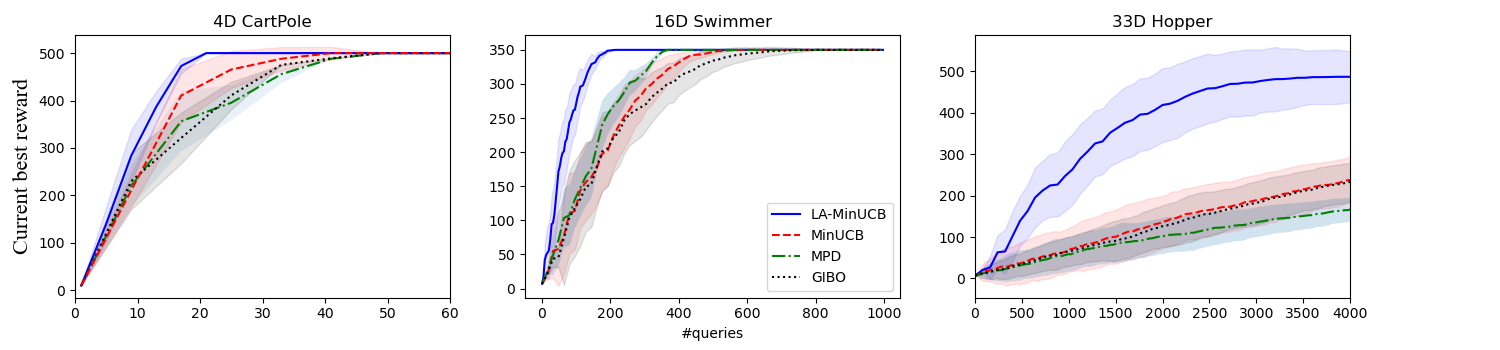}
    \caption{Progressive optimized reward on the MuJuCo tasks. LA-MinUCB has consistently optimal performance.}
    \label{experimental_2}
\end{figure}   

\section{Experiments}\label{experiment}
In this section we apply varies experimental settings to assess the efficacy of our proposed two algorithms, Min-UCB and LA-MinUCB, relative to two established methodologies. These include two approximated gradient methods, (1) GIBO \cite{muller2021local} and MPD \cite{nguyen2022local}, which has shown strong performance in local Bayesian optimization methods. Our code is based on Nguyen et al. \cite{nguyen2022local}, where they provide the program of GIBO, MPD, and various of objective functions. Our experimental settings on synthetic (Sect. 4.1) and reinforcement learning (Sect. 4.2) objectives are same as their papers. Each algorithm is executed a total of ten times for every objective function that we examine, initiating from an identical set of starting points sampled across the bounded domain via a Sobol sequence. We illustrate the results in Fig. (\ref{experimental_1}) (synthetic objective) and (\ref{experimental_2}) (reinforcement learning objective). The figures shows the mean of current best reward on the number of queries with an error bar (defined through the standard derivation).  Our experimental framework was executed on a workstation of 20 Intel Xeon CPU cores, with a 32GB of memory.
The codes for the proposed algorithm and experiments will also be made publicly available upon publication. For more computational details, please refer to Appendix \ref{sec:details}.

\subsection{Synthetic Objectives}

In our first experiments, we focused on optimizing synthetic objective functions within the d-dimensional unit hypercube $[0,1]^{d}$. These functions were generated by sampling from a Gaussian Process (GP) with a Radial Basis Function (RBF) kernel. This Synthetic objective is first mentioned in M{\"u}ller et al. \cite{muller2021local} , and for a more comprehensive understanding of this experimental setup, please refer to §4.1 in their work.  Each experiment was allocated a budget of 500 function evaluations, under the dimension $\{25,50,100\}$ separately. The error bar here is defined as plus or minus 0.2 standard derivation for better illustration, as the functions sampled by Gaussian processes have significant differences. Fig. (\ref{experimental_1}) illustrates that our proposed LA-MinUCB and Min-UCB achieve highly competitive performance in both efficiency and accuracy, where LA-MinUCB is slightly better. The MPD ascent very rapidly at the beginning, but soon shows instability. Although each ascent applies the direction of maximum probability, it considers multiple ascent in one iteration in the algorithm. This strategy may instead lead to a small probability that the final point is greater than the original point, causing the algorithm to fall into suboptimal solutions. GIBO can achieve very close to the real local optimum, but its strategy is too conservative, resulting in slow convergence.

\subsection{Reinforcement Learning Objective}

In this experiment, we turned to reinforcement learning, specifically MuJoCo-based locomotion tasks \cite{todorov2012mujoco}. In these tasks, we try to learn a linear policy that maps states into actions in order to maximize the rewards given by the learning environment. We adopt the experiments similar with Nguyen et al. \cite{nguyen2022local} including three environments: CartPole-v1 with 4 parameters, Swimmer-v1 with 16 parameters, and Hopper-v1 with 33 parameters. The only difference is that we apply the Hopper-v1 without the state normalization, as in Nguyen's work they show that this state normalization will cause a significant unstable change in the function value. The error bar here is defined as plus or minus one standard derivation. Our results, depicted in Fig. (\ref{experimental_2}), show that LA-MinUCB has an excellent performance in all three cases. LA-MinUCB exhibits faster convergence speed and may achieve a better local optima than other methods. Although MinUCB has also shown slightly better performance than GIBO, it is still not that efficient as LA-MinUCB. This reflects the powerful performance of LA-MinUCB in real applications, and also indicates that efficient local Bayesian optimization algorithm can be designed without the estimation of gradient information.

\section{Conclusion}
In this paper we find the connection between gradient descent and minimizing the UCB, demonstrating that minimizing UCB is an efficient local exploitation strategy with the Gaussian Processes surrogate. We introduce MinUCB, an algorithm applying minimizing the UCB as the objective of local exploitation, that will converges to local optima within a polynomial rate. By enhancing the local exploration acquisition function of MinUCB, we have developed a more advanced local Bayesian optimization algorithm, LA-MinUCB. We have tested our algorithm on various synthetic and reinforcement learning objectives, and the results have confirmed the efficacy of our approaches.

In this article, we have not provided the convergence proof of the LA-MinUCB algorithm. One consideration is whether LA-MinUCB have a better theoretical convergence rate compared to other methods, such as GIBO? Additionally, are there any other better local exploitation strategy in local Bayesian optimization? These will become interesting research directions in the future.

\newpage
{\large\textbf{Acknowledgments and Disclosure of Funding}}

Szu Hui Ng’s work is supported in part by the Ministry of Education, Singapore (grant: R-266-000-149-114).

\bibliography{reference}

\newpage

\appendix
\theoremstyle{plain} 
\newtheorem{theorem}{Theorem}
\setcounter{theorem}{0}
\renewcommand{\thetheorem}{\arabic{theorem}}

\theoremstyle{plain} 
\newtheorem{lemma}{Lemma}
\theoremstyle{plain}

\section{Additional details of the algorithms}\label{sec:details}
In this section we will provide additional details about our proposed algorithms. First we list the GIBO algorithm in the version of Wu et al.\cite{wu2023behavior}, as shown in Algorithm \ref{alg:GIBO}. MinUCB adopt the same local exploration acquisition function to sample points as in GIBO, and change the objective of local exploitation acquisition function into minimizing the UCB, instead of Gradient descent step.

\begin{algorithm}[t]
\SetAlgoLined
\caption{GIBO}
\label{alg:GIBO}
\KwData{ A black-box function $f$}\\
\For{$t=1,2,...,T$}{
$\mathbf{X}=\arg\min_{\mathbf{Z}}\alpha_{trace}(\mathbf{x}_{t},\mathbf{Z})$ where $\mathbf{Z}\in\mathbb{R}^{b_{t}\times d}$ {\hfill \textit{\#Local exploration (sampling)}}\\
evaluate the black-box function $f$ on $\mathbf{X}$, obtaining noisy measurements $\mathbf{y}$\\
$\mathcal{D}_{t}=\mathcal{D}_{t-1}\cup(\mathbf{X},\mathbf{y})$\\
$\mathbf{x}_{t+1}=\mathbf{x}_{t}-\eta_{t}\nabla \mu_{\mathcal{D}_{t}}(\mathbf{x}_{t})$ {\hfill \textit{\#Local exploitation (step move)}}\\

}
\end{algorithm}
As for the computational problem of LA-MinUCB, the optimization for the local exploration acquisition function is quite hard, as it involves a nested optimization problem. To handle this task we use the idea of one-shot optimization in BoTorch \cite{balandat2020botorch}, that transform the original problem into a deterministic optimization problem with fixed sampling. This method is first proposed to solve the optimization of Knowledge Gradient. However, considering that our method is very similar to Knowledge Gradient, their optimization method is also applicable to our problem. For more information about this one-shot optimization method, please refer to the Section 4.2 in Balandat et al. \cite{balandat2020botorch}.

In our numerical experiment, which has been shown in Section \ref{experiment}, we apply the same experimental settings with M{\"u}ller et al. \cite{muller2021local} and Nguyen et al. \cite{nguyen2022local}. The coefficients in GIBO and MPD are chosen as the optimal one, which are reported in their experiments or code. As for our MinUCB and LA-MinUCB, we use a fixed batch size in each experiment, i.e. $b_t^{(1)}$, $b_t^{(2)}$ and $b_t$ are unchanged in each experiment. The UCB coeffieient $\beta^t$ in MinUCB and LA-MinUCB is fixed as 3, which has been shown to be able to balance the convergence speed and accuracy of the algorithm.

\section{Theoretical analysis for MinUCB and LA-MinUCB}
In this section, we will present the theoretical results of the algorithms introduced in the article. Sections B.1-B.4 provide the theoretical convergence proof for MinUCB, while B.5 discusses the theoretical properties of LA-MinUCB. Sections B.1-B.3 serve as the preliminary theory for proving the convergence of MinUCB, mainly presenting the local smoothness properties of Gaussian processes, mean functions, and standard deviation functions, as well as the theoretical properties of approximate gradients. B.4 combines the above theories,  connecting the minimization of UCB with gradient descent, and provides the convergence theorem for MinUCB. B.5 demonstrates that LA-MinUCB has one-step Bayesian optimality, illustrating the superiority of the algorithm.

\subsection{The function smoothness in Gaussian Process and mean function}\label{sec:smooth_gp_mu}
In this subsection we mainly build the theory for the smoothness of Gaussian process and mean function. Our convergence proof heavily depend on the smoothness analysis on the Gaussian process, mean function and standard derivation function. These proof rely on the upper bound of Gaussian processes proposed by Lederer et al.\cite{lederer2019uniform}, which is related to the Lipshitz continues coefficient on kernel function:
\setcounter{theorem}{2}

\begin{lemma}[Lederer et al.\cite{lederer2019uniform}]
    
    Consider a zero mean Gaussian process with a continuously differentiable convariance function $k(\cdot,\cdot)$, and let $L_k$ denote its Lipschitz constant on the set $\mathbb{X}$ with maximum extension $r=\max_{\mathbf{x},\mathbf{x}^{'}\in \mathbb{X}}\|\mathbf{x}-\mathbf{x}^{'}\|_2$. Then, with probability of at least $1-\delta$, the supremum of a sample function $f(\mathbf{x})$ of this Gaussian process is bounded by:
    \begin{equation*}
        \sup_{\mathbf{x}\in\mathbb{X}}f(\mathbf{x})\le\sqrt{2\log\left(\frac{1}{\delta}\right)}\max_{\mathbf{x}\in\mathbb{X}}\sqrt{k(\mathbf{x},\mathbf{x})}+12\sqrt{6d}\max\left\{\max_{\mathbf{x}\in\mathcal{X}}\sqrt{k(\mathbf{x},\mathbf{x})},\sqrt{rL_k}\right\}
    \end{equation*}
\end{lemma}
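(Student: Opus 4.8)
The plan is to prove this high-probability bound on $\sup_{\mathbf{x}\in\mathbb{X}}f(\mathbf{x})$ by splitting the supremum into its mean plus a fluctuation, controlling the fluctuation with a Gaussian concentration inequality and the mean with a metric-entropy (chaining) estimate. Write $M=\sup_{\mathbf{x}\in\mathbb{X}}f(\mathbf{x})$ and decompose $M=\mathbb{E}[M]+(M-\mathbb{E}[M])$. The two summands will correspond exactly to the two terms in the claimed bound: the $\sqrt{2\log(1/\delta)}\max_{\mathbf{x}}\sqrt{k(\mathbf{x},\mathbf{x})}$ term comes from the fluctuation, and the $12\sqrt{6d}\max\{\cdot,\cdot\}$ term from the mean.

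First I would bound the fluctuation $M-\mathbb{E}[M]$ using the Borell--TIS inequality. Since $f$ is a centered Gaussian process and the differentiability of $k$ on the compact set $\mathbb{X}$ guarantees almost surely continuous (hence bounded) sample paths, Borell--TIS applies and gives $\Pr(M-\mathbb{E}[M]>u)\le\exp(-u^2/(2\sigma_{\sup}^2))$, where $\sigma_{\sup}^2=\sup_{\mathbf{x}\in\mathbb{X}}\mathrm{Var}(f(\mathbf{x}))=\max_{\mathbf{x}\in\mathbb{X}}k(\mathbf{x},\mathbf{x})$. Setting the right-hand side equal to $\delta$ and solving for $u$ yields, with probability at least $1-\delta$, the inequality $M\le\mathbb{E}[M]+\sqrt{2\log(1/\delta)}\max_{\mathbf{x}\in\mathbb{X}}\sqrt{k(\mathbf{x},\mathbf{x})}$, which is precisely the first term.

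The remaining and more delicate step is to bound $\mathbb{E}[M]$ deterministically by $12\sqrt{6d}\max\{\max_{\mathbf{x}}\sqrt{k(\mathbf{x},\mathbf{x})},\sqrt{rL_k}\}$ via Dudley's entropy integral. I would equip $\mathbb{X}$ with the canonical pseudometric $d_f(\mathbf{x},\mathbf{x}')=\sqrt{\mathbb{E}[(f(\mathbf{x})-f(\mathbf{x}'))^2]}$; expanding the variance gives $d_f(\mathbf{x},\mathbf{x}')^2=k(\mathbf{x},\mathbf{x})-2k(\mathbf{x},\mathbf{x}')+k(\mathbf{x}',\mathbf{x}')$, and the Lipschitz property of $k$ with constant $L_k$ then yields $d_f(\mathbf{x},\mathbf{x}')^2\le 2L_k\|\mathbf{x}-\mathbf{x}'\|_2$. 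This inequality converts Euclidean covers into $d_f$-covers: a Euclidean ball of radius $\epsilon^2/(2L_k)$ sits inside a $d_f$-ball of radius $\epsilon$, so $N(\mathbb{X},d_f,\epsilon)\le N(\mathbb{X},\|\cdot\|_2,\epsilon^2/(2L_k))\le(1+2rL_k/\epsilon^2)^d$ by the standard volumetric covering bound for a Euclidean set of diameter $r$. The canonical diameter is controlled simultaneously by $\sqrt{2}\max_{\mathbf{x}}\sqrt{k(\mathbf{x},\mathbf{x})}$ and by $\sqrt{2rL_k}$, which is the source of the $\max\{\cdot,\cdot\}$ structure in the final expression. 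Feeding the covering estimate into Dudley's inequality $\mathbb{E}[M]\le C\int_0^{D}\sqrt{\log N(\mathbb{X},d_f,\epsilon)}\,d\epsilon$ and evaluating the integral, which factors a $\sqrt{d}$ out front and leaves a logarithmic integrand, produces the claimed prefactor.

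The hard part will be the explicit constant bookkeeping in the entropy integral. The probabilistic and chaining machinery is standard, but recovering exactly $12\sqrt{6}$ requires tracking the absolute constant in Dudley's bound, the precise volumetric covering constant, and the split of the integral $\int_0^{D}\sqrt{d\log(1+2rL_k/\epsilon^2)}\,d\epsilon$ into the two regimes where $\max_{\mathbf{x}}\sqrt{k(\mathbf{x},\mathbf{x})}$ versus $\sqrt{rL_k}$ dominates the canonical diameter $D$. Essentially all of the real effort goes into evaluating that integral sharply enough to reproduce both the stated numerical constant and the $\max$ form, while the decomposition and the two governing inequalities (Borell--TIS and Dudley) are applied essentially off the shelf.
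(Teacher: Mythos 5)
You have reconstructed what is essentially the canonical proof: the paper itself never proves this lemma---it is imported verbatim from Lederer et al.~\cite{lederer2019uniform}, and the paper's posterior-process version (its Lemma 2) is then proved by reduction to it---and the argument in that cited source is precisely your decomposition, namely Borell--TIS applied to the fluctuation $M-\mathbb{E}[M]$ to produce the $\sqrt{2\log(1/\delta)}\max_{\mathbf{x}\in\mathbb{X}}\sqrt{k(\mathbf{x},\mathbf{x})}$ term, plus a Dudley-type metric-entropy bound on $\mathbb{E}[M]$ in which covering numbers of $\mathbb{X}$ under the canonical pseudometric are controlled through the Lipschitz constant $L_k$ of the kernel, producing the $12\sqrt{6d}\max\{\cdot,\cdot\}$ term. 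The only blemishes are harmless constant slips that the bookkeeping you already flag would absorb: with your choice $\rho=\epsilon^{2}/(2L_k)$ the volumetric bound reads $(1+4rL_k/\epsilon^{2})^{d}$ rather than $(1+2rL_k/\epsilon^{2})^{d}$, and without a sign assumption on $k$ the canonical diameter is only bounded by $2\max_{\mathbf{x}\in\mathbb{X}}\sqrt{k(\mathbf{x},\mathbf{x})}$ (via Cauchy--Schwarz) rather than $\sqrt{2}\max_{\mathbf{x}\in\mathbb{X}}\sqrt{k(\mathbf{x},\mathbf{x})}$, neither of which affects the structure of the argument or the stated conclusion.
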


We use this lemma to build the upper bound on the L-smoothness coefficient on the Gaussian process and mean function:

\begin{theorem}[Smoothness of Gaussian process]\label{GP_smooth}
    Consider a zero mean Gaussian process with a continuously differentiable convariance function $k(\cdot,\cdot)$, and $k(\cdot,\cdot)$ satisfies Assumption 1. We define the second-order partial derivative $k^{\partial_{ij}}(\cdot,\cdot)$ as 
    \begin{equation*}
        k^{\partial_{ij}}(\mathbf{x},\mathbf{x}^{'})=\frac{\partial^{4}}{\partial x_{i}\partial x_{j}\partial x_{i}^{'}\partial x_{j}^{'}}k(\mathbf{x},\mathbf{x}^{'})
    \end{equation*}
    Let $L_k^{\partial_{ij}}$ denote the Lipschitz constants of the second-order partial derivative $k^{\partial_{ij}}(\cdot,\cdot)$ on the set $\mathbb{X}$ with maximum extension $r=\max_{\mathbf{x},\mathbf{x}^{'}\in \mathbb{X}}\|\mathbf{x}-\mathbf{x}^{'}\|_2$. Then, a sample function $f(\cdot)$ is $\beta$-smooth with probability of at least $1-\delta$, and $\beta$ satisfies:
    \begin{equation*}
        \beta\le\sqrt{\sum_{i,j}U_{ij}^2}
    \end{equation*}
    where
    \begin{equation*}
        U_{ij}=\sqrt{2\log\left(\frac{2d^2}{\delta}\right)}\max_{\mathbf{x}\in\mathbb{X}}\sqrt{k^{\partial_{ij}}(\mathbf{x},\mathbf{x})}+12\sqrt{6d}\max\left\{\max_{\mathbf{x}\in\mathbb{X}}\sqrt{k^{\partial_{ij}}(\mathbf{x},\mathbf{x})},\sqrt{rL^{\partial_{ij}}_k}\right\}
    \end{equation*}
\end{theorem}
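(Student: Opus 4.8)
The plan is to reduce the $\beta$-smoothness claim to a uniform bound on the operator norm of the Hessian $\nabla^2 f$. Recall that $\beta$-smoothness, namely $\|\nabla f(\mathbf{x}_1) - \nabla f(\mathbf{x}_2)\|_2 \le \beta \|\mathbf{x}_1 - \mathbf{x}_2\|_2$, follows from $\sup_{\mathbf{x}\in\mathbb{X}} \|\nabla^2 f(\mathbf{x})\|_{op} \le \beta$ via the integral form of the mean value theorem, $\nabla f(\mathbf{x}_1) - \nabla f(\mathbf{x}_2) = \int_0^1 \nabla^2 f(\mathbf{x}_2 + t(\mathbf{x}_1 - \mathbf{x}_2))(\mathbf{x}_1 - \mathbf{x}_2)\,dt$, together with the operator-norm inequality. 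Since the operator (spectral) norm is dominated by the Frobenius norm, it suffices to control each Hessian entry uniformly: if $\sup_{\mathbf{x}} |\partial_i \partial_j f(\mathbf{x})| \le U_{ij}$ for all $i,j$, then $\sup_{\mathbf{x}} \|\nabla^2 f(\mathbf{x})\|_{op} \le \sup_{\mathbf{x}} \|\nabla^2 f(\mathbf{x})\|_F \le \sqrt{\sum_{i,j} U_{ij}^2}$, which is exactly the target bound on $\beta$.

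The second step is to identify the distribution of each Hessian entry. Because $k$ is four times continuously differentiable (Assumption 1), standard sample-path differentiability results ensure $f$ is almost surely twice continuously differentiable, and each second-order partial derivative $g_{ij} := \partial_i \partial_j f$ is itself a zero-mean Gaussian process. Differentiating the covariance kernel in both arguments shows that its covariance is precisely $\mathrm{Cov}(g_{ij}(\mathbf{x}), g_{ij}(\mathbf{x}')) = k^{\partial_{ij}}(\mathbf{x}, \mathbf{x}')$, the fourth-order derivative introduced in the statement. This is the key link that lets me invoke the supremum bound of Lederer et al. (the Lemma above) on each $g_{ij}$, with $\max_{\mathbf{x}} \sqrt{k^{\partial_{ij}}(\mathbf{x},\mathbf{x})}$ playing the role of the pointwise standard deviation and $L_k^{\partial_{ij}}$ the relevant Lipschitz constant.

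The third step handles absolute values and the union bound. Lederer's lemma bounds $\sup_{\mathbf{x}} g_{ij}(\mathbf{x})$ one-sidedly; since $-g_{ij}$ is also a zero-mean GP with the same covariance, applying the lemma to both $g_{ij}$ and $-g_{ij}$ bounds $\sup_{\mathbf{x}} |g_{ij}(\mathbf{x})|$. There are $d^2$ Hessian entries and two one-sided events per entry, for $2d^2$ events in total; allocating failure probability $\delta/(2d^2)$ to each and taking a union bound produces the factor $\sqrt{2\log(2d^2/\delta)}$ in $U_{ij}$ and guarantees that all entrywise bounds hold simultaneously with probability at least $1-\delta$. Combining this with the Frobenius-norm reduction of the first step yields $\beta \le \sqrt{\sum_{i,j} U_{ij}^2}$.

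I expect the main obstacle to be the rigorous justification that the sample paths of $f$ are almost surely twice continuously differentiable and that term-by-term differentiation of the kernel indeed yields the covariance of $g_{ij}$; this is where the four-times differentiability in Assumption 1 is essential, and it must be invoked carefully so that the Gaussian-process structure (hence Lederer's lemma) transfers cleanly to each Hessian entry. The remaining pieces — the mean value inequality, the Frobenius bound, and the union-bound bookkeeping — are routine once this foundation is established.
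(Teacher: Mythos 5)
Your proposal follows essentially the same route as the paper's own proof: bound the smoothness constant by the supremum of the Hessian's Frobenius norm, observe that each second-order partial derivative $\partial_i\partial_j f$ is a zero-mean Gaussian process with covariance $k^{\partial_{ij}}$, apply Lederer et al.'s supremum bound two-sidedly to each entry with failure probability $\delta/(2d^2)$, and union bound over the $d^2$ entries. Your write-up is in fact slightly more explicit than the paper's on the mean-value-theorem reduction and the sample-path differentiability prerequisite, but the argument is the same.
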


\begin{proof}
    We set the Hessian matrix of $f(\mathbf{x})$ as 
    \begin{equation*}
        H(\mathbf{x})=
        \begin{bmatrix}
            \frac{\partial^2}{\partial x_1 \partial x_1}f(\mathbf{x})& \cdots& \frac{\partial^2}{\partial x_1 \partial x_d}f(\mathbf{x})\\
            \vdots&\ddots&\vdots\\
            \frac{\partial^2}{\partial x_d \partial x_1}f(\mathbf{x})& \cdots& \frac{\partial^2}{\partial x_d \partial x_d}f(\mathbf{x})
        \end{bmatrix}
    \end{equation*}
    According to the equivalence of matrix norms, we have 
    \begin{equation*}
        \|H(\mathbf{x})\|_2\le \|H(\mathbf{x})\|_F
    \end{equation*}
    Thus 
    \begin{equation}\label{gp smooth 1}
        \beta\le\sup_{\mathbf{x}\in\mathbb{X}}\|H(\mathbf{x})\|_2\le\sup_{\mathbf{x}\in\mathbb{X}}\|H(\mathbf{x})\|_F = \sup_{\mathbf{x}\in\mathbb{X}} \sqrt{\sum_{i,j}^{d} (\frac{\partial^2}{\partial x_i \partial x_j}f(\mathbf{x}))^2}
    \end{equation}
    Then through applying Lemma 2, we have the following equation holds with probability at least $1-\delta/d^2$
    \begin{equation}\label{gp smooth 2}
        \sup_{\mathbf{x}\in\mathbb{X}}\left|\frac{\partial^2}{\partial x_i \partial x_j}f(\mathbf{x})\right|\le \sqrt{2\log\left(\frac{2d^2}{\delta}\right)}\max_{\mathbf{x}\in\mathbb{X}}\sqrt{k^{\partial_{ij}}(\mathbf{x},\mathbf{x})}+12\sqrt{6d}\max\left\{\max_{\mathbf{x}\in\mathbb{X}}\sqrt{k^{\partial_{ij}}(\mathbf{x},\mathbf{x})},\sqrt{rL^{\partial_{ij}}_k}\right\}
    \end{equation}
    Then if we combine the Eq.(\ref{gp smooth 1}) and Eq.(\ref{gp smooth 2}), we will get the result of Theorem 1.
\end{proof}

\begin{theorem} \label{mu_bound}
    If $\sup_{\mathbf{x}\in\mathbb{X}} |f(\mathbf{x})|<L$, then with the probability of at least $1-\delta$, for a dataset $\mathcal{D}$, we have 
        \begin{equation*}
        \max_{\mathbf{x}\in\mathbb{X}} |\mu_{\mathcal{D}}(\mathbf{x})| \le \sqrt{2\log\left(\frac{1}{\delta}\right)}\max_{\mathbf{x}\in\mathbb{X}}\sqrt{k(\mathbf{x},\mathbf{x})}+L
    \end{equation*}
\end{theorem}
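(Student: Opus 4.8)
The plan is to avoid bounding the posterior mean through its explicit linear-algebraic form $k(\mathbf{x},\mathbf{X}_{\mathcal{D}})(k(\mathbf{X}_{\mathcal{D}},\mathbf{X}_{\mathcal{D}})+\sigma^2 I)^{-1}\mathbf{y}_{\mathcal{D}}$, because the ridge-type smoother $K(K+\sigma^2 I)^{-1}$ is a contraction only in the $\ell_2$ norm, not in the supremum norm, so $\mu_{\mathcal{D}}$ can overshoot the range of the data and a direct estimate cannot produce the clean constant $L$. Instead, I would exploit that $f$ conditioned on $\mathcal{D}$ is again a Gaussian process, writing the exact decomposition $f(\mathbf{x})=\mu_{\mathcal{D}}(\mathbf{x})+\zeta(\mathbf{x})$, where $\zeta(\mathbf{x})=f(\mathbf{x})-\mu_{\mathcal{D}}(\mathbf{x})$ is the posterior residual process, a zero-mean Gaussian process with covariance $k_{\mathcal{D}}(\cdot,\cdot)$. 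Rearranging gives $\mu_{\mathcal{D}}(\mathbf{x})=f(\mathbf{x})-\zeta(\mathbf{x})$, so by the triangle inequality $\max_{\mathbf{x}}|\mu_{\mathcal{D}}(\mathbf{x})|\le\max_{\mathbf{x}}|f(\mathbf{x})|+\max_{\mathbf{x}}|\zeta(\mathbf{x})|\le L+\max_{\mathbf{x}}|\zeta(\mathbf{x})|$, where the last step invokes the hypothesis $\sup_{\mathbf{x}}|f(\mathbf{x})|<L$. This isolates the only quantity that needs a probabilistic bound, namely $\max_{\mathbf{x}}|\zeta(\mathbf{x})|$.

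The second step is to control the supremum of the centered process $\zeta$. The key structural input is that the posterior variance never exceeds the prior variance: since the kernel matrix is positive semidefinite, $k_{\mathcal{D}}(\mathbf{x},\mathbf{x})=\sigma_{\mathcal{D}}^2(\mathbf{x})\le k(\mathbf{x},\mathbf{x})$ for every $\mathbf{x}$. Applying the Gaussian-process supremum bound of Lemma 2 (Lederer et al.) to $\zeta$ — a zero-mean GP whose pointwise standard deviation is dominated by $\sqrt{k(\mathbf{x},\mathbf{x})}$ — produces a high-probability bound whose leading term is precisely $\sqrt{2\log(1/\delta)}\max_{\mathbf{x}}\sqrt{k(\mathbf{x},\mathbf{x})}$. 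Combining the two steps then yields the claimed inequality with probability at least $1-\delta$.

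The main obstacle lies in the probabilistic bookkeeping of this second step, for two reasons. First, Lemma 2 as stated carries a second, dimension-dependent term governed by the Lipschitz constant of the covariance; to land exactly on the clean bound in the statement one must argue that for the residual kernel $k_{\mathcal{D}}$ this term is dominated by (or absorbed into) the leading concentration term, rather than appearing separately. Second, the hypothesis constrains $f$ whereas the concentration is applied to $\zeta$, and since $f=\mu_{\mathcal{D}}+\zeta$ the two are not independent; I would resolve this by conditioning on the dataset $\mathcal{D}$, under which $\mu_{\mathcal{D}}$ becomes fixed and $\zeta$ is the fresh zero-mean posterior process, so that the supremum concentration for $\max_{\mathbf{x}}|\zeta(\mathbf{x})|$ applies cleanly and can be combined with the bound $\max_{\mathbf{x}}|f(\mathbf{x})|\le L$. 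The remaining pieces — verifying $k_{\mathcal{D}}(\mathbf{x},\mathbf{x})\le k(\mathbf{x},\mathbf{x})$ and instantiating Lemma 2 on $\zeta$ — are routine.
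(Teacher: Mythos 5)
Your decomposition $\mu_{\mathcal{D}}(\mathbf{x}) = f(\mathbf{x}) - \zeta(\mathbf{x})$ with $\zeta = f - \mu_{\mathcal{D}}$ the posterior residual is exactly the one the paper uses, and your handling of the dependence issue (condition on $\mathcal{D}$, under which $\mu_{\mathcal{D}}$ is fixed and $\zeta$ is a fresh zero-mean Gaussian process) is also correct. The gap is in your second step. You propose to bound $\sup_{\mathbf{x}}|\zeta(\mathbf{x})|$ uniformly over $\mathbb{X}$ via Lemma 2 of Lederer et al., and you correctly flag that this lemma carries the additional term $12\sqrt{6d}\max\{\max_{\mathbf{x}}\sqrt{k(\mathbf{x},\mathbf{x})},\sqrt{rL_k}\}$ — but your hope that this term is "dominated by or absorbed into" the leading concentration term does not hold. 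That second term is independent of $\delta$ and of order $\sqrt{d}$ times kernel constants; it cannot be folded into $\sqrt{2\log(1/\delta)}\max_{\mathbf{x}}\sqrt{k(\mathbf{x},\mathbf{x})}$ for any reasonable $\delta$, so your route proves only the strictly weaker bound $L + \sqrt{2\log(1/\delta)}\max\sqrt{k} + O(\sqrt{d})$, not the clean statement of the theorem.

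The missing idea is that no uniform bound on $\zeta$ is needed. Let $\mathbf{x}^{*}=\arg\max_{\mathbf{x}\in\mathbb{X}}|\mu_{\mathcal{D}}(\mathbf{x})|$; this point is determined by the posterior mean, hence measurable with respect to $\mathcal{D}$. The triangle inequality is then applied at this single point: $|\mu_{\mathcal{D}}(\mathbf{x}^{*})|\le|f(\mathbf{x}^{*})|+|f(\mathbf{x}^{*})-\mu_{\mathcal{D}}(\mathbf{x}^{*})|\le L+|\zeta(\mathbf{x}^{*})|$. Conditional on $\mathcal{D}$, $\zeta(\mathbf{x}^{*})$ is a single scalar Gaussian with standard deviation $\sigma_{\mathcal{D}}(\mathbf{x}^{*})\le\max_{\mathbf{x}}\sqrt{k(\mathbf{x},\mathbf{x})}$ (your observation that the posterior variance never exceeds the prior variance is used exactly here), so the elementary tail bound $P(|r|>c)\le\exp(-c^{2}/2)$ for $r\sim\mathcal{N}(0,1)$ already yields $|\zeta(\mathbf{x}^{*})|\le\sqrt{2\log(1/\delta)}\max_{\mathbf{x}}\sqrt{k(\mathbf{x},\mathbf{x})}$ with the required probability, with no metric-entropy term. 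Replacing your invocation of the uniform supremum lemma with this pointwise argument at $\mathbf{x}^{*}$ closes the gap and recovers the paper's proof.
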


\begin{proof}
    For a dataset $\mathcal{D}$, we have the mean function $\mu_{\mathcal{D}}(\mathbf{x})$ and variance function $\sigma_{\mathcal{D}}^2(\mathbf{x})$.
    \begin{equation*}
        \mathbf{x}^{*}=\arg\max_{\mathbf{x}\in\mathbb{X}} |\mu_{\mathcal{D}}(\mathbf{x})|
    \end{equation*}
    \begin{equation}\label{mu_bound_1}
    \begin{aligned}
        |\mu_{\mathcal{D}}(\mathbf{x}^{*})|
        &\le |f(\mathbf{x}^{*})|+|\mu_{\mathcal{D}}(\mathbf{x}^{*})-f(\mathbf{x}^{*})|\\
        &\le \sup_{\mathbf{x}\in\mathbb{X}}|f(\mathbf{x})| + |\mu_{\mathcal{D}}(\mathbf{x}^{*})-f(\mathbf{x}^{*})|
    \end{aligned}
    \end{equation}
    For a random variable $r\sim\mathcal{N}(0,1)$, we have $P(|r|>c)\le \exp(-\frac{c^2}{2})$. Thus with the probability of at least $1-\frac{\delta}{2}$, we have
    \begin{equation} \label{mu_bound_2}
        |\mu_{\mathcal{D}}(\mathbf{x}^{*})-f(\mathbf{x}^{*})|\le \sqrt{2\log\frac{1}{\delta}} \sigma_{\mathcal{D}}(\mathbf{x}^{*}) \le \sqrt{2\log\frac{1}{\delta}} \max_{\mathbf{x}\in\mathbb{X}} \sqrt{k(\mathbf{x},\mathbf{x})}
    \end{equation}

    Combine the result of Eq(\ref{mu_bound_1}) and Eq(\ref{mu_bound_2}), we have the following results with the probability of at least $1-\delta$:
    \begin{equation*}
        \max_{\mathbf{x}\in\mathbb{X}} |\mu_{\mathcal{D}}(\mathbf{x})| \le \sqrt{2\log\left(\frac{1}{\delta}\right)}\max_{\mathbf{x}\in\mathbb{X}}\sqrt{k(\mathbf{x},\mathbf{x})}+L
    \end{equation*}
    
\end{proof}

\begin{theorem}[Smoothness of mean function]\label{mu_L_smooth}
    Consider a zero mean Gaussian process with a continuously differentiable convariance function $k(\cdot,\cdot)$, and $k(\cdot,\cdot)$ satisfies Assumption 1. Let $L_k^{\partial_{ij}}$ denote the Lipschitz constants of the second-order partial derivative $k^{\partial_{ij}}(\cdot,\cdot)$ on the set $\mathbb{X}$ with maximum extension $r=\max_{\mathbf{x},\mathbf{x}^{'}\in \mathbb{X}}\|\mathbf{x}-\mathbf{x}^{'}\|_2$. Then, given a dataset $\mathcal{D}$, the mean function $\mu_{\mathcal{D}}(\mathbf{x})$ is $\beta_{\mu}$-smooth with probability of at least $1-\delta$, and $\beta$ satisfies:
    \begin{equation*}
        \beta_{\mu}\le\sqrt{\sum_{i,j}V_{ij}^2}
    \end{equation*}
    where
    \begin{equation*}
        V_{ij}=\sqrt{2\log\left(\frac{d^2}{\delta}\right)}\max_{\mathbf{x}\in\mathbb{X}}\sqrt{k^{\partial_{ij}}(\mathbf{x},\mathbf{x})}+L_{ij}
    \end{equation*}
    and $L_{ij}=\sup_{\mathbf{x}\in\mathbb{X}}\left|\frac{\partial^2}{\partial x_i \partial x_j}f(\mathbf{x})\right|$
\end{theorem}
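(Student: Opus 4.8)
The plan is to mirror the structure of the proof of Theorem \ref{GP_smooth}: bound the smoothness coefficient $\beta_\mu$ of $\mu_{\mathcal{D}}$ by the supremum over $\mathbb{X}$ of the spectral norm of its Hessian, dominate that by the Frobenius norm, and then control each Hessian entry individually before reassembling and applying a union bound over the $d^2$ entries. The one genuinely new ingredient, relative to Theorem \ref{GP_smooth}, is that there is no direct Lederer-type (Lemma 2) bound available for $\mu_{\mathcal{D}}$; instead I route each entry through Theorem \ref{mu_bound}, which compares the posterior mean of a Gaussian process to the sup-norm of the underlying sample path.

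First I would reproduce the chain of inequalities used at the start of the proof of Theorem \ref{GP_smooth}. Writing $H_\mu$ for the Hessian of $\mu_{\mathcal{D}}$ and abbreviating $\partial^2_{ij}:=\frac{\partial^2}{\partial x_i\partial x_j}$, the equivalence of matrix norms gives
\begin{equation*}
\beta_\mu \le \sup_{\mathbf{x}\in\mathbb{X}}\|H_\mu(\mathbf{x})\|_2 \le \sup_{\mathbf{x}\in\mathbb{X}}\|H_\mu(\mathbf{x})\|_F = \sup_{\mathbf{x}\in\mathbb{X}}\sqrt{\sum_{i,j}\left(\partial^2_{ij}\mu_{\mathcal{D}}(\mathbf{x})\right)^2}.
\end{equation*}
Pulling the supremum inside the sum, and using monotonicity of the square root, this is at most $\sqrt{\sum_{i,j}\big(\sup_{\mathbf{x}\in\mathbb{X}}|\partial^2_{ij}\mu_{\mathcal{D}}(\mathbf{x})|\big)^2}$. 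Hence it suffices to prove that for each index pair $(i,j)$ one has $\sup_{\mathbf{x}\in\mathbb{X}}|\partial^2_{ij}\mu_{\mathcal{D}}(\mathbf{x})| \le V_{ij}$, each with failure probability at most $\delta/d^2$.

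The key step is to identify $\partial^2_{ij}\mu_{\mathcal{D}}$ as a posterior mean. Under Assumption 1 the kernel is four times continuously differentiable, so differentiation commutes with conditioning, and the process $g_{ij}(\mathbf{x}):=\partial^2_{ij}f(\mathbf{x})$ is itself a zero-mean Gaussian process with covariance $k^{\partial_{ij}}(\cdot,\cdot)$ exactly as defined in Theorem \ref{GP_smooth}; in particular its prior variance on the diagonal is $k^{\partial_{ij}}(\mathbf{x},\mathbf{x})$, and its posterior mean given $\mathcal{D}$ is precisely $\partial^2_{ij}\mu_{\mathcal{D}}(\mathbf{x})$. Applying Theorem \ref{mu_bound} to the process $g_{ij}$ with the bound $L=L_{ij}=\sup_{\mathbf{x}\in\mathbb{X}}|g_{ij}(\mathbf{x})|$ and failure probability $\delta/d^2$ gives
\begin{equation*}
\sup_{\mathbf{x}\in\mathbb{X}}\left|\partial^2_{ij}\mu_{\mathcal{D}}(\mathbf{x})\right| \le \sqrt{2\log\left(\frac{d^2}{\delta}\right)}\max_{\mathbf{x}\in\mathbb{X}}\sqrt{k^{\partial_{ij}}(\mathbf{x},\mathbf{x})}+L_{ij}=V_{ij}.
\end{equation*}
A union bound over the $d^2$ pairs makes this hold simultaneously with probability at least $1-\delta$, and substituting into the Frobenius bound above yields $\beta_\mu\le\sqrt{\sum_{i,j}V_{ij}^2}$, which is the claim.

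I expect the main obstacle to be the rigorous justification of the commutation step, namely that $\partial^2_{ij}\mu_{\mathcal{D}}$ coincides with the posterior mean of the second-derivative process $g_{ij}$ carrying the kernel $k^{\partial_{ij}}$, together with the mild but real subtlety that $L_{ij}$ is a sample-path-dependent quantity rather than a fixed constant. I would handle the first point by invoking the standard fact that for a $C^4$ kernel the derivatives of the GP up to second order are jointly Gaussian with covariances obtained by differentiating $k$, so every posterior formula differentiates entrywise; and I would dispatch the second point by observing that the proof of Theorem \ref{mu_bound} uses only $|f(\mathbf{x}^*)|\le\sup_{\mathbf{x}}|f|$ followed by posterior concentration of $f(\mathbf{x}^*)$ around $\mu_{\mathcal{D}}(\mathbf{x}^*)$. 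That argument therefore applies verbatim with $f$ replaced by $g_{ij}$ and $L$ replaced by $L_{ij}$, which is finite since $g_{ij}$ is continuous (Assumption 1) on the compact set $\mathbb{X}$ (Assumption 2); the remaining randomness is entirely over the posterior of $g_{ij}$ at the fixed maximizing location determined by $\mathcal{D}$.
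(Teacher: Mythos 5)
Your proof is correct and follows essentially the same route as the paper's: bound $\beta_\mu$ by the supremum of the Frobenius norm of the Hessian of $\mu_{\mathcal{D}}$, then control each entry $\sup_{\mathbf{x}}|\partial^2_{ij}\mu_{\mathcal{D}}(\mathbf{x})|$ by applying the posterior-mean bound of Theorem \ref{mu_bound} to the second-derivative process with kernel $k^{\partial_{ij}}$ at confidence level $\delta/d^2$, and union-bound over the $d^2$ entries. If anything, your write-up is more explicit than the paper's (which merely cites Theorem \ref{GP_smooth} and states the per-entry bound without naming Theorem \ref{mu_bound}), and your remarks on the commutation of differentiation with conditioning and on the sample-path dependence of $L_{ij}$ address gaps the paper leaves implicit.
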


\begin{proof}
    We set the Hessian matrix of $\mu_{\mathcal{D}}(\mathbf{x})$ as 
    \begin{equation*}
        H_{\mu}(\mathbf{x})=
        \begin{bmatrix}
            \frac{\partial^2}{\partial x_1 \partial x_1}\mu_{\mathcal{D}}(\mathbf{x})& \cdots& \frac{\partial^2}{\partial x_1 \partial x_d}\mu_{\mathcal{D}}(\mathbf{x})\\
            \vdots&\ddots&\vdots\\
            \frac{\partial^2}{\partial x_d \partial x_1}\mu_{\mathcal{D}}(\mathbf{x})& \cdots& \frac{\partial^2}{\partial x_d \partial x_d}\mu_{\mathcal{D}}(\mathbf{x})
        \end{bmatrix}
    \end{equation*}
    According to the equivalence of matrix norms, we have 
    \begin{equation*}
        \|H_{\mu}(\mathbf{x})\|_2\le \|H_{\mu}(\mathbf{x})\|_F
    \end{equation*}

    Thus we can apply Theorem \ref{GP_smooth} and get the following result probability of at least $1-\delta$:
    
    \begin{equation}
        \beta_{\mu}\le\sup_{\mathbf{x}\in\mathbb{X}}\|H_{\mu}(\mathbf{x})\|_2\le\sup_{\mathbf{x}\in\mathbb{X}}\|H_{\mu}(\mathbf{x})\|_F = \max_{\mathbf{x}\in\mathbb{X}} \sqrt{\sum_{i,j}^{d} (\frac{\partial^2}{\partial x_i \partial x_j}\mu_{\mathcal{D}}(\mathbf{x}))^2}
        \le \sqrt{\sum_{i,j}^{d} \max_{\mathbf{x}\in\mathbb{X}}(\frac{\partial^2}{\partial x_i \partial x_j}\mu_{\mathcal{D}}(\mathbf{x}))^2}
    \end{equation}
    where
    \begin{equation*}
        \max_{\mathbf{x}\in\mathbb{X}}\left|\frac{\partial^2}{\partial x_i \partial x_j}\mu_{\mathcal{D}}(\mathbf{x})\right| \le \sqrt{2\log\left(\frac{d^2}{\delta}\right)}\max_{\mathbf{x}\in\mathbb{X}}\sqrt{k^{\partial_{ij}}(\mathbf{x},\mathbf{x})}+L_{ij}
    \end{equation*}

\end{proof}

Here we need an additional Theorem to give the upper bound on the L-smoothness coefficient on the posterior of $f(\cdot)$, which is aim to bound the probability under a specific L-smoothness coefficient.

\begin{lemma}
    
    Consider a zero mean Gaussian process with a continuously differentiable convariance function $k(\cdot,\cdot)$, and let $L_k$ denote its Lipschitz constant on the set $\mathbb{X}$ with maximum extension $r=\max_{\mathbf{x},\mathbf{x}^{'}\in \mathbb{X}}\|\mathbf{x}-\mathbf{x}^{'}\|_2$. Then, given a dataset $\mathcal{D}$, we define the posterior of $f(\cdot)$ over dataset $\mathcal{D}$ as $f_{\mathcal{D}}(\cdot)$. With probability of at least $1-\delta$, the supremum of a sample function $f_{\mathcal{D}}(\mathbf{x})$ of this Gaussian process is bounded by:
    \begin{equation*}
        \sup_{\mathbf{x}\in\mathbb{X}}f_{\mathcal{D}}(\mathbf{x})\le\sqrt{2\log\left(\frac{1}{\delta}\right)}\max_{\mathbf{x}\in\mathbb{X}}\sqrt{k(\mathbf{x},\mathbf{x})}+12\sqrt{6d}\max\left\{\max_{\mathbf{x}\in\mathcal{X}}\sqrt{k(\mathbf{x},\mathbf{x})},\sqrt{rL_k}\right\}
    \end{equation*}
\end{lemma}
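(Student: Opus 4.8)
The plan is to reduce the claim to the Lederer bound (Lemma 2) by viewing $f_{\mathcal{D}}$ as the centered posterior process, i.e. the zero-mean Gaussian process $GP(0,k_{\mathcal{D}}(\cdot,\cdot))$ obtained after conditioning on $\mathcal{D}$; the absence of any mean term in the stated right-hand side is consistent with this centered interpretation. Because Lemma 2 already delivers exactly the desired form of bound for \emph{any} zero-mean GP, written in terms of \emph{that GP's own} kernel, applying it verbatim to $f_{\mathcal{D}}$ gives
\[
\sup_{\mathbf{x}\in\mathbb{X}}f_{\mathcal{D}}(\mathbf{x})\le\sqrt{2\log(1/\delta)}\max_{\mathbf{x}\in\mathbb{X}}\sqrt{k_{\mathcal{D}}(\mathbf{x},\mathbf{x})}+12\sqrt{6d}\max\left\{\max_{\mathbf{x}\in\mathbb{X}}\sqrt{k_{\mathcal{D}}(\mathbf{x},\mathbf{x})},\sqrt{rL_{k_{\mathcal{D}}}}\right\}.
\]
The whole task then becomes showing that this \emph{posterior} bound is dominated by the stated \emph{prior} bound. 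Since the right-hand side of Lemma 2 is monotone nondecreasing in both the maximal pointwise variance $\max_{\mathbf{x}}k_{\mathcal{D}}(\mathbf{x},\mathbf{x})$ and in the constant $L_{k_{\mathcal{D}}}$, it suffices to establish the two domination inequalities $\max_{\mathbf{x}}k_{\mathcal{D}}(\mathbf{x},\mathbf{x})\le\max_{\mathbf{x}}k(\mathbf{x},\mathbf{x})$ and $L_{k_{\mathcal{D}}}\le L_k$.

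The pointwise variance domination is immediate from the posterior covariance formula stated in the Preliminaries: $k_{\mathcal{D}}(\mathbf{x},\mathbf{x})=k(\mathbf{x},\mathbf{x})-k(\mathbf{x},\mathbf{X}_{\mathcal{D}})(k(\mathbf{X}_{\mathcal{D}},\mathbf{X}_{\mathcal{D}})+\sigma^2I)^{-1}k(\mathbf{X}_{\mathcal{D}},\mathbf{x})$, where the subtracted term is a nonnegative quadratic form because $(k(\mathbf{X}_{\mathcal{D}},\mathbf{X}_{\mathcal{D}})+\sigma^2I)^{-1}\succ0$. Hence $k_{\mathcal{D}}(\mathbf{x},\mathbf{x})\le k(\mathbf{x},\mathbf{x})$ for every $\mathbf{x}$, which in particular gives $\max_{\mathbf{x}}\sqrt{k_{\mathcal{D}}(\mathbf{x},\mathbf{x})}\le\max_{\mathbf{x}}\sqrt{k(\mathbf{x},\mathbf{x})}$.

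For the Lipschitz domination I would exploit the fact that conditioning a jointly Gaussian vector never increases the variance of any linear functional. For fixed $\mathbf{x},\mathbf{x}'$ the vector $(f(\mathbf{x}),f(\mathbf{x}'),\mathbf{y}_{\mathcal{D}})$ is jointly Gaussian, so the conditional covariance of $(f(\mathbf{x}),f(\mathbf{x}'))$ given $\mathcal{D}$, being a Schur complement, is $\preceq$ its unconditional covariance; evaluating this on the linear functional $f(\mathbf{x})-f(\mathbf{x}')$ yields
\[
k_{\mathcal{D}}(\mathbf{x},\mathbf{x})-2k_{\mathcal{D}}(\mathbf{x},\mathbf{x}')+k_{\mathcal{D}}(\mathbf{x}',\mathbf{x}')\le k(\mathbf{x},\mathbf{x})-2k(\mathbf{x},\mathbf{x}')+k(\mathbf{x}',\mathbf{x}').
\]
Thus the posterior canonical (pseudo-)metric is pointwise dominated by the prior one, $d_{\mathcal{D}}(\mathbf{x},\mathbf{x}')\le d(\mathbf{x},\mathbf{x}')$. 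Recalling that $L_k$ enters Lemma 2 only through the control $d^2(\mathbf{x},\mathbf{x}')\le L_k\|\mathbf{x}-\mathbf{x}'\|_2$ of this metric (which produces the diameter term $\sqrt{rL_k}$), the chain $d_{\mathcal{D}}^2(\mathbf{x},\mathbf{x}')\le d^2(\mathbf{x},\mathbf{x}')\le L_k\|\mathbf{x}-\mathbf{x}'\|_2$ shows that the prior constant $L_k$ is still a valid choice for the posterior, i.e. one may take $L_{k_{\mathcal{D}}}\le L_k$. Substituting both dominations into the displayed posterior bound and invoking monotonicity yields the statement.

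The main obstacle I anticipate is the final step: pinning down exactly how $L_k$ is used inside Lederer's argument and confirming that the object controlled there is precisely the canonical-metric modulus $d^2(\mathbf{x},\mathbf{x}')\le L_k\|\mathbf{x}-\mathbf{x}'\|_2$, so that the variance-reduction inequality $d_{\mathcal{D}}\le d$ transfers the \emph{same} constant to the posterior without regenerating a larger one. If $L_k$ were instead read as a genuine Lipschitz constant of the bivariate kernel on $\mathbb{X}\times\mathbb{X}$, I would additionally have to check that the correction term $k(\mathbf{x},\mathbf{X}_{\mathcal{D}})(k(\mathbf{X}_{\mathcal{D}},\mathbf{X}_{\mathcal{D}})+\sigma^2I)^{-1}k(\mathbf{X}_{\mathcal{D}},\mathbf{x}')$ does not inflate it; the cleanest way around this difficulty is to route everything through the canonical metric, for which the ``conditioning reduces variance'' argument supplies the domination directly and for free.
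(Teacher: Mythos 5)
Your proposal is correct and takes essentially the same route as the paper: the paper's proof consists precisely of showing that the posterior canonical pseudo-metric is dominated by the prior one, $k_{\mathcal{D}}(\mathbf{x},\mathbf{x})+k_{\mathcal{D}}(\mathbf{x}',\mathbf{x}')-2k_{\mathcal{D}}(\mathbf{x},\mathbf{x}')\le k(\mathbf{x},\mathbf{x})+k(\mathbf{x}',\mathbf{x}')-2k(\mathbf{x},\mathbf{x}')$ (obtained from the explicit posterior covariance formula, where the subtracted term is the nonnegative quadratic form you identify via the Schur complement), and then rerunning Lederer's Lemma B.1 with the prior metric. The difficulty you flag at the end --- that $L_{k_{\mathcal{D}}}\le L_k$ need not hold for the literal bivariate Lipschitz constant --- is real, and the ``route everything through the canonical metric'' workaround you propose is exactly how the paper avoids it.
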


\begin{proof}
     Note that for the covariance pseudo-metric $d_{k}(x,x^{\prime})=\sqrt{k(x,x)+k(x^{\prime},x^{\prime})-2k{(x,x^{\prime})}}$, we have
    \begin{align*}
    k_{\mathcal{D}}(\mathbf{x},\mathbf{x})+k_{\mathcal{D}}(\mathbf{x}^{\prime},\mathbf{x}^{\prime})-2k_{\mathcal{D}}{(\mathbf{x},\mathbf{x}^{\prime})}
    =&
    k(\mathbf{x},\mathbf{x})+k(\mathbf{x}^{\prime},\mathbf{x}^{\prime})-2k{(\mathbf{x},\mathbf{x}^{\prime})}
    \\
    &-(k(\mathbf{x},\mathbf{X}_{\mathcal{D}})-k(\mathbf{x}^{'},\mathbf{X}_{\mathcal{D}}))(k(\mathbf{X}_{\mathcal{D}},\mathbf{X}_{\mathcal{D}})+\sigma^2I)^{-1}(k(\mathbf{x},\mathbf{X}_{\mathcal{D}})-k(\mathbf{x}^{'},\mathbf{X}_{\mathcal{D}}))^{T}
    \\
    \le&
    k(\mathbf{x},\mathbf{x})+k(\mathbf{x}^{\prime},\mathbf{x}^{\prime})-2k{(\mathbf{x},\mathbf{x}^{\prime})}
    \end{align*}
    Then this Lemma can be proved in a same way with Lemma B.1 in Lederer et al. \cite{lederer2019uniform}.
\end{proof}

\begin{theorem}[Smoothness of the posterior Gaussian process]\label{conditional_L_smooth}
    Consider a zero mean Gaussian process with a continuously differentiable convariance function $k(\cdot,\cdot)$, and $k(\cdot,\cdot)$ satisfies Assumption 1. Let $L_k^{\partial_{ij}}$ denote the Lipschitz constants of the second-order partial derivative $k^{\partial_{ij}}(\cdot,\cdot)$ on the set $\mathbb{X}$ with maximum extension $r=\max_{\mathbf{x},\mathbf{x}^{'}\in \mathbb{X}}\|\mathbf{x}-\mathbf{x}^{'}\|_2$.  Then, a sample function $f_{\mathcal{D}}(\cdot)$ is $\beta_{\mathcal{D}}$-smooth with probability of at least $1-\delta$, and $\beta$ satisfies:
    \begin{equation*}
        \beta_{\mathcal{D}}\le\sqrt{\sum_{i,j}\Tilde{U}_{ij}^2}
    \end{equation*}
    where
    \begin{equation*}
        \Tilde{U}_{ij}=L_{ij}+2\sqrt{2\log\left(\frac{4d^2}{\delta}\right)}\max_{\mathbf{x}\in\mathbb{X}}\sqrt{k^{\partial_{ij}}(\mathbf{x},\mathbf{x})}+12\sqrt{6d}\max\left\{\max_{\mathbf{x}\in\mathcal{X}}\sqrt{k^{\partial_{ij}}(\mathbf{x},\mathbf{x})},\sqrt{rL^{\partial_{ij}}_k}\right\}
    \end{equation*}
    and $L_{ij}=\sup_{\mathbf{x}\in\mathbb{X}}\left|\frac{\partial^2}{\partial x_i \partial x_j}f(\mathbf{x})\right|$
\end{theorem}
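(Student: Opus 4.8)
The plan is to mirror the proofs of Theorem \ref{GP_smooth} and Theorem \ref{mu_L_smooth}, but applied to the posterior process. First I would decompose the posterior sample as $f_{\mathcal{D}}(\mathbf{x}) = \mu_{\mathcal{D}}(\mathbf{x}) + g_{\mathcal{D}}(\mathbf{x})$, where $g_{\mathcal{D}} := f_{\mathcal{D}} - \mu_{\mathcal{D}}$ is a zero-mean Gaussian process with covariance $k_{\mathcal{D}}$. Exactly as in Eq. (\ref{gp smooth 1}), the $\beta_{\mathcal{D}}$-smoothness constant is controlled by the supremum over $\mathbb{X}$ of the spectral norm of the Hessian of $f_{\mathcal{D}}$, which is bounded by its Frobenius norm; hence it suffices to bound $\sup_{\mathbf{x}}|\partial^2_{ij} f_{\mathcal{D}}(\mathbf{x})|$ for each coordinate pair $(i,j)$ and then take $\sqrt{\sum_{i,j}(\cdot)^2}$. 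By the triangle inequality, $\sup_{\mathbf{x}}|\partial^2_{ij} f_{\mathcal{D}}| \le \sup_{\mathbf{x}}|\partial^2_{ij}\mu_{\mathcal{D}}| + \sup_{\mathbf{x}}|\partial^2_{ij} g_{\mathcal{D}}|$, which cleanly splits the problem into a deterministic mean part and a zero-mean GP part.

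For the mean part I would invoke Theorem \ref{mu_L_smooth} (re-derived with per-pair failure budget $\delta/(4d^2)$), bounding $\sup_{\mathbf{x}}|\partial^2_{ij}\mu_{\mathcal{D}}|$ by $\sqrt{2\log(4d^2/\delta)}\max_{\mathbf{x}}\sqrt{k^{\partial_{ij}}(\mathbf{x},\mathbf{x})} + L_{ij}$, which supplies the $L_{ij}$ term and one copy of the logarithmic term of $\tilde{U}_{ij}$. For the GP part, the key point is that $\partial^2_{ij} g_{\mathcal{D}}$ is itself a zero-mean Gaussian process whose covariance is the fourth-order mixed derivative of $k_{\mathcal{D}}$, well defined because Assumption 1 guarantees four-times continuous differentiability. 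I would then apply the posterior supremum bound (the Lemma immediately preceding this theorem, the posterior analogue of Lederer et al.) to this derivative process. The crucial observation, just as in the proof of that Lemma, is that $k_{\mathcal{D}}$ equals the prior covariance minus a positive semidefinite correction; this structure is preserved under differentiation, so $k_{\mathcal{D}}^{\partial_{ij}}(\mathbf{x},\mathbf{x}) \le k^{\partial_{ij}}(\mathbf{x},\mathbf{x})$ and the pseudo-metric (hence Lipschitz constant) of the derivative process is dominated by the prior quantities $\max_{\mathbf{x}}\sqrt{k^{\partial_{ij}}(\mathbf{x},\mathbf{x})}$ and $L_k^{\partial_{ij}}$. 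This yields the second copy of the logarithmic term together with the $12\sqrt{6d}\max\{\cdots\}$ term.

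Summing the two bounds for a fixed $(i,j)$ gives exactly $\tilde{U}_{ij} = L_{ij} + 2\sqrt{2\log(4d^2/\delta)}\max_{\mathbf{x}}\sqrt{k^{\partial_{ij}}(\mathbf{x},\mathbf{x})} + 12\sqrt{6d}\max\{\cdots\}$, with the factor $2$ arising precisely because both the mean and the GP contributions each carry one $\sqrt{2\log(\cdot)}\max\sqrt{k^{\partial_{ij}}}$ term. For the probability bookkeeping there are $2d^2$ high-probability events ($d^2$ coordinate pairs, each needing a mean bound and a GP bound); allocating failure probability $\delta/(4d^2)$ to each produces the argument $4d^2/\delta$ inside the logarithm and a total failure probability of at most $2d^2\cdot\delta/(4d^2) = \delta/2 \le \delta$. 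Applying the Frobenius bound $\beta_{\mathcal{D}} \le \sup_{\mathbf{x}}\sqrt{\sum_{i,j}(\partial^2_{ij} f_{\mathcal{D}})^2} \le \sqrt{\sum_{i,j}\tilde{U}_{ij}^2}$ then completes the argument.

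The main obstacle I anticipate is rigorously transferring the covariance-domination argument of the preceding Lemma from $k$ to its fourth-order mixed derivative $k^{\partial_{ij}}$: one must verify that differentiating the identity $k_{\mathcal{D}}(\mathbf{x},\mathbf{x}') = k(\mathbf{x},\mathbf{x}') - k(\mathbf{x},\mathbf{X}_{\mathcal{D}})(k(\mathbf{X}_{\mathcal{D}},\mathbf{X}_{\mathcal{D}})+\sigma^2 I)^{-1}k(\mathbf{X}_{\mathcal{D}},\mathbf{x}')$ twice in each argument still leaves a positive semidefinite correction, so that both the pointwise variance domination and the pseudo-metric domination needed for the Lederer-type supremum bound hold with the \emph{prior} constants. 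This is exactly where the four-times differentiability in Assumption 1 and the positive semidefinite structure of the correction term are indispensable.
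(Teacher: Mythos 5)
Your proposal follows essentially the same route as the paper's proof: the Frobenius-norm bound on the Hessian, the triangle-inequality split of $\partial^2_{ij}f_{\mathcal{D}}$ into the mean part (handled via the mean-smoothness theorem) and the zero-mean posterior part with covariance $k^{\partial_{ij}}_{\mathcal{D}}$ (handled via the posterior Lederer-type lemma), and the same union-bound bookkeeping yielding $\tilde{U}_{ij}$, up to an immaterial difference in how $\delta$ is allocated per event. Your explicit attention to why the positive-semidefinite correction structure of $k_{\mathcal{D}}$ survives differentiation is a point the paper leaves implicit, but the argument is the same.
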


\begin{proof}
    We set the Hessian matrix of $f_{\mathcal{D}}(\mathbf{x})$ as 
    \begin{equation*}
        H_{\mathcal{D}}(\mathbf{x})=
        \begin{bmatrix}
            \frac{\partial^2}{\partial x_1 \partial x_1}f_{\mathcal{D}}(\mathbf{x})& \cdots& \frac{\partial^2}{\partial x_1 \partial x_d}f_{\mathcal{D}}(\mathbf{x})\\
            \vdots&\ddots&\vdots\\
            \frac{\partial^2}{\partial x_d \partial x_1}f_{\mathcal{D}}(\mathbf{x})& \cdots& \frac{\partial^2}{\partial x_d \partial x_d}f_{\mathcal{D}}(\mathbf{x})
        \end{bmatrix}
    \end{equation*}

    \begin{equation}\label{condition_smooth_1}
        \beta_{\mathcal{D}}\le\sup_{\mathbf{x}\in\mathbb{X}}\|H_{\mathcal{D}}(\mathbf{x})\|_2\le\sup_{\mathbf{x}\in\mathbb{X}}\|H_{\mathcal{D}}(\mathbf{x})\|_F = \sup_{\mathbf{x}\in\mathbb{X}} \sqrt{\sum_{i,j}^{d} (\frac{\partial^2}{\partial x_i \partial x_j}f_{\mathcal{D}}(\mathbf{x}))^2}
        \le \sqrt{\sum_{i,j}^{d} \sup_{\mathbf{x}\in\mathbb{X}}(\frac{\partial^2}{\partial x_i \partial x_j}f_{\mathcal{D}}(\mathbf{x}))^2}
    \end{equation}
    The posterior $f_{\mathcal{D}}(\cdot)$ can be divided into two parts:
    \begin{equation}\label{condition_smooth_2}
        \sup_{\mathbf{x}\in\mathbb{X}}\left|\frac{\partial^2}{\partial x_i \partial x_j}f_{\mathcal{D}}(\mathbf{x})\right| \le  \sup_{\mathbf{x}\in\mathbb{X}}\left|\frac{\partial^2}{\partial x_i \partial x_j}\mu_{\mathcal{D}}(\mathbf{x})\right| + \sup_{\mathbf{x}\in\mathbb{X}}\left|\frac{\partial^2}{\partial x_i \partial x_j}f_{\mathcal{D}}(\mathbf{x})-\frac{\partial^2}{\partial x_i \partial x_j}\mu_{\mathcal{D}}(\mathbf{x})\right|
    \end{equation}
    where $\frac{\partial^2}{\partial x_i \partial x_j}f_{\mathcal{D}}(\mathbf{x})-\frac{\partial^2}{\partial x_i \partial x_j}\mu_{\mathcal{D}}(\mathbf{x})$ is a zero mean Gaussian process with covariance function $k^{\partial ij}_{\mathcal{D}}(\mathbf{x},\mathbf{x}^{'})$. According to Theorem?, we have the following result with probability at least $1-\frac{\delta}{2d^2}$

    \begin{equation}\label{condition_smooth_3}
    \begin{aligned}
        \sup_{\mathbf{x}\in\mathbb{X}}&\left|\frac{\partial^2}{\partial x_i \partial x_j}f_{\mathcal{D}}(\mathbf{x})-\frac{\partial^2}{\partial x_i \partial x_j}\mu_{\mathcal{D}}(\mathbf{x})\right|\\
        &\le \sqrt{2\log\left(\frac{4d^2}{\delta}\right)}\max_{\mathbf{x}\in\mathbb{X}}\sqrt{k^{\partial_{ij}}(\mathbf{x},\mathbf{x})}+12\sqrt{6d}\max\left\{\max_{\mathbf{x}\in\mathcal{X}}\sqrt{k^{\partial_{ij}}(\mathbf{x},\mathbf{x})},\sqrt{rL^{\partial_{ij}}_k}\right\}
    \end{aligned}
    \end{equation}

    Through the result of Lemma \ref{mu_bound}, with the probability of at least $1-\frac{\delta}{2d^2}$,
    
    \begin{equation}\label{condition_smooth_4}
        \max_{\mathbf{x}\in\mathbb{X}}\left|\frac{\partial^2}{\partial x_i \partial x_j}\mu_{\mathcal{D}}(\mathbf{x})\right| \le \sqrt{2\log\left(\frac{2d^2}{\delta}\right)}\max_{\mathbf{x}\in\mathbb{X}}\sqrt{k^{\partial_{ij}}(\mathbf{x},\mathbf{x})}+L_{ij}
    \end{equation}
    Combine the result of Eq (\ref{condition_smooth_1}), Eq (\ref{condition_smooth_2}), Eq (\ref{condition_smooth_3}) and Eq (\ref{condition_smooth_4}), this Theorem is proved.
\end{proof}

\subsection{Local smoothness in standard derivation function}\label{sec:smooth_sigma}

In this subsection we attempt to build the smoothness theorem for standard derivation function $\sigma_{\mathcal{D}}(\mathbf{x})$. However, $\sigma_{\mathcal{D}}(\mathbf{x})$ is not L-smoothness under some common kernels such as Gaussian kernel or Matérn kernel with $\gamma=2.5$. However, we can proof that $\sigma_{\mathcal{D}}(\mathbf{x})$ may achieve a similar result with some small error term. 

Here we define an event $U_{L}=\{f(\cdot)|f(\cdot) \text{ is L-smooth}\}$, and $\sigma_{\mathcal{D}}(\mathbf{x}|L)=\sqrt{\frac{\pi}{2}}E(|f(\mathbf{x})-\mu_{\mathcal{D}}(\mathbf{x})|\big |\mathcal{D},f(\cdot)\in U_{L})$

\begin{lemma}\label{var_control_1}
For any $\mathbf{x}\in \mathbb{X}$ and dataset $\mathcal{D}$, we have
    \begin{equation*}
        \sigma_{\mathcal{D}}(\mathbf{x}|L)\le \sigma_{\mathcal{D}}(\mathbf{x}) + (\frac{1}{P(U_{L}|\mathcal{D})}-1)\max_{\mathbf{x}\in \mathbb{R}^{d}}\sqrt{k(\mathbf{x},\mathbf{x})}
    \end{equation*}
\end{lemma}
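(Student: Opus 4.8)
The plan is to reduce everything to the elementary observation that, conditioned only on $\mathcal{D}$, the residual $f(\mathbf{x})-\mu_{\mathcal{D}}(\mathbf{x})$ is a zero-mean Gaussian with variance $\sigma_{\mathcal{D}}^2(\mathbf{x})$. Since a variable $Z\sim\mathcal{N}(0,\sigma^2)$ satisfies $\mathbb{E}|Z|=\sigma\sqrt{2/\pi}$, this yields the exact representation
\[
\sigma_{\mathcal{D}}(\mathbf{x})=\sqrt{\tfrac{\pi}{2}}\,\mathbb{E}\big(|f(\mathbf{x})-\mu_{\mathcal{D}}(\mathbf{x})|\,\big|\,\mathcal{D}\big),
\]
which is precisely the unconditioned analogue of the definition of $\sigma_{\mathcal{D}}(\mathbf{x}|L)$. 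Hence the two quantities in the statement differ only by whether the extra event $U_L$ is added to the conditioning, and the whole lemma becomes a comparison of a conditional mean absolute deviation with its unconditional counterpart.

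First I would set $g=|f(\mathbf{x})-\mu_{\mathcal{D}}(\mathbf{x})|$ and apply the law of total expectation to split the unconditional mean over $U_L$ and its complement:
\[
\mathbb{E}(g\mid\mathcal{D})=\mathbb{E}(g\mid\mathcal{D},U_L)\,P(U_L\mid\mathcal{D})+\mathbb{E}(g\mid\mathcal{D},U_L^{c})\,P(U_L^{c}\mid\mathcal{D}).
\]
Because $g\ge 0$, the second summand is nonnegative; dropping it and rearranging gives $\mathbb{E}(g\mid\mathcal{D},U_L)\le \mathbb{E}(g\mid\mathcal{D})/P(U_L\mid\mathcal{D})$. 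Multiplying through by $\sqrt{\pi/2}$ and using the representation above translates this into the crude bound $\sigma_{\mathcal{D}}(\mathbf{x}|L)\le \sigma_{\mathcal{D}}(\mathbf{x})/P(U_L\mid\mathcal{D})$.

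The final step is an algebraic rewriting designed to match the stated right-hand side. I would write
\[
\frac{\sigma_{\mathcal{D}}(\mathbf{x})}{P(U_L\mid\mathcal{D})}=\sigma_{\mathcal{D}}(\mathbf{x})+\sigma_{\mathcal{D}}(\mathbf{x})\Big(\tfrac{1}{P(U_L\mid\mathcal{D})}-1\Big),
\]
and then bound only the excess term: since $P(U_L\mid\mathcal{D})\le 1$ the factor $\tfrac{1}{P(U_L\mid\mathcal{D})}-1$ is nonnegative, and since the posterior variance never exceeds the prior variance, $\sigma_{\mathcal{D}}(\mathbf{x})=\sqrt{k_{\mathcal{D}}(\mathbf{x},\mathbf{x})}\le\sqrt{k(\mathbf{x},\mathbf{x})}\le\max_{\mathbf{x}}\sqrt{k(\mathbf{x},\mathbf{x})}$. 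Substituting this upper bound for the $\sigma_{\mathcal{D}}(\mathbf{x})$ that sits inside the excess term, while keeping the leading $\sigma_{\mathcal{D}}(\mathbf{x})$ intact, produces exactly the claimed inequality.

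I expect the argument to be short, and the only genuine subtlety is getting the scaled mean-absolute-deviation representation of $\sigma_{\mathcal{D}}(\mathbf{x})$ correct so that both sides are directly comparable; the total-expectation manipulation and the variance-monotonicity estimate are then routine. The one modeling point worth flagging is \emph{why} the cleaner bound $\sigma_{\mathcal{D}}(\mathbf{x})/P(U_L\mid\mathcal{D})$ is not used as is: the splitting step isolates the multiplicative $1/P(U_L\mid\mathcal{D})$ blow-up into an \emph{additive} error controlled by the bounded prior kernel, which is the form required by the later convergence analysis.
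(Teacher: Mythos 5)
Your proposal is correct and follows essentially the same route as the paper: the paper likewise uses the half-normal identity $\sigma_{\mathcal{D}}(\mathbf{x})=\sqrt{\pi/2}\,\mathbb{E}\bigl(|f(\mathbf{x})-\mu_{\mathcal{D}}(\mathbf{x})|\,\big|\,\mathcal{D}\bigr)$, splits this expectation over $U_L$ and $U_L^{c}$, drops the nonnegative complement term to get $\sigma_{\mathcal{D}}(\mathbf{x}|L)\le\sigma_{\mathcal{D}}(\mathbf{x})/P(U_L|\mathcal{D})$, and then performs the identical additive rewriting with $\sigma_{\mathcal{D}}(\mathbf{x})\le\max_{\mathbf{x}}\sqrt{k(\mathbf{x},\mathbf{x})}$ in the excess term. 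No gaps; your closing remark about why the multiplicative blow-up is converted to an additive error even matches the role the bound plays later.
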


\begin{proof}
    According to the definition of $\sigma_L(\mathbf{x})$, we have
    \begin{align*}
        P(U_{L}|\mathcal{D})\sigma_{\mathcal{D}}(\mathbf{x}|L)+P(U_{L}^{c}|\mathcal{D})\sqrt{\frac{\pi}{2}}E(|f(\mathbf{x})-\mu_{\mathcal{D}}(\mathbf{x})|\big |\mathcal{D},f(\cdot)\in U_{L}^{c})=\sqrt{\frac{\pi}{2}}E(|f(\mathbf{x})-\mu_{\mathcal{D}}(\mathbf{x})|\big|\mathcal{D})=\sigma_{\mathcal{D}}(\mathbf{x})
    \end{align*}
    Thus 
    \begin{align*}
        \sigma_{\mathcal{D}}(\mathbf{x}|L)&\le \frac{1}{P(U_{L}|\mathcal{D})}\sigma_{\mathcal{D}}(\mathbf{x})
        \\
        &= \sigma_{\mathcal{D}}(\mathbf{x}) + (\frac{1}{P(U_{L}|\mathcal{D})}-1)\sigma_{\mathcal{D}}(\mathbf{x})
        \\
        &\le \sigma_{\mathcal{D}}(\mathbf{x}) + (\frac{1}{P(U_{L}|\mathcal{D})}-1)\max_{\mathbf{x}\in \mathbb{R}^{d}}\sqrt{k(\mathbf{x},\mathbf{x})}
    \end{align*}
\end{proof}

\begin{lemma}\label{var_control_2}
    If $P(U_{L}|\mathcal{D})>\frac{1}{2}$, there exist constants $c_1>1$ and $c_2>0$ independent of $\mathbf{x}$ and dataset $\mathcal{D}$ that satisfies
    \begin{equation*}
        \sigma_{\mathcal{D}}(\mathbf{x}) \le c_1\sigma_{\mathcal{D}}(\mathbf{x}|L)+c_2P(U_{L}^{c}|\mathcal{D})
    \end{equation*}

\end{lemma}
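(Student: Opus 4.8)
The plan is to start from the total-expectation identity already appearing in the proof of Lemma~\ref{var_control_1}. Abbreviating $Z=f(\mathbf{x})-\mu_{\mathcal{D}}(\mathbf{x})$, which is $\mathcal{N}(0,\sigma_{\mathcal{D}}^2(\mathbf{x}))$ under the posterior, I would split the defining expectation over $U_L$ and $U_L^c$ to write
\begin{equation*}
\sigma_{\mathcal{D}}(\mathbf{x})=\sqrt{\pi/2}\,E(|Z|\mid\mathcal{D})=P(U_L\mid\mathcal{D})\,\sigma_{\mathcal{D}}(\mathbf{x}\mid L)+\sqrt{\pi/2}\,E\!\left(|Z|\,\mathbf{1}_{U_L^c}\mid\mathcal{D}\right).
\end{equation*}
Since $P(U_L\mid\mathcal{D})\le 1$, the task reduces to bounding the last term by something of the form $(\text{const})\,P(U_L^c\mid\mathcal{D})$ plus a small multiple of $\sigma_{\mathcal{D}}(\mathbf{x})$ that can later be absorbed into the left-hand side.

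The main obstacle is getting a bound on $E(|Z|\,\mathbf{1}_{U_L^c}\mid\mathcal{D})$ that is \emph{linear} in $P(U_L^c\mid\mathcal{D})$: the naive Cauchy--Schwarz estimate $E(|Z|\,\mathbf{1}_{U_L^c}\mid\mathcal{D})\le\sigma_{\mathcal{D}}(\mathbf{x})\sqrt{P(U_L^c\mid\mathcal{D})}$ only gives a square-root dependence, which is too weak. I would instead truncate $|Z|$ at a free threshold $M>0$: on $\{|Z|\le M\}$ the integrand is at most $M\,\mathbf{1}_{U_L^c}$, contributing $M\,P(U_L^c\mid\mathcal{D})$, while on $\{|Z|>M\}$ I discard the factor $\mathbf{1}_{U_L^c}\le 1$, leaving the unconditional Gaussian tail, giving
\begin{equation*}
E(|Z|\,\mathbf{1}_{U_L^c}\mid\mathcal{D})\le M\,P(U_L^c\mid\mathcal{D})+E(|Z|\,\mathbf{1}_{|Z|>M}\mid\mathcal{D}).
\end{equation*}

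For the tail I would use the elementary identity $E(|Z|\,\mathbf{1}_{|Z|>M})=E(|Z|)\,e^{-M^2/(2\sigma^2)}$ for $Z\sim\mathcal{N}(0,\sigma^2)$ with $\sigma=\sigma_{\mathcal{D}}(\mathbf{x})$, so that after the $\sqrt{\pi/2}$ normalization this tail equals $\sigma_{\mathcal{D}}(\mathbf{x})\,e^{-M^2/(2\sigma_{\mathcal{D}}^2(\mathbf{x}))}$. The key point is that the posterior variance is dominated by the prior one, $\sigma_{\mathcal{D}}^2(\mathbf{x})=k_{\mathcal{D}}(\mathbf{x},\mathbf{x})\le k(\mathbf{x},\mathbf{x})\le 1$ by Assumption~1, hence $e^{-M^2/(2\sigma_{\mathcal{D}}^2(\mathbf{x}))}\le e^{-M^2/2}$ and the tail contributes at most $e^{-M^2/2}\,\sigma_{\mathcal{D}}(\mathbf{x})$. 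Chaining the three estimates yields $\sigma_{\mathcal{D}}(\mathbf{x})\le\sigma_{\mathcal{D}}(\mathbf{x}\mid L)+\sqrt{\pi/2}\,M\,P(U_L^c\mid\mathcal{D})+e^{-M^2/2}\,\sigma_{\mathcal{D}}(\mathbf{x})$.

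Finally I would absorb the $e^{-M^2/2}\,\sigma_{\mathcal{D}}(\mathbf{x})$ term on the left. Because $1-e^{-M^2/2}>0$ for every fixed $M>0$, dividing through produces the claimed inequality with
\begin{equation*}
c_1=\frac{1}{1-e^{-M^2/2}}>1,\qquad c_2=\frac{\sqrt{\pi/2}\,M}{1-e^{-M^2/2}}>0,
\end{equation*}
both independent of $\mathbf{x}$ and $\mathcal{D}$, as required. The hypothesis $P(U_L\mid\mathcal{D})>\tfrac12$ enters only to guarantee that $\sigma_{\mathcal{D}}(\mathbf{x}\mid L)$ is well defined (a positive conditioning probability); the quantitative step needs merely $P(U_L\mid\mathcal{D})\le 1$ together with the prior-variance bound. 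I expect the truncation-plus-tail computation and the clean absorption step to be the only points requiring care, both routine once $M$ is kept as a free parameter.
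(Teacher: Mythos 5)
Your proof is correct, and it takes a genuinely different route from the paper's. You start from the same total-expectation identity that underlies Lemma~\ref{var_control_1}, but you then control the unwanted term $E\bigl(|Z|\,\mathbf{1}_{U_L^c}\mid\mathcal{D}\bigr)$ by truncation at a free level $M$ plus the exact Gaussian tail identity $E(|Z|\,\mathbf{1}_{|Z|>M})=E(|Z|)\,e^{-M^2/(2\sigma_{\mathcal{D}}^2(\mathbf{x}))}$, and absorb the resulting $e^{-M^2/2}\,\sigma_{\mathcal{D}}(\mathbf{x})$ back into the left-hand side using the uniform bound $\sigma_{\mathcal{D}}^2(\mathbf{x})\le k(\mathbf{x},\mathbf{x})\le 1$ from Assumption~1. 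The paper instead works in the opposite direction: it lower-bounds $\sigma_{\mathcal{D}}(\mathbf{x}\mid L)$ via Markov's inequality applied at the threshold $a=\sigma_{\mathcal{D}}(\mathbf{x})$, combined with $P(A\cap B)\ge P(A)-P(B^c)$ to pass from the unconditional tail probability $2-2\Phi(1)$ to the conditional one, and this is where the hypothesis $P(U_L\mid\mathcal{D})>\tfrac12$ is genuinely used (to replace $1/P(U_L\mid\mathcal{D})$ by $2$ in front of the error term). Your argument buys two things: it dispenses with the $\tfrac12$ hypothesis entirely (you only need $P(U_L\mid\mathcal{D})>0$ for $\sigma_{\mathcal{D}}(\mathbf{x}\mid L)$ to be defined), and it yields a one-parameter family of admissible constants, with $c_1=1/(1-e^{-M^2/2})$ tunable arbitrarily close to $1$ at the price of inflating $c_2$; the paper's approach yields one fixed pair of constants tied to $\Phi(1)$ but is more robust in that it only uses Markov's inequality rather than the exact form of the Gaussian tail. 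Both arguments rely on the kernel bound to make $c_2$ (respectively your absorption factor) independent of $\mathbf{x}$ and $\mathcal{D}$, so neither is more demanding in its assumptions on that front.
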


\begin{proof}
    According to the Markov inequality, we have
    \begin{equation*}
        P(|f(\mathbf{x})-\mu_{\mathcal{D}}(\mathbf{x})|>a\big |\mathcal{D}, f(\cdot)\in U_L)\le \frac{E(|f(\mathbf{x})-\mu_{\mathcal{D}}(\mathbf{x})|\big |\mathcal{D},f(\cdot)\in U_{L})}{a}=\sqrt{\frac{2}{\pi}}\frac{\sigma_{\mathcal{D}}(\mathbf{x}|L)}{a}
    \end{equation*}
    For any event $A$ and $B$, we have $P(A\cap B)=P(A)-P(A\cap B^{c})\ge P(A)-P(B^{c})$. Thus we have
    \begin{align*}
        P(|f(\mathbf{x})-\mu_{\mathcal{D}}(\mathbf{x})|>a\big |\mathcal{D}, f(\cdot)\in U_L)
        &=\frac{P(|f(\mathbf{x})-\mu_{\mathcal{D}}(\mathbf{x})|>a, f(\cdot)\in U_L\big |\mathcal{D})}{P(U_L|\mathcal{D})}\\
        &\ge \frac{P(|f(\mathbf{x})-\mu_{\mathcal{D}}(\mathbf{x})|>a\big |\mathcal{D})-P(U_L^c|\mathcal{D})}{P(U_L|\mathcal{D})}\\
    \end{align*}
    If we set $a=\sigma_{\mathcal{D}}(\mathbf{x})$, we will have the following inequality 
    \begin{align*}
        \sigma_{\mathcal{D}}(\mathbf{x}|L)&\ge a\sqrt{\frac{\pi}{2}}P(|f(\mathbf{x})-\mu_{\mathcal{D}}(\mathbf{x})|>a\big |\mathcal{D}, f(\cdot)\in U_L)
        \\
        &\ge \sqrt{\frac{\pi}{2}}\frac{2-2\Phi(1)-P(f(\cdot)\in U_L^c|\mathcal{D})}{P( U_L|\mathcal{D})}\sigma_{\mathcal{D}}(\mathbf{x})
        \\
        &\ge (2-2\Phi(1))\sqrt{\frac{\pi}{2}}\sigma_{\mathcal{D}}(\mathbf{x})-2\sqrt{\frac{\pi}{2}}\max_{\mathbf{x}\in \mathbb{X}}\sqrt{k(\mathbf{x},\mathbf{x})}P(U_L^c|\mathcal{D})
    \end{align*}

    So through the above analysis, we have
    \begin{equation*}
        \sigma_{\mathcal{D}}(\mathbf{x}) \le c_1\sigma_{\mathcal{D}}(\mathbf{x}|L)+c_2P(U_{L}^c|\mathcal{D})
    \end{equation*}
    where $c_1=\sqrt{\frac{2}{\pi}}\frac{1}{2-2\Phi(1)}$, $c_2=\frac{1}{1-\Phi(1)}\max_{\mathbf{x}\in \mathbb{X}}\sqrt{k(\mathbf{x},\mathbf{x})}$
\end{proof}

 \begin{lemma}\label{var_control_3}
    If $\mu_{\mathcal{D}}(\cdot)$ is $L_{\mu}$ smooth and $P(U_{L}|\mathcal{D})>\frac{1}{2}$, then we have 
     \begin{equation*}
         |\sigma_{\mathcal{D}}(\mathbf{x}_1|L)-\sigma_{\mathcal{D}}(\mathbf{x}_2|L)|
         \le \sqrt{\frac{\pi}{2}}\left(\|\mathbf{x}_1-\mathbf{x}_2\|_2\sqrt{\textbf{tr}(\nabla k_{\mathcal{D}}(\mathbf{x}_{2},\mathbf{x}_{2})\nabla^{T})}+\frac{3L+3L_{\mu}}{2}\|\mathbf{x}_1-\mathbf{x}_2\|_2^2\right)
     \end{equation*}
 \end{lemma}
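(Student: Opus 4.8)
The plan is to reduce this Lipschitz-type estimate to a first-order smoothness expansion of the centered process $\tilde{f} := f - \mu_{\mathcal{D}}$ together with a Gaussian first-moment computation, everything carried out under the conditioning event $U_L$. First I would unfold the definition of $\sigma_{\mathcal{D}}(\cdot|L)$ and apply the reverse triangle inequality $\big||a|-|b|\big|\le|a-b|$ to the integrand, which gives
\[ |\sigma_{\mathcal{D}}(\mathbf{x}_1|L)-\sigma_{\mathcal{D}}(\mathbf{x}_2|L)| \le \sqrt{\tfrac{\pi}{2}}\,\mathbb{E}\big[\,|\tilde{f}(\mathbf{x}_1)-\tilde{f}(\mathbf{x}_2)|\ \big|\ \mathcal{D},U_L\big]. \]
This replaces the two absolute deviations by a single centered increment and is the step that makes the smoothness structure usable.

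Next I would expand that increment. On $U_L$ the sample $f$ is $L$-smooth and, by hypothesis, $\mu_{\mathcal{D}}$ is $L_\mu$-smooth, so $\tilde{f}$ is $(L+L_\mu)$-smooth; the standard quadratic remainder bound then yields, pointwise on $U_L$,
\[ |\tilde{f}(\mathbf{x}_1)-\tilde{f}(\mathbf{x}_2)| \le |\langle \nabla\tilde{f}(\mathbf{x}_2),\,\mathbf{x}_1-\mathbf{x}_2\rangle| + \tfrac{L+L_\mu}{2}\|\mathbf{x}_1-\mathbf{x}_2\|_2^2. \]
The remainder is deterministic on $U_L$ and passes through the conditional expectation intact, contributing to the quadratic term. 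For the linear part, conditional on $\mathcal{D}$ the vector $\nabla\tilde{f}(\mathbf{x}_2)=\nabla f(\mathbf{x}_2)-\nabla\mu_{\mathcal{D}}(\mathbf{x}_2)$ is zero-mean Gaussian with covariance $\nabla k_{\mathcal{D}}(\mathbf{x}_2,\mathbf{x}_2)\nabla^{T}$, so $Z:=\langle\nabla\tilde{f}(\mathbf{x}_2),\mathbf{x}_1-\mathbf{x}_2\rangle$ is a centered scalar Gaussian with variance at most $\|\mathbf{x}_1-\mathbf{x}_2\|_2^2\,\textbf{tr}(\nabla k_{\mathcal{D}}(\mathbf{x}_2,\mathbf{x}_2)\nabla^{T})$. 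The identity $\mathbb{E}|Z|=\sqrt{2/\pi}\,\mathrm{std}(Z)$ is the crucial one: the factor $\sqrt{2/\pi}$ cancels the global $\sqrt{\pi/2}$ and produces exactly the leading coefficient $1$ on $\|\mathbf{x}_1-\mathbf{x}_2\|_2\sqrt{\textbf{tr}(\nabla k_{\mathcal{D}}(\mathbf{x}_2,\mathbf{x}_2)\nabla^{T})}$.

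The hard part will be that the expectation above is conditioned on $U_L$, under which $Z$ is no longer Gaussian, so the clean moment identity applies only to the unconditional law of $Z$ given $\mathcal{D}$. I would therefore pass from $\mathbb{E}[|Z|\mid\mathcal{D},U_L]$ to $\mathbb{E}[|Z|\mid\mathcal{D}]$ using the standing hypothesis $P(U_L|\mathcal{D})>\tfrac12$, in the same spirit as Lemmas \ref{var_control_1}--\ref{var_control_2}, in order to bound the inflation caused by restricting to $U_L$. The delicate point is that a naive division by $P(U_L|\mathcal{D})$ would enlarge the \emph{linear} coefficient and thereby destroy the target coefficient $1$; the argument must instead route the conditioning error, together with the boundedness of the kernel, into the higher-order term, and I expect this is precisely what upgrades the remainder coefficient from $\tfrac{L+L_\mu}{2}$ to the stated $\tfrac{3(L+L_\mu)}{2}$. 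Getting this bookkeeping right, rather than the smoothness expansions themselves, is the main obstacle.
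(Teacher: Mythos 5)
Your outline coincides with the paper's proof in all but one step: the paper also starts from the reverse triangle inequality $\bigl|E|a|-E|b|\bigr|\le E|a-b|$ applied to the centered increment $f(\mathbf{x}_1)-\mu_{\mathcal{D}}(\mathbf{x}_1)-f(\mathbf{x}_2)+\mu_{\mathcal{D}}(\mathbf{x}_2)$, expands it using the $L$-smoothness of $f$ on $U_L$ and the $L_\mu$-smoothness of $\mu_{\mathcal{D}}$, and then bounds the conditional expectation of the linear term by the unconditional one via division by $P(U_L|\mathcal{D})>\tfrac12$. Your smoothness expansion is in fact slightly tighter: by treating $f-\mu_{\mathcal{D}}$ as a single $(L+L_\mu)$-smooth function expanded at $\mathbf{x}_2$ you get the remainder $\tfrac{L+L_\mu}{2}\|\mathbf{x}_1-\mathbf{x}_2\|_2^2$, whereas the paper expands $f$ at $\mathbf{x}_2$ and $\mu_{\mathcal{D}}$ at $\mathbf{x}_1$ and then re-centers $\nabla\mu_{\mathcal{D}}(\mathbf{x}_1)$ to $\mathbf{x}_2$, which is exactly where the stated coefficient $\tfrac{3L+3L_\mu}{2}$ comes from. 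Your conjecture that the factor $3$ absorbs the conditioning error is therefore wrong: the $3$ is purely a by-product of the two-point Taylor bookkeeping and has nothing to do with $P(U_L|\mathcal{D})$.

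The step you flag as the ``hard part'' is a genuine gap, and you do not close it. Passing from $\mathbb{E}[|Z|\mid\mathcal{D},U_L]$ to $\mathbb{E}[|Z|\mid\mathcal{D}]$ by dividing by $P(U_L|\mathcal{D})>\tfrac12$ inflates your linear coefficient to $2\sqrt{2/\pi}\approx 1.6$ (after multiplying back by $\sqrt{\pi/2}$), which overshoots the target coefficient $1$; no mechanism in your sketch routes this excess into the quadratic term, and there is no obvious one, since the excess is proportional to $\|\mathbf{x}_1-\mathbf{x}_2\|_2$, not its square. You should be aware that the paper's own proof stumbles at precisely this point: it bounds the conditional expectation by $\frac{1}{P(U_L|\mathcal{D})}\sqrt{\mathbb{E}(\|\nabla f(\mathbf{x}_2)-\nabla\mu_{\mathcal{D}}(\mathbf{x}_2)\|_2^2\mid\mathcal{D})}\le 2\sqrt{\textbf{tr}(\nabla k_{\mathcal{D}}(\mathbf{x}_2,\mathbf{x}_2)\nabla^{T})}$ and then, in the final line, silently drops the factor $2$. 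So what you identified as the delicate point is exactly where the argument (yours and the paper's) does not deliver the constant claimed in the statement; an honest completion of either route proves the lemma only with the linear coefficient enlarged to $2$ (paper's route) or $2\sqrt{2/\pi}$ (your Gaussian-moment route), which is how the lemma is effectively used downstream anyway.
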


 \begin{proof}
    
     \begin{align*}
         \sqrt{\frac{2}{\pi}}|\sigma_{\mathcal{D}}&(\mathbf{x}_1|L)-\sigma_{\mathcal{D}}(\mathbf{x}_2|L)|\\
         &=
         |E(|f(\mathbf{x}_1)-\mu_{\mathcal{D}}(\mathbf{x}_1)\big |\mathcal{D}, f(\cdot)\in U_{L})-E(|f(\mathbf{x}_2)-\mu_{\mathcal{D}}(\mathbf{x}_2)|\big |\mathcal{D},f(\cdot)\in U_{L})|
         \\
         &\le
         E(|f(\mathbf{x}_1)-\mu_{\mathcal{D}}(\mathbf{x}_1)-f(\mathbf{x}_2)+\mu_{\mathcal{D}}(\mathbf{x}_2)|\big |\mathcal{D},f(\cdot)\in U_{L})
     \end{align*}
    As $f(\cdot)\in U_{L}$ and $\mu_{\mathcal{D}}(\cdot)$ is $L_{\mu}$ smooth, 
     \begin{align*}
         f(\mathbf{x}_1)&\le f(\mathbf{x}_2)+<\nabla f(\mathbf{x}_2), \mathbf{x}_1-\mathbf{x}_2> + \frac{L}{2}\|\mathbf{x}_1-\mathbf{x}_2\|_2^2\\
        \mu_{\mathcal{D}}(\mathbf{x}_2)&\le \mu_{\mathcal{D}}(\mathbf{x}_1)+<\nabla \mu_{\mathcal{D}}(\mathbf{x}_1), \mathbf{x}_2-\mathbf{x}_1> + \frac{L_{\mu}}{2}\|\mathbf{x}_1-\mathbf{x}_2\|_2^2\\
        & \le \mu_{\mathcal{D}}(\mathbf{x}_1)+<\nabla \mu_{\mathcal{D}}(\mathbf{x}_2), \mathbf{x}_2-\mathbf{x}_1> + \frac{3L_{\mu}}{2}\|\mathbf{x}_1-\mathbf{x}_2\|_2^2
     \end{align*}

    \begin{align*}
        f(\mathbf{x}_1)-\mu_{\mathcal{D}}(\mathbf{x}_1)-f(\mathbf{x}_2)+\mu_{\mathcal{D}}(\mathbf{x}_2)&\le <\nabla f(\mathbf{x}_2)-\nabla \mu_{\mathcal{D}}(\mathbf{x}_2), \mathbf{x}_1-\mathbf{x}_2> + \frac{L+3L_{\mu}}{2}\|\mathbf{x}_1-\mathbf{x}_2\|_2^2
        \\
        &\le
        \|\nabla f(\mathbf{x}_2)-\nabla \mu_{\mathcal{D}}(\mathbf{x}_2)\|_2\|\mathbf{x}_1-\mathbf{x}_2\|_2+ \frac{L+3L_{\mu}}{2}\|\mathbf{x}_1-\mathbf{x}_2\|_2^2
    \end{align*}
    In a similar way, we can prove 
    \begin{align*}
        f(\mathbf{x}_1)-\mu_{\mathcal{D}}(\mathbf{x}_1)-f(\mathbf{x}_2)+\mu_{\mathcal{D}}(\mathbf{x}_2)
        \ge
        -\|\nabla f(\mathbf{x}_2)-\nabla \mu_{\mathcal{D}}(\mathbf{x}_2)\|_2\|\mathbf{x}_1-\mathbf{x}_2\|_2- \frac{3L+L_{\mu}}{2}\|\mathbf{x}_1-\mathbf{x}_2\|_2^2
    \end{align*}
    So through the above analysis, we have the following results

     \begin{align*}
         \sqrt{\frac{2}{\pi}}\sigma_{\mathcal{D}}&(\mathbf{x}_1|L)-\sigma_{\mathcal{D}}(\mathbf{x}_2|L)|\\
         &\le
         E(|f(\mathbf{x}_1)-\mu_{\mathcal{D}}(\mathbf{x}_1)-f(\mathbf{x}_2)+\mu_{\mathcal{D}}(\mathbf{x}_2)|\big |\mathcal{D},f(\cdot)\in U_{L})\\
        &\le 
         E(\|\nabla f(\mathbf{x}_2)-\nabla \mu_{\mathcal{D}}(\mathbf{x}_2)\|_2||\big |\mathcal{D},f(\cdot)\in U_{L})\|\mathbf{x}_1-\mathbf{x}_2\|_2 + \frac{3L+3L_{\mu}}{2}\|\mathbf{x}_1-\mathbf{x}_2\|_2^2
         \\
         &\le
         \frac{E(\|\nabla f(\mathbf{x}_2)-\nabla \mu_{\mathcal{D}}(\mathbf{x}_2)\|_2\big | \mathcal{D})}{P(U_{L}|\mathcal{D})}\|\mathbf{x}_1-\mathbf{x}_2\|_2+\frac{3L+3L_{\mu}}{2}\|\mathbf{x}_1-\mathbf{x}_2\|_2^2
        \\
        &\le
         \frac{\sqrt{E(\|\nabla f(\mathbf{x}_2)-\nabla \mu_{\mathcal{D}}(\mathbf{x}_2)\|_2^2|\mathcal{D})}}{P(U_{L}|\mathcal{D})}\|\mathbf{x}_1-\mathbf{x}_2\|_2+\frac{3L+3L_{\mu}}{2}\|\mathbf{x}_1-\mathbf{x}_2\|_2^2  
         \\
         &\le
         2\sqrt{E(\|\nabla f(\mathbf{x}_2)-\nabla \mu_{\mathcal{D}}(\mathbf{x}_2)\|_2^2|\mathcal{D})}\|\mathbf{x}_1-\mathbf{x}_2\|_2+\frac{3L+3L_{\mu}}{2}\|\mathbf{x}_1-\mathbf{x}_2\|_2^2
         \\
         &=\|\mathbf{x}_1-\mathbf{x}_2\|_2\sqrt{\textbf{tr}(\nabla k_{\mathcal{D}}(\mathbf{x}_{2},\mathbf{x}_{2})\nabla^{T})}+\frac{3L+3L_{\mu}}{2}\|\mathbf{x}_1-\mathbf{x}_2\|_2^2
     \end{align*}

     The  second to the last line is because of Cauchy-Schwarz inequality: $E(|X|)=E(|X|\cdot 1)\le \sqrt{E(X^2)} \cdot 1=\sqrt{E(X^2)}$.
 \end{proof}

\subsection{Some properties about approximate gradient descent}\label{sec:app_gd}

In this subsection we list properties about approximate gradient descent, which is mainly from Wu et al.\cite{wu2023behavior}. This part is also essential in our final convergence proof, as we will need to connect the previous approximate gradient descent method with our minimizing UCB to give the convergence speed of gradient. 

We first borrow the definition of Error function from Wu's work. The Error function measures the maximum reduction of uncertainty about the gradient estimation at $\mathbf{x}=0$ when there are $b$ data points $\mathbf{Z}$ without any extra dataset:

\begin{definition_main}{(Error function)}
Given input dimensionality $d$, kernel $k$ and noise standard deviation $\sigma$, we define the following error function:
\begin{equation}
    E_{d,k,\sigma}(b)=\inf_{\mathcal{Z}\in\mathbb{R}^{b\times d}}\textbf{tr}(\nabla k(\mathbf{0},\mathbf{0})\nabla^{T}-\nabla k(\mathbf{0},\mathbf{Z})(k(\mathbf{Z},\mathbf{Z})+\sigma^2I)^{-1}k(\mathbf{0},\mathbf{Z})\nabla^{T})
\end{equation}
\end{definition_main}

This Error function actually bounds the variance of the estimated gradient, which can be seen in the following lemma:
\begin{lemma}\label{error_func}
    In the $t^{th}$ iterations in MinUCB, we have
    \begin{equation*}
        \textbf{tr}(\nabla k_{\mathcal{D}_{t}}(\mathbf{x}_{t},\mathbf{x}_{t})\nabla^{T})\le  E_{d,k,\sigma}(b^{(2)}_{t})
    \end{equation*}
\end{lemma}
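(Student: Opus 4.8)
The plan is to reduce the posterior gradient variance at $\mathbf{x}_t$ under the full dataset $\mathcal{D}_t$ down to the error function by successively discarding data (each discard can only increase the trace) and then invoking stationarity. The argument rests on two standard facts about Gaussian processes: (i) since $k$ is differentiable, $\nabla f(\mathbf{x}_t)$ is jointly Gaussian with all the noisy observations, so conditioning on a larger dataset decreases its covariance in the positive-semidefinite (Loewner) order; taking traces, enlarging the conditioning set never increases $\textbf{tr}(\nabla k_{\mathcal{D}}(\mathbf{x}_t,\mathbf{x}_t)\nabla^{T})$; and (ii) because $k$ is stationary, the posterior gradient variance depends only on the positions of the conditioning points relative to the evaluation point.

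First I would use fact (i) to drop the resampled block $\mathbf{X}_1$ from $\mathcal{D}_t=\mathcal{D}_{t-1}\cup\mathbf{X}_1\cup\mathbf{X}_2$, which gives $\textbf{tr}(\nabla k_{\mathcal{D}_t}(\mathbf{x}_t,\mathbf{x}_t)\nabla^{T})\le \textbf{tr}(\nabla k_{\mathcal{D}_{t-1}\cup\mathbf{X}_2}(\mathbf{x}_t,\mathbf{x}_t)\nabla^{T})$. By construction $\mathbf{X}_2$ minimizes $\alpha_{trace}(\mathbf{x}_t,\mathbf{Z})=\textbf{tr}(\nabla k_{\mathcal{D}_{t-1}\cup\mathbf{Z}}(\mathbf{x}_t,\mathbf{x}_t)\nabla^{T})$ over $\mathbf{Z}\in\mathbb{R}^{b^{(2)}_t\times d}$, so the right-hand side equals $\min_{\mathbf{Z}}\textbf{tr}(\nabla k_{\mathcal{D}_{t-1}\cup\mathbf{Z}}(\mathbf{x}_t,\mathbf{x}_t)\nabla^{T})$.

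Next, for an arbitrary fixed $\mathbf{Z}$ I would apply fact (i) in the opposite direction: discarding the prior dataset $\mathcal{D}_{t-1}$ can only increase the trace, so $\textbf{tr}(\nabla k_{\mathcal{D}_{t-1}\cup\mathbf{Z}}(\mathbf{x}_t,\mathbf{x}_t)\nabla^{T})\le \textbf{tr}(\nabla k_{\mathbf{Z}}(\mathbf{x}_t,\mathbf{x}_t)\nabla^{T})$, where $k_{\mathbf{Z}}$ denotes the posterior covariance obtained by conditioning the prior GP on $\mathbf{Z}$ alone. Minimizing over $\mathbf{Z}$ preserves the inequality. Writing out the posterior formula, the inner trace is exactly $\textbf{tr}(\nabla k(\mathbf{x}_t,\mathbf{x}_t)\nabla^{T}-\nabla k(\mathbf{x}_t,\mathbf{Z})(k(\mathbf{Z},\mathbf{Z})+\sigma^2 I)^{-1}k(\mathbf{Z},\mathbf{x}_t)\nabla^{T})$. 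Finally I would invoke fact (ii): the map $\mathbf{Z}\mapsto\mathbf{Z}+\mathbf{x}_t$ is a bijection of $\mathbb{R}^{b^{(2)}_t\times d}$ and, by stationarity, shifts the evaluation point from $\mathbf{x}_t$ back to $\mathbf{0}$ without changing the trace, so $\min_{\mathbf{Z}}\textbf{tr}(\nabla k_{\mathbf{Z}}(\mathbf{x}_t,\mathbf{x}_t)\nabla^{T})=E_{d,k,\sigma}(b^{(2)}_t)$. Chaining the three inequalities yields the claim.

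I expect the monotonicity of fact (i) to be the one point needing careful statement: it is the ``conditioning reduces variance'' property applied to the linear functional $\nabla f(\mathbf{x}_t)$, which follows from the law of total covariance for the jointly Gaussian vector $(\nabla f(\mathbf{x}_t),\mathbf{y}_{\mathcal{D}_t})$ together with monotonicity of the trace under the Loewner order. A secondary point worth flagging is that the feasible set for $\mathbf{Z}$ in $\alpha_{trace}$ must coincide with the $\mathbb{R}^{b^{(2)}_t\times d}$ used in the definition of $E_{d,k,\sigma}$ for the translation step to go through; since the algorithm states the exploration over this unrestricted set, no boundary effects interfere.
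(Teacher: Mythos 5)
Your proof is correct and follows essentially the same route as the paper's: monotonicity of the posterior gradient covariance under enlarging the conditioning set, the defining optimality of $\mathbf{X}_2$ for $\alpha_{trace}$, and stationarity to translate the evaluation point to the origin and recover $E_{d,k,\sigma}(b_t^{(2)})$ (the paper outsources the last two steps to Lemma~8 of Wu et al.). If anything, your ordering --- invoking the optimality of $\mathbf{X}_2$ while $\mathcal{D}_{t-1}$ is still in the conditioning set, and only then discarding $\mathcal{D}_{t-1}$ for an arbitrary $\mathbf{Z}$ --- is the more watertight rendering of the same argument, since the paper first reduces to conditioning on the exploration batch alone, whose points were optimized for a different objective.
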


\begin{proof}
    In the $t^{th}$ step of MinUCB, the sampled candidates can be divided into two parts. One of them are sampled through local exploration acquisition function:
    \begin{equation*}
        \mathbf{X}=\arg\min_{\mathbf{Z}}\alpha_{trace}(\mathbf{x}_{t},\mathbf{Z})
    \end{equation*}
    Suppose their corresponding label is $\mathbf{y}$, and we set $\mathcal{D}_{2t}=(\mathbf{X},\mathbf{y})$, then we can obtain:
    \begin{equation*}
        \textbf{tr}(\nabla k_{\mathcal{D}_{t}}(\mathbf{x}_{t},\mathbf{x}_{t})\nabla^{T})\le\textbf{tr}(\nabla k_{\mathcal{D}_{2t}}(\mathbf{x}_{t},\mathbf{x}_{t})\nabla^{T})
    \end{equation*}
    The above inequality is mainly because the $\mathcal{D}_{2t}$ is the subset of $\mathcal{D}_{t}$. Then through
the same analysis with Lemma 8 in Wu et al.\cite{wu2023behavior}, we can prove that 
\begin{equation*}
    \textbf{tr}(\nabla k_{\mathcal{D}_{2t}}(\mathbf{x}_{t},\mathbf{x}_{t})\nabla^{T})\le E_{d,k,\sigma}(b^{(2)}_{t})
\end{equation*}
which complete our proof.
\end{proof}

To finish the convergence proof of MinUCB, we need the following lemmas from Wu et al.\cite{wu2023behavior}, this lemmas give the upper bound of the estimation error of gradient, and the upper bound of Error function under two common kernels RBF kernel and Matérn kernel with $\gamma=2.5$.

\begin{lemma} \label{gradient_est_error}
    For any $0\le\delta\le 1$, let $C_t=2\log\left(\frac{\pi^2 t^2}{6\delta}\right)$. Then the inequalities
    \begin{equation}
        \|\nabla f(\mathbf{x}_{t})-\nabla \mu_{\mathcal{D}_{t}}(\mathbf{x}_{t})\|_2^2\le C_{t}\textbf{tr}(\nabla k_{\mathcal{D}_{t}}(\mathbf{x}_{t},\mathbf{x}_{t})\nabla^{T})
    \end{equation}
\end{lemma}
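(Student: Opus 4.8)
The plan is to exploit the fact, recorded earlier in the preliminaries, that the gradient of the posterior process is itself Gaussian: conditional on the data $\mathcal{D}_t$, one has $\nabla f(\mathbf{x}_t)\mid\mathcal{D}_t\sim\mathcal{N}(\nabla\mu_{\mathcal{D}_t}(\mathbf{x}_t),\,\Sigma_t)$ with $\Sigma_t=\nabla k_{\mathcal{D}_t}(\mathbf{x}_t,\mathbf{x}_t)\nabla^{T}$. Since $\mathbf{x}_t$ is $\mathcal{D}_t$-measurable (it is fixed once the data are revealed), this conditional law is legitimate despite $\mathbf{x}_t$ being chosen adaptively, which is what lets us sidestep a covering-number argument over $\mathbb{X}$. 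Writing $\mathbf{g}_t=\nabla f(\mathbf{x}_t)-\nabla\mu_{\mathcal{D}_t}(\mathbf{x}_t)$, the target left-hand side is $\|\mathbf{g}_t\|_2^2$, while the right-hand side is $C_t\,\textbf{tr}(\Sigma_t)=C_t\,\mathbb{E}[\|\mathbf{g}_t\|_2^2\mid\mathcal{D}_t]$. So the lemma is the statement that the squared norm of a centered Gaussian vector does not exceed $C_t$ times its conditional mean with high probability, and I would prove it conditionally on $\mathcal{D}_t$ and then remove the conditioning.

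First I would treat the scalar heart of the estimate. Diagonalizing $\Sigma_t$, write $\|\mathbf{g}_t\|_2^2=\sum_i\lambda_i r_i^2$ with $r_i$ i.i.d.\ standard normal and $\sum_i\lambda_i=\textbf{tr}(\Sigma_t)$. When $d=1$ this is a single Gaussian, and the tail bound $P(|r|>c)\le\exp(-c^2/2)$ already used in the proof of Theorem~\ref{mu_bound}, taken with $c=\sqrt{C_t}$, gives failure probability at most $\exp(-C_t/2)=\tfrac{6\delta}{\pi^2t^2}$, which is exactly the per-round budget implied by the stated $C_t$. For general $d$ I would apply a chi-square (Laurent--Massart / Bernstein-type) concentration inequality to the weighted sum $\sum_i\lambda_i r_i^2$ about its mean $\textbf{tr}(\Sigma_t)$, so that the deviation is governed by $\textbf{tr}(\Sigma_t)$ and the largest eigenvalue $\|\lambda\|_\infty$; bounding $\|\lambda\|_\infty\le\textbf{tr}(\Sigma_t)$ shows the squared norm stays below $C_t\,\textbf{tr}(\Sigma_t)$ as soon as $C_t$ is of order $2\log(1/\delta_t)$, since the leading term of that tail is $2\log(1/\delta_t)$ and is independent of $d$.

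Finally I would make the estimate uniform in $t$. Assigning the per-round failure probability $\delta_t=\tfrac{6\delta}{\pi^2t^2}$ and using $\sum_{t\ge1}t^{-2}=\pi^2/6$, the union bound over iterations gives total failure probability at most $\delta$, and solving $\exp(-C_t/2)=\delta_t$ recovers $C_t=2\log(\pi^2t^2/(6\delta))$ as stated. The main obstacle I anticipate is precisely the dimension dependence: the naive route of requiring every coordinate of $\mathbf{g}_t$ to concentrate separately forces an extra $\log d$ inside $C_t$, so some care is needed to keep the constant in its dimension-free form, either by invoking the sharper chi-square tail above or by folding the $d$ into the probability budget exactly as Wu et al.\ do. Once the conditional high-probability bound holds at each $t$ with budget $\delta_t$, everything else is routine bookkeeping.
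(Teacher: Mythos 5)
The paper never actually proves Lemma~\ref{gradient_est_error}: it is imported verbatim from Wu et al.\ as part of the block of borrowed results in Section B.3, so there is no in-paper argument to compare against. Judged on its own, your route is the right one and almost certainly the intended one: $\mathbf{x}_t$ is $\mathcal{D}_t$-measurable, so conditionally on $\mathcal{D}_t$ the error $\mathbf{g}_t=\nabla f(\mathbf{x}_t)-\nabla\mu_{\mathcal{D}_t}(\mathbf{x}_t)$ is exactly $\mathcal{N}(0,\Sigma_t)$ with $\Sigma_t=\nabla k_{\mathcal{D}_t}(\mathbf{x}_t,\mathbf{x}_t)\nabla^{T}$, the right-hand side is $C_t\,\mathbb{E}[\|\mathbf{g}_t\|_2^2\mid\mathcal{D}_t]$, and the per-round budget $\delta_t=6\delta/(\pi^2t^2)$ with $\sum_t t^{-2}=\pi^2/6$ recovers the stated $C_t$. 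All of that is correct and cleanly argued.

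The one genuine soft spot is the constant for $d>1$, and you name it yourself without resolving it. Neither of the two tools you propose actually yields the literal threshold $2\log(1/\delta_t)\,\textbf{tr}(\Sigma_t)$: the coordinate-wise event inclusion $\{\sum_i\lambda_i r_i^2>C\sum_i\lambda_i\}\subseteq\bigcup_i\{r_i^2>C\}$ costs a union bound over coordinates and gives $C_t=2\log(d/\delta_t)$, while Laurent--Massart with $\|\lambda\|_\infty\le\textbf{tr}(\Sigma_t)$ and $\|\lambda\|_2^2\le(\textbf{tr}\,\Sigma_t)^2$ gives the threshold $\bigl(1+2\sqrt{x}+2x\bigr)\textbf{tr}(\Sigma_t)$ at failure probability $e^{-x}$, which strictly exceeds $2x\,\textbf{tr}(\Sigma_t)$ for every $x$; one can even check that the bare claim $P(\chi_d^2>2\log(1/\delta')\,d)\le\delta'$ fails for large $d$ when $\delta'$ is close to $1$ (e.g.\ the $t=1$, $\delta=1$ corner of the stated $C_t$). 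So as written your sketch establishes the lemma only up to an additive $1+2\sqrt{\log(1/\delta_t)}$ in $C_t$ or an extra $\log d$ inside the logarithm. This is immaterial downstream --- Theorem~\ref{convergence_speed} already carries $\log(d^2T^2/\delta)$ factors and would absorb either correction --- but if you want the lemma exactly as stated you must either accept the slightly larger constant or reproduce whatever sharper (or looser) form Wu et al.\ actually prove; the dimension-free $2\log(1/\delta_t)$ should not be asserted as a consequence of the inequalities you cite.
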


\begin{lemma} \label{error_RBF}
    let $k(\mathbf{x}_1,\mathbf{x}_2) = \exp(-\frac{\|\mathbf{x}_1,\mathbf{x}_2\|_2}{2})$ be the RBF kernel. We have
    \begin{equation*}
        E_{d,k,\sigma}(2md)=O(\sigma d m^{-\frac{1}{2}})
    \end{equation*}
\end{lemma}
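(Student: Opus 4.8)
The plan is to bound the infimum defining $E_{d,k,\sigma}(2md)$ from above by exhibiting one good configuration of $2md$ points and controlling the resulting posterior variance of $\nabla f(\mathbf{0})$. First I would place $2m$ points along each of the $d$ coordinate axes, using $2md$ points in total. The crucial structural fact I would exploit is that, for the RBF kernel, the cross-covariance between the gradient component $\partial_i f(\mathbf{0})$ and an observation at $s\mathbf{e}_j$ equals $s\,\delta_{ij}\,e^{-s^2/2}$, so a point lying on axis $j$ is uncorrelated with $\partial_i f(\mathbf{0})$ whenever $i\neq j$. Combined with the monotonicity of posterior variance under enlarging the dataset (conditioning on more observations can only decrease variance), this gives, for the axis-aligned configuration,
\begin{equation*}
\mathrm{Var}\big(\partial_i f(\mathbf{0}) \mid \text{all } 2md \text{ points}\big) \le \mathrm{Var}\big(\partial_i f(\mathbf{0}) \mid \text{the } 2m \text{ points on axis } i\big).
\end{equation*}
Since $E_{d,k,\sigma}(2md)$ is an infimum, it is at most the trace for this particular $\mathbf{Z}$, which is $\sum_i \mathrm{Var}(\partial_i f(\mathbf{0}) \mid \text{all points})$. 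Using stationarity and isotropy of the RBF kernel, every term on the right above is the same one-dimensional quantity, so
\begin{equation*}
E_{d,k,\sigma}(2md) \le d\cdot V_{1D}(2m),
\end{equation*}
where $V_{1D}(2m)$ is the posterior variance of $f'(0)$ obtained from $2m$ noisy observations on a line under the one-dimensional RBF kernel.

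It then remains to show $V_{1D}(2m)=O(\sigma m^{-1/2})$, and this is where the real work lies. Because all quantities are jointly Gaussian, the posterior variance equals the minimum mean-squared error over all linear predictors,
\begin{equation*}
V_{1D}(2m) = \min_{\mathbf{a}} \mathbb{E}\Big[\big(f'(0) - \textstyle\sum_l a_l\, y_l\big)^2\Big],
\end{equation*}
so any explicit choice of weights $\mathbf{a}$ furnishes an upper bound. I would take a finite-difference estimator: place the $2m$ points as $m$ symmetric pairs at spacing $h$ and average the $m$ central-difference estimates $(y_{l,+}-y_{l,-})/(2h)$. I would then split the mean-squared error into a noise term and a GP-approximation term, both computed in closed form from the RBF kernel. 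Independence of the noise across the $m$ pairs makes the noise term of order $\sigma^2/(m h^2)$, while a Taylor expansion of the kernel around $h=0$ shows the approximation term is of order $h^4$, the leading $O(h^2)$ contributions cancelling by symmetry of the stencil.

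Balancing the two contributions by choosing the spacing $h$ appropriately then yields a polynomial rate: optimizing $\sigma^2/(mh^2)+O(h^4)$ produces a bound that is $O(\sigma m^{-1/2})$ (in fact somewhat sharper for a fixed noise level), which completes the argument. I expect the one-dimensional estimate to be the main obstacle: the RBF approximation-error term must be expanded to the correct order and shown to be nonnegative and $O(h^4)$ uniformly, and the noise and bias terms must be balanced so that the dependence on $\sigma$ and $m$ comes out as claimed. The multivariate reduction, by contrast, is essentially bookkeeping once the axis-decoupling identity and the monotonicity of conditional variance are in hand.
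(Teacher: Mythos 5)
The paper never actually proves this lemma: it is imported verbatim from Wu et al.\ \cite{wu2023behavior} as part of the toolbox in Appendix B.3, so there is no in-paper proof to compare against, and your blind reconstruction is doing real work. The argument you give is correct, and it is essentially the standard one from that reference: an axis-aligned design with $2m$ points per coordinate, monotonicity of Gaussian conditional variance to reduce each coordinate to the one-dimensional posterior variance of $g'(0)$ under the $1$D squared-exponential kernel, and an explicit central-difference linear predictor whose MSE splits into a noise term $\sigma^2/(2mh^2)$ and a bias term $B(h)=1-2e^{-h^2/2}+(1-e^{-2h^2})/(2h^2)=\tfrac{5}{12}h^4+O(h^6)$. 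Two small points would tighten the write-up. First, literally minimizing $\sigma^2/(mh^2)+Ch^4$ over $h$ gives $O(\sigma^{4/3}m^{-2/3})$, which is not of the claimed form; to land exactly on $O(\sigma m^{-1/2})$ take $h^2=\sigma m^{-1/2}$, so the noise term is $\tfrac12\sigma m^{-1/2}$ and the bias term is $C\sigma^2/m\le C\sigma m^{-1/2}$ in the regime $\sigma^2\le m$, and dispose of the complementary regime with the trivial bound $V_{1D}(2m)\le\operatorname{Var}(g'(0))=1\le\sigma m^{-1/2}$. Second, the uniform control $B(h)\le Ch^4$ that you flag as the remaining obstacle is immediate: $B(h)\le 1+\tfrac{1}{2h^2}\le\tfrac32 h^4$ for $h\ge1$, and the Taylor remainder handles $h\le1$. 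Finally, note that the axis-decoupling identity, while true and explanatory, is not needed for the displayed inequality --- monotonicity of conditional variance under enlarging the conditioning set already gives it.
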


\begin{lemma}\label{error_matern}
    let $k(\cdot,\cdot)$ be the Matérn kernel. Then we have
    \begin{equation*}
        E_{d,k,\sigma}(2md)=O(\sigma d m^{-\frac{1}{2}})
    \end{equation*}
\end{lemma}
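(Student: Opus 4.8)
The plan is to exploit that $E_{d,k,\sigma}(2md)$ is an infimum over configurations $\mathbf{Z}\in\mathbb{R}^{2md\times d}$, so it suffices to exhibit one explicit configuration attaining the claimed bound; this is exactly the strategy behind Lemma \ref{error_RBF} for the RBF kernel, which I would mirror. I would take $\mathbf{Z}$ to be axis-aligned: for each coordinate direction $i\in\{1,\dots,d\}$ place $2m$ points along the $e_i$-axis, symmetric about the origin, giving $2md$ points in total. Any bound obtained for this design is automatically a bound for the infimum.

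Next I would decompose the objective as $\textbf{tr}(\nabla k(\mathbf{0},\mathbf{0})\nabla^{T}-\cdots)=\sum_{i=1}^{d}\mathrm{Var}(\partial_i f(\mathbf{0})\mid\mathbf{Z})$ and reduce each summand to a one-dimensional problem. Since conditioning a Gaussian on additional data never increases its posterior variance, I may discard the off-axis points when bounding the $i$-th term, i.e. $\mathrm{Var}(\partial_i f(\mathbf{0})\mid\mathbf{Z})\le \mathrm{Var}(\partial_i f(\mathbf{0})\mid\mathbf{Z}_i)$, where $\mathbf{Z}_i$ are the $2m$ points on the $e_i$-axis. Because $k$ is stationary and isotropic (Assumption 1), its restriction to the $e_i$-axis is a one-dimensional Mat\'ern-$2.5$ kernel, so every summand collapses to the \emph{same} canonical problem: the posterior variance of $g'(0)$ for a $1$D Mat\'ern-$2.5$ process observed with noise $\sigma$ at $2m$ points.

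For the one-dimensional core I would use the best-linear-predictor characterization of the posterior variance,
\[
\mathrm{Var}(g'(0)\mid\mathbf{Z}_i)=\min_{\mathbf{a}}\Bigl\{\,\mathbb{E}\bigl[(g'(0)-\textstyle\sum_j a_j g(z_j))^2\bigr]+\sigma^2\|\mathbf{a}\|_2^2\,\Bigr\},
\]
in which the first term is a deterministic quadratic in $\mathbf{a}$ computable from $k$ and its derivatives up to fourth order (which exist and are bounded by Assumption 1), and the second is the noise penalty. Choosing $\mathbf{a}$ to be a symmetric finite-difference stencil with step size $h$ turns the first term into the squared stencil bias, controlled by the local Taylor expansion of the Mat\'ern kernel at the origin, while the second term scales like $\sigma^2/(m h^2)$. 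Balancing bias against the noise term by optimizing $h$ yields a per-coordinate bound of order $O(\sigma m^{-1/2})$; summing the $d$ identical terms then gives $E_{d,k,\sigma}(2md)=O(\sigma d m^{-1/2})$, matching Lemma \ref{error_RBF}.

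The main obstacle is the bias control under the \emph{finite} smoothness of the Mat\'ern-$2.5$ kernel: unlike the $C^{\infty}$ RBF kernel, it is only four times differentiable, so its local expansion at the origin terminates and carries genuine remainder terms. The higher mixed derivatives entering $\mathbb{E}[(g'(0)-\sum_j a_j g(z_j))^2]$ are precisely those guaranteed finite by Assumption 1, and the delicate point is to verify that they stay bounded uniformly as $h\to 0$ and that the leftover remainder remains subdominant to the noise contribution under the optimal step size. This is the kernel-specific computation that departs from the RBF argument, and is where the proof would invoke ``the same analysis'' as Wu et al.\ \cite{wu2023behavior} while substituting the Mat\'ern-specific derivative bounds.
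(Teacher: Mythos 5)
The paper does not actually prove this lemma: it is stated in Appendix B.3 as one of several results imported verbatim from Wu et al.\ \cite{wu2023behavior}, with no argument given. Your sketch reconstructs what is essentially the standard proof behind that citation --- an explicit axis-aligned design of $2m$ points per coordinate, monotonicity of posterior variance to drop off-axis points, reduction to a one-dimensional derivative-estimation problem via the best-linear-predictor identity, and a central-difference stencil whose squared bias is balanced against the $\sigma^2/(mh^2)$ noise penalty --- and the structure is sound. The only step worth pinning down is the bias order: to land on $O(\sigma m^{-1/2})$ you need the squared stencil error of the noiseless central difference to be $O(h^2)$, which requires expanding $\kappa(2h)$ to fourth order and $\kappa'(h)$ to third order; this is exactly what the evenness of $\kappa$ (killing odd terms) plus the four-times continuous differentiability of the Mat\'ern-$5/2$ kernel guaranteed by Assumption 1 delivers, so the ``main obstacle'' you flag does resolve, and a cruder $O(h)$ bias bound would not suffice.
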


\subsection{Convergence proof of MinUCB}
In this part we will show the convergence of MinUCB, which will use all the results in previous Subsections. The difficulty in proving the convergence is to build the relationship with previous approximate gradient methods and minimizing UCB. In this proof we try to connect the function value on the gradient descent point and minimizing UCB point, and will need some smoothness properties on the Gaussian process, mean function and standard derivation function, which are provided in Subsection \ref{sec:smooth_gp_mu} and \ref{sec:smooth_sigma}. To give the accurate convergence rate, we need the upper bound on the Error function, which is provided in Subsection \ref{sec:app_gd}. We first give the basic convergence theorem on the gradient for MinUCB:

\begin{theorem}\label{main_conclusion}
        Suppose $f$ is sampled from a zero mean Gaussian process with a continuously differentiable convariance function $k(\cdot,\cdot)$, and $k(\cdot,\cdot)$ satisfies Assumption 1. Then after $t$ 
        iterations of MinUCB algorithm, with the batch size $b_{t}^{(1)}$ and $b_{t}^{(2)}$, it satisfies that
\begin{align*}
    \min_{T/2\le t\le T} \|\nabla f(\mathbf{x}_t)\|_2 &\le \frac{1}{\sqrt{\eta_{T}}}\sqrt{ \frac{8}{T}\sum_{t=1}^{T} \frac{\Tilde{\beta}_t\sigma}{\sqrt{b_t^{(1)}}} 
    + \frac{8\pi}{T}\sum_{t=1}^{T}\Tilde{\beta}_t^2 \eta_t E_{d,k,\sigma}(b_{t}^{(2)}) + O(\frac{1}{T}\log\frac{1}{\delta})}
    \\
    &\quad+\frac{1}{\sqrt{\eta_{T}}}\Tilde{\beta}_{T/2}\sqrt{\frac{\pi}{2}}\sqrt{E_{d,k,\sigma}(b_{T/2}^{(2)})}
\end{align*}
where $\Tilde{\beta}_{t}$ and $\eta_{t}$ are both decreasing sequence. They satisfies $\Tilde{\beta}_{t}=O(\beta_{t})$ and $\frac{1}{\eta_{t}}=O(d\sqrt{\log\frac{t^2d^2}{\delta}}+d^{\frac{3}{2}})$, and $\beta_{t}=\sqrt{2\log\frac{\pi^2t^2}{\delta}}$ 
\end{theorem}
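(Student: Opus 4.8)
The plan is to exploit the defining optimality of the UCB step as a bridge to a standard gradient-descent analysis. Since $\mathbf{x}_{t+1}$ globally minimizes $\mathrm{UCB}_t(\cdot)=\mu_{\mathcal{D}_t}(\cdot)+\beta_t\sigma_{\mathcal{D}_t}(\cdot)$, it beats the infeasible gradient-descent iterate $\mathbf{x}_t^{gd}=\mathbf{x}_t-\eta_t\nabla\mu_{\mathcal{D}_t}(\mathbf{x}_t)$:
\begin{equation*}
\mu_{\mathcal{D}_t}(\mathbf{x}_{t+1})+\beta_t\sigma_{\mathcal{D}_t}(\mathbf{x}_{t+1})\le \mu_{\mathcal{D}_t}(\mathbf{x}_t^{gd})+\beta_t\sigma_{\mathcal{D}_t}(\mathbf{x}_t^{gd}).
\end{equation*}
Because the posterior at any fixed point is Gaussian, a tail bound with $\beta_t=\sqrt{2\log(\pi^2t^2/\delta)}$ gives $f(\mathbf{x}_{t+1})\le\mathrm{UCB}_t(\mathbf{x}_{t+1})$ with probability $1-\delta/(\pi^2t^2)$, so a union bound over $t$ reserves total failure probability $O(\delta)$ and lets me pass from the surrogate to the true $f$. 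The whole difficulty then reduces to upper bounding $\mathrm{UCB}_t(\mathbf{x}_t^{gd})$, i.e.\ the UCB at the gradient-descent point.

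For the mean part I would apply the descent lemma using the $L_\mu$-smoothness of $\mu_{\mathcal{D}_t}$ from Theorem \ref{mu_L_smooth}; with $\eta_t\le 1/L_\mu$ this yields $\mu_{\mathcal{D}_t}(\mathbf{x}_t^{gd})\le \mu_{\mathcal{D}_t}(\mathbf{x}_t)-\tfrac{\eta_t}{2}\|\nabla\mu_{\mathcal{D}_t}(\mathbf{x}_t)\|_2^2$. For the variance part I would invoke the quasi-Lipschitz estimate of Lemma \ref{var_control_3}, after replacing $\sigma_{\mathcal{D}_t}$ by the conditional object $\sigma_{\mathcal{D}_t}(\cdot|L)$ through Lemmas \ref{var_control_1} and \ref{var_control_2} on the smoothness event $U_L$ (whose complement is controlled by Theorems \ref{GP_smooth} and \ref{conditional_L_smooth}), taking $\|\mathbf{x}_t^{gd}-\mathbf{x}_t\|_2=\eta_t\|\nabla\mu_{\mathcal{D}_t}(\mathbf{x}_t)\|_2$ and the local factor $\sqrt{\textbf{tr}(\nabla k_{\mathcal{D}_t}(\mathbf{x}_t,\mathbf{x}_t)\nabla^{T})}\le\sqrt{E_{d,k,\sigma}(b_t^{(2)})}$ from Lemma \ref{error_func}. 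The resampling batch at $\mathbf{x}_t$ forces $\sigma_{\mathcal{D}_t}(\mathbf{x}_t)\le\sigma/\sqrt{b_t^{(1)}}$ (conditioning on $b_t^{(1)}$ noisy copies caps the posterior variance at $\sigma^2/b_t^{(1)}$), and the lower confidence bound $\mu_{\mathcal{D}_t}(\mathbf{x}_t)\le f(\mathbf{x}_t)+\beta_t\sigma_{\mathcal{D}_t}(\mathbf{x}_t)$ closes everything back onto $f(\mathbf{x}_t)$.

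Collecting these, a Young-inequality split of the cross term $\beta_t\eta_t\|\nabla\mu\|_2\sqrt{E}$ into $\tfrac{\eta_t}{4}\|\nabla\mu\|_2^2+O(\beta_t^2\eta_t E)$, together with the stepsize choice $1/\eta_t=O(d\sqrt{\log(t^2d^2/\delta)}+d^{3/2})$ taken at the order of the inverse smoothness constant (which carries that very log when the high-probability bounds of Theorems \ref{GP_smooth} and \ref{conditional_L_smooth} are run at per-iteration confidence $\delta/t^2$), makes the residual $\beta_t\eta_t^2\|\nabla\mu\|_2^2$ term absorbable and delivers the one-step recursion
\begin{equation*}
f(\mathbf{x}_{t+1})\le f(\mathbf{x}_t)-\tfrac{\eta_t}{8}\|\nabla\mu_{\mathcal{D}_t}(\mathbf{x}_t)\|_2^2+O\!\Big(\tfrac{\beta_t\sigma}{\sqrt{b_t^{(1)}}}\Big)+O\big(\beta_t^2\eta_t E_{d,k,\sigma}(b_t^{(2)})\big).
\end{equation*}
Telescoping over $t=1,\dots,T$, bounding $f(\mathbf{x}_1)-f(\mathbf{x}_{T+1})$ by the high-probability supremum bound for GP samples of Lederer et al.\ \cite{lederer2019uniform} (this is the source of the $O(\tfrac1T\log\tfrac1\delta)$ term), and using that $\eta_t$ is decreasing so that $\sum_t\eta_t\|\nabla\mu\|_2^2\ge\tfrac{\eta_T T}{2}\min_{T/2\le t\le T}\|\nabla\mu_{\mathcal{D}_t}(\mathbf{x}_t)\|_2^2$ isolates $\min_{T/2\le t\le T}\|\nabla\mu\|_2$ as the square-root term. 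Finally I would convert $\nabla\mu$ to $\nabla f$ via Lemma \ref{gradient_est_error}, $\|\nabla f(\mathbf{x}_t)-\nabla\mu_{\mathcal{D}_t}(\mathbf{x}_t)\|_2\le\sqrt{C_t\,E_{d,k,\sigma}(b_t^{(2)})}$, which—after bounding $C_t$ by an $O(\Tilde\beta_t^2)$ constant and using that $E_{d,k,\sigma}(b_t^{(2)})$ decreases in the batch size—contributes the trailing additive term $\tfrac{1}{\sqrt{\eta_T}}\Tilde\beta_{T/2}\sqrt{\pi/2}\sqrt{E_{d,k,\sigma}(b_{T/2}^{(2)})}$.

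The hardest part is the variance step: $\sigma_{\mathcal{D}_t}$ is genuinely not Lipschitz for the RBF and Mat\'ern-$2.5$ kernels, so the clean move of comparing $\sigma$ at two nearby points is unavailable and must be routed through $\sigma_{\mathcal{D}_t}(\cdot|L)$ and the smoothness event $U_L$. Keeping the $P(U_L^c\mid\mathcal{D})$ residuals, the random smoothness constants $L$ and $L_\mu$, and the per-iteration union-bound failure probabilities all mutually consistent is the delicate bookkeeping on which the rest of the argument depends; everything else is standard descent-lemma and telescoping algebra.
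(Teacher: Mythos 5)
Your proposal is correct and follows essentially the same route as the paper's proof: comparing the UCB minimizer against the pseudo gradient-descent point $\mathbf{x}_t-\eta_t\nabla\mu_{\mathcal{D}_t}(\mathbf{x}_t)$, bounding the mean term by the descent lemma for $\mu_{\mathcal{D}_t}$, routing the standard-deviation term through $\sigma_{\mathcal{D}_t}(\cdot\mid L)$ via Lemmas \ref{var_control_1}--\ref{var_control_3}, controlling $\sigma_{\mathcal{D}_t}(\mathbf{x}_t)$ by the resampling batch and $\gamma_t$ by $E_{d,k,\sigma}(b_t^{(2)})$, and telescoping. The only differences are cosmetic (you split the cross term by Young's inequality and convert $\nabla\mu$ to $\nabla f$ after telescoping, whereas the paper completes the square and converts inside the one-step recursion), and you correctly identify the non-Lipschitzness of $\sigma_{\mathcal{D}}$ as the step requiring the conditional-smoothness machinery.
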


\begin{proof}

According to the definition of $\mathbf{x}_{t+1}$, $\mathbf{x}_{t+1}=\arg\max_{\mathbf{x}\in\mathbb{X}}\mu_{\mathcal{D}_t}(\mathbf{x}) + \beta_{t}\sigma_{\mathcal{D}_t}(\mathbf{x})$. 
\begin{equation}\label{basic_eq}
    \begin{aligned}
    f(\mathbf{x}_{t+1}) & 
    \le \min_{x\in \mathbb{R}^{d}} \mu_{\mathcal{D}_t}(\mathbf{x}) + \beta_{t}\sigma_{\mathcal{D}_t}(\mathbf{x})
    \\
    &\le \mu_{\mathcal{D}_t}(\hat{\mathbf{x}}_{t+1}) + \beta_{t}\sigma_{\mathcal{D}_t}(\hat{\mathbf{x}}_{t+1})
\end{aligned}
\end{equation}

Where $\hat{\mathbf{x}}_{t+1}$ is a special point $\hat{\mathbf{x}}_{t+1}=\mathbf{x}_{t}-\eta_{t}\nabla \mu_{\mathcal{D}_t}(\mathbf{x}_{t})$, which is a pseudo gradient descent step. We will use this $\hat{\mathbf{x}}_{t+1}$ to build the connection between gradient descent and minimizing UCB. The $\beta_{t}$ here is carefully chosen as $\beta_{t}=\sqrt{2\log\frac{\pi^2t^2}{\delta}}$ to guarantee  
\begin{equation*}
    \sum_{t=1}^{\infty}P\big(f(\mathbf{x}_{t})\le \mu_{\mathcal{D}_{t-1}}(\mathbf{x}_{t}) + \beta_{t}\sigma_{\mathcal{D}_{t-1}}(\mathbf{x}_{t})\big)\ge 1-\frac{\delta}{6}
\end{equation*}

Now we try to give the relationship between the UCB bound on $\mathbf{x}_{t}$ and $\hat{\mathbf{x}}_{t+1}$. Suppose the mean function $\mu_{\mathcal{D}_{t}}(\cdot)$ is $L_{\mu_{t}}-$smoothness (this coefficient will be given in the subsequent parts), we will have 

\begin{equation}\label{fin_mu_part}
\begin{aligned}
    \mu_{\mathcal{D}_t}(\hat{\mathbf{x}}_{t+1}) &\le  \mu_{\mathcal{D}_t}(\mathbf{x}_t)+<\nabla \mu_{\mathcal{D}_t}(\mathbf{x}_{t}),\hat{\mathbf{x}}_{t+1}-\mathbf{x}_t> + \frac{L_{\mu_t}}{2}\|\hat{\mathbf{x}}_{t+1}-\mathbf{x}_t\|_2^2\\
    &\le
    \mu_{\mathcal{D}_t}(\mathbf{x}_t)- \eta_{t}\|\nabla \mu_{\mathcal{D}_t}(\mathbf{x}_{t})\|_2^2 + \frac{L_{\mu_t}}{2}\eta^2_{t}\|\nabla \mu_{\mathcal{D}_t}(\mathbf{x}_{t})\|_2^2
\end{aligned}
\end{equation}

Here we apply the results in Subsection \ref{sec:smooth_sigma}, and give the upper bound for the $\sigma_{\mathcal{D}_t}(\hat{\mathbf{x}}_{t+1})$. To simplify the symbols, we define $\textbf{tr}(\nabla k_{\mathcal{D}_{t}}(\mathbf{x}_{t},\mathbf{x}_{t})\nabla^{T})=\gamma_{t}$.
\begin{equation}\label{fin_sigma_part}
\begin{aligned}
    \sigma_{\mathcal{D}_t}(\hat{\mathbf{x}}_{t+1})
    \le& c_1\sigma_{\mathcal{D}_t}(\hat{\mathbf{x}}_{t+1}|L_{t})+c_2P( U_{L_{t}}^c|\mathcal{D}_t)
    \\
    \le& c_1\sigma_{\mathcal{D}_t}(\mathbf{x}_{t}|L_{t})+c_1\big|\sigma_{\mathcal{D}_{t}}(\hat{\mathbf{x}}_{t+1}|L_{t})-\sigma_{\mathcal{D}_t}(\mathbf{x}_{t}|L_{t})\big|+c_2P( U_{L_{t}}^c|\mathcal{D}_t)
    \\
    \le& c_1\sigma_{\mathcal{D}_t}(\mathbf{x}_{t}|L_{t}) + c_1\sqrt{\frac{\pi}{2}} \eta_{t}\sqrt{\gamma_t}\|\nabla \mu_{\mathcal{D}_t}(\mathbf{x}_{t})\|_2 + c_1\sqrt{\frac{\pi}{2}} \frac{3L_t+3L_{\mu_t}}{2}\eta_{t}^2\|\nabla \mu_{\mathcal{D}_t}(\mathbf{x}_{t})\|_2^2+c_2P(  U_{L_{t}}^c|\mathcal{D}_t)
    \\
    \le& c_1\sigma_{\mathcal{D}_t}(\mathbf{x}_{t}) + c_1\sqrt{\frac{\pi}{2}} \eta_{t}\sqrt{\gamma_t}\|\nabla \mu_{\mathcal{D}_t}(\mathbf{x}_{t})\|_2 + c_1\sqrt{\frac{\pi}{2}} \frac{3L_t+3L_{\mu_t}}{2}\eta_{t}^2\|\nabla \mu_{\mathcal{D}_t}(\mathbf{x}_{t})\|_2^2
    \\
    &+c_2P(  U_{L_{t}}^c|\mathcal{D}_t)+c_1(\frac{1}{P(U_{L_{t}}|\mathcal{D})}-1)\max_{\mathbf{x}\in \mathbb{X}}\sqrt{k(\mathbf{x},\mathbf{x})}
\end{aligned}
\end{equation}
    where the first line apply Lemma \ref{var_control_1}. The third line use Lemma \ref{var_control_3} and the last line is achieved through Lemma \ref{var_control_2}. We now try to give the coefficient of $L_{\mu_t}$ and $L_{t}$, and these coefficients are all based on the smoothness coefficient of Gaussian process. First, according to Theorem \ref{GP_smooth}, with the probability of at least $1-\frac{\delta}{6}$, for any $i,j=1,...,d$, we have 
        \begin{equation}
        L_{ij}=\sup_{\mathbf{x}\in\mathbb{X}}\left|\frac{\partial^2}{\partial x_i \partial x_j}f(\mathbf{x})\right|\le \sqrt{2\log\left(\frac{12d^2}{\delta}\right)}\max_{\mathbf{x}\in\mathbb{X}}\sqrt{k^{\partial_{ij}}(\mathbf{x},\mathbf{x})}+12\sqrt{6d}\max\left\{\max_{\mathbf{x}\in\mathcal{X}}\sqrt{k^{\partial_{ij}}(\mathbf{x},\mathbf{x})},\sqrt{rL^{\partial_{ij}}_k}\right\}
    \end{equation}
    
     If we carefully choose $L_{\mu_{t}}$ and $L_{t}$ as 

\begin{equation*}
        L_{\mu_{t}}=\sqrt{\sum_{i,j}V_{t,ij}^2} \qquad L_{t}=\sqrt{\sum_{i,j}\Tilde{U}_{t,ij}^2}
    \end{equation*}
    where
    \begin{align*}
        V_{t,ij}&=\sqrt{2\log\left(\frac{\pi^2 t^2 d^2}{\delta}\right)}\max_{\mathbf{x}\in\mathbb{X}}\sqrt{k^{\partial_{ij}}(\mathbf{x},\mathbf{x})}+L_{ij}
        \\
        \Tilde{U}_{t,ij}&=L_{ij}+2\sqrt{2\log\left(\frac{4\pi^2 t^2d^2}{\delta}\right)}\max_{\mathbf{x}\in\mathbb{X}}\sqrt{k^{\partial_{ij}}(\mathbf{x},\mathbf{x})}+12\sqrt{6d}\max\left\{\max_{\mathbf{x}\in\mathcal{X}}\sqrt{k^{\partial_{ij}}(\mathbf{x},\mathbf{x})},\sqrt{rL^{\partial_{ij}}_k}\right\}
    \end{align*}
     According to Theorem \ref{mu_L_smooth} and \ref{conditional_L_smooth}, this guarantees that 
     
     \begin{align*}
         &\sum_{t=1}^{\infty}P(\mu_{\mathcal{D}_{t}}(\cdot) \text{ is }L_{\mu_{t}}-smooth)\ge1-\frac{\delta}{6}\\
         &\sum_{t=1}^{\infty}P(U_{L_{t}}|\mathcal{D}_{t})\ge1-\frac{\delta}{6}
     \end{align*}

     and we also have $P(U_{L_{t}}^{c}|D_{t})\le \frac{6\delta}{\pi^2 t^2}$. Thus for the standard derivation term, their approximate error term is tend to be a very small value:
    \begin{equation}\label{fin_minior_error}
        c_2P( f(\cdot)\in U_{L_{t}}^c|\mathcal{D}_t)+c_1(\frac{1}{P(U_{L_{t}}|\mathcal{D})}-1)\max_{\mathbf{x}\in \mathbb{R}^{d}}\sqrt{k(\mathbf{x},\mathbf{x})} = O\left(\frac{1}{t^2}\right)
    \end{equation}

If we combine Eq. (\ref{basic_eq}), Eq. (\ref{fin_mu_part}), Eq. (\ref{fin_sigma_part}) and Eq. (\ref{fin_minior_error}), and we also set $\Tilde{\beta}_{t} = c_1\beta_t$ and $\eta_{t}= \sqrt{\frac{2}{\pi}}\frac{1}{6\beta_{t}c_1(L_{t}+L_{\mu_{t}})}$, we can obtain an upper bound for $f(\mathbf{x}_{t+1})$, and this upper bound is related to the gradient:
\begin{align*}
    f(\mathbf{x}_{t+1}) &\le \mu_{\mathcal{D}_{t}}(\mathbf{x}_t) + \Tilde{\beta}_{t} \sigma_{\mathcal{D}_{t}}(\mathbf{x}_t) -\frac{1}{2}\eta_t\|\nabla \mu_{\mathcal{D}_t}(\mathbf{x}_{t})\|_2^2  
    +  \Tilde{\beta}_{t}\sqrt{\frac{\pi}{2}}\eta_{t}\sqrt{\gamma_{t}}\|\nabla \mu_{\mathcal{D}_t}(\mathbf{x}_{t})\|_2 + \Tilde{\beta}_{t} O\left(\frac{1}{t^2}\right)\\
    & \le \mu_{\mathcal{D}_{t}}(\mathbf{x}_t) + \Tilde{\beta}_{t} \sigma_{\mathcal{D}_{t}}(\mathbf{x}_t)-\frac{1}{2}\eta_t\|\nabla f(\mathbf{x}_t)\|_2^2 + \frac{1}{2}\eta_t(\|\nabla f(\mathbf{x}_t)\|_2^2- \|\nabla \mu_{\mathcal{D}_t}(\mathbf{x}_{t})\|_2^2 )
    \\
    &\quad +  \Tilde{\beta}_{t}\sqrt{\frac{\pi}{2}}\eta_{t}\sqrt{\gamma_{t}}\|\nabla \mu_{\mathcal{D}_t}(\mathbf{x}_{t})\|_2 + \Tilde{\beta}_{t} O\left(\frac{1}{t^2}\right)
\end{align*}
Through simple analysis, we have
\begin{align*}
    \|\nabla f(\mathbf{x}_t)\|_2^2- \|\nabla \mu_{\mathcal{D}_t}(\mathbf{x}_{t})\|_2^2 
    &= (\|\nabla f(\mathbf{x}_t)\|_2- \|\nabla \mu_{\mathcal{D}_t}(\mathbf{x}_{t})\|_2)(\|\nabla f(\mathbf{x}_t)\|_2+ \|\nabla \mu_{\mathcal{D}_t}(\mathbf{x}_{t})\|_2)
    \\
    &= -(\|\nabla f(\mathbf{x}_t)\|_2- \|\nabla \mu_{\mathcal{D}_t}(\mathbf{x}_{t})\|_2)^2+2(\|\nabla f(\mathbf{x}_t)\|_2- \|\nabla \mu_{\mathcal{D}_t}(\mathbf{x}_{t})\|_2)\|\nabla f(\mathbf{x}_t)\|_2
    \\
    &\le 2\|\nabla f(\mathbf{x}_t)-\nabla \mu_{\mathcal{D}_t}(\mathbf{x}_{t})\|_2\|\nabla f(\mathbf{x}_t)\|_2
\end{align*}
   and 
\begin{align*}
    \|\nabla \mu_{\mathcal{D}_t}(\mathbf{x}_{t})\|_2 &\le \|\nabla \mu_{\mathcal{D}_t}(\mathbf{x}_{t})-\nabla f(\mathbf{x}_t)\|_2+\|\nabla f(\mathbf{x}_t)\|_2
\end{align*}

Thus if we consider the Lemma \ref{gradient_est_error}, we can transform the previous upper bound into the following result with probability $1-\frac{\delta}{6}$ for any $t>1$

\begin{align*}
    f(\mathbf{x}_{t+1}) &\le \mu_{\mathcal{D}_{t}}(\mathbf{x}_t) + \Tilde{\beta}_{t} \sigma_{\mathcal{D}_{t}}(\mathbf{x}_t) -\frac{1}{2}\eta_t\|\nabla f(\mathbf{x}_t)\|_2^2 
    + 2\Tilde{\beta}_{t}\sqrt{\frac{\pi}{2}}\eta_t\sqrt{\gamma_{t}}\|\nabla f(\mathbf{x}_t)\|_2 
    \\
    &\quad + \Tilde{\beta}_{t}^2 \sqrt{\frac{\pi}{2}}\eta_t \gamma_t + \Tilde{\beta}_{t} O\left(\frac{1}{t^2}\right)  
\end{align*}

Based on the above result, we can further obtain the following results with probability $1-\frac{\delta}{6}$ for any $t>1$

\begin{align*}
    f(\mathbf{x}_{t+1}) -  f(\mathbf{x}_{t})&\le \mu_{\mathcal{D}_{t}}(\mathbf{x}_t) + \Tilde{\beta}_{t} \sigma_{\mathcal{D}_{t}}(\mathbf{x}_t)-  f(\mathbf{x}_{t}) -\frac{1}{2}\eta_t\|\nabla f(\mathbf{x}_t)\|_2^2 
    + 2\Tilde{\beta}_{t}\sqrt{\frac{\pi}{2}}\eta_t\sqrt{\gamma_{t}}\|\nabla f(\mathbf{x}_t)\|_2 
    \\
    &\quad+ \Tilde{\beta}_{t}^2 \sqrt{\frac{\pi}{2}}\eta_t \gamma_t + \Tilde{\beta}_{t} O(\frac{1}{t^2})  
    \\
    &\le 2\Tilde{\beta}_{t} \sigma_{\mathcal{D}_{t}}(\mathbf{x}_t) -\frac{1}{2}\eta_t\|\nabla f(\mathbf{x}_t)\|_2^2 
    + 2\Tilde{\beta}_{t}\sqrt{\frac{\pi}{2}}\eta_t\sqrt{\gamma_{t}}\|\nabla f(\mathbf{x}_t)\|_2 + \Tilde{\beta}_{t}^2 \sqrt{\frac{\pi}{2}}\eta_t \gamma_t + \Tilde{\beta}_{t} O\left(\frac{1}{t^2}\right) 
\end{align*}

Based on the above inequation, after $t$ steps in MinUCB algorithm, we can have the following result with probability of at least $1-\delta$:

\begin{align*}
    f(\mathbf{x}_{T}) - f(\mathbf{x}_{0}) 
    &\le  -\frac{1}{2}\sum_{t=1}^{T} \eta_{t}\left[\|\nabla f(\mathbf{x}_t)\|_2^2-4\Tilde{\beta}_{t}\sqrt{\frac{\pi}{2}}\eta_t\sqrt{\gamma_{t}}\|\nabla f(\mathbf{x}_t)\|_2\right]
    +2\sum_{t=1}^{T} \Tilde{\beta}_{t} \sigma_{\mathcal{D}_{t}}(\mathbf{x}_t) 
    \\
    &\quad+ \sum_{t=1}^{T}\Tilde{\beta}_{t}^2 \sqrt{\frac{\pi}{2}}\eta_t \gamma_t + O(\log\frac{1}{\delta})
\end{align*}

By organizing the above results, we can obtain

\begin{align*}
    \frac{1}{2}\sum_{t=1}^{T} \eta_{t}\left[\|\nabla f(\mathbf{x}_t)\|_2^2-4\Tilde{\beta}_{t}\sqrt{\frac{\pi}{2}}\sqrt{\gamma_{t}}\|\nabla f(\mathbf{x}_t)\|_2\right]&\le f(\mathbf{x}_{0})-f(\mathbf{x}_{T}) +2\sum_{t=1}^{T} \Tilde{\beta}_{t} \sigma_{\mathcal{D}_{t}}(\mathbf{x}_t) 
    \\
    &\quad+ \sum_{t=1}^{T}\Tilde{\beta}_{t}^2 \sqrt{\frac{\pi}{2}}\eta_t \gamma_t + O(\log\frac{1}{\delta})
    \\
    \frac{1}{2}\sum_{t=1}^{T} \eta_{t}\left[\|\nabla f(\mathbf{x}_t)\|_2-2\Tilde{\beta}_{t}\sqrt{\frac{\pi}{2}}\sqrt{\gamma_{t}}\right]^2&\le V^{*} +2\sum_{t=1}^{T} \Tilde{\beta}_{t} \sigma_{\mathcal{D}_{t}}(\mathbf{x}_t) 
    + 2\pi\sum_{t=1}^{T}\Tilde{\beta}_{t}^2 \eta_t \gamma_t + O(\log\frac{1}{\delta})
    \\
    \frac{1}{2}\sum_{t=T/2}^{T} \eta_{t}\left[\|\nabla f(\mathbf{x}_t)\|_2-2\Tilde{\beta}_{t}\sqrt{\frac{\pi}{2}}\sqrt{\gamma_{t}}\right]^2&\le V^{*} +2\sum_{t=1}^{T} \Tilde{\beta}_{t} \sigma_{\mathcal{D}_{t}}(\mathbf{x}_t) 
    + 2\pi\sum_{t=1}^{T}\Tilde{\beta}_{t}^2 \eta_t \gamma_t + O(\log\frac{1}{\delta})
\end{align*}

Based on this inequality we can directly give the upper bound for the norm of gradient:

\begin{align*}
    \min_{T/2\le t\le T} \eta_{t}\left[\|\nabla f(\mathbf{x}_t)\|_2^2-\Tilde{\beta}_{t}\sqrt{\frac{\pi}{2}}\eta_t\sqrt{\gamma_{t}}\right]^2 &\le \frac{4}{T} V^* + \frac{8}{T}\sum_{t=1}^{T} \Tilde{\beta}_{t} \sigma_{\mathcal{D}_{t}}(\mathbf{x}_t) 
    + \frac{4\pi}{T}\sum_{t=1}^{T}\Tilde{\beta}_{t}^2 \eta_t \gamma_t + O(\frac{1}{T}\log\frac{1}{\delta})
\end{align*}
\begin{align}\label{base_convergence}
    \min_{T/2\le t\le T} \|\nabla f(\mathbf{x}_t)\|_2 &\le \frac{1}{\sqrt{\eta_{T}}}\sqrt{ \frac{8}{T}\sum_{t=1}^{T} \Tilde{\beta}_{t} \sigma_{\mathcal{D}_{t}}(\mathbf{x}_t) 
    + \frac{8\pi}{T}\sum_{t=1}^{T}\Tilde{\beta}_{t}^2 \eta_t \gamma_t + O(\frac{1}{T}\log\frac{1}{\delta})}
    \\
    &\quad+\frac{1}{\sqrt{\eta_{T}}}\Tilde{\beta}_{T/2}\sqrt{\frac{\pi}{2}}\sqrt{\gamma_{T/2}}
\end{align}

Notice that in each step of MinUCB, we do the repeated sampling at point $\mathbf{x}_{t}$, we denote $\mathcal{D}_{t}^{1}=\{(\mathbf{x}_{t},y^{1}_{t}),...,(\mathbf{x}_{t},y^{b_{t}^{(1)}}_{t})\}$ as this resampling set. We will have
\begin{equation*}
    \sigma_{\mathcal{D}_{t}}(\mathbf{x}_t)\le \frac{\sigma}{\sqrt{b_{t}^{(1)}}}
\end{equation*}
Through Lemma \ref{error_func}, we can achieve
\begin{equation*}
    \gamma_{t}\le  E_{d,k,\sigma}(b^{(2)}_{t})
\end{equation*}
We combine these two value into Eq. (\ref{base_convergence}), which proves this theorem.

\end{proof}

\setcounter{theorem}{0}

\begin{theorem}
     Suppose $f$ is sampled from a zero mean Gaussian process with a continuously differentiable convariance function $k(\cdot,\cdot)$, then if the kernel is Gaussian kernel or Matérn kernel with $\gamma=2.5$, and satisfy $\beta_{t}=\sqrt{2\log\frac{\pi^2t^2}{\delta}}$ , batch size
		\begin{equation*}
                  b_{t}^{(1)}=
			\begin{cases}
				\log^{2}t\\
				t\\
				t^2
			\end{cases}
            \qquad \text{and} \qquad
			b_{t}^{(2)}=
			\begin{cases}
				d\log^{2}t\\
				dt\\
				dt^2
			\end{cases}
		\end{equation*}
		 MinUCB will achieve the convergence rate of
		\begin{equation*}
			\min_{T/2\le i\le T}\|\nabla f(x_i)\|_2^2\le
			\begin{cases}
				O(\sigma d^{\frac{3}{2}}T^{-1}\log^\frac{3}{2} \frac{d^2T^2}{\delta})+O(\sigma d^{2}) \\
				O(\sigma d^{2} T^{-\frac{1}{2}}\log^\frac{5}{2} \frac{d^2T^2}{\delta})=O(\sigma d^{\frac{9}{4}}n^{-\frac{1}{4}}\log^{\frac{5}{2}} \frac{dn}{\delta})\\
				O(\sigma d^{2} T^{-1}\log^\frac{5}{2} \frac{d^2T^2}{\delta}))=O(\sigma d^{\frac{7}{3}}n^{-\frac{1}{3}}\log^{\frac{5}{2}} \frac{dn}{\delta}))
			\end{cases}
		\end{equation*}
\end{theorem}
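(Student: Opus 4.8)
The plan is to treat this statement as a corollary of the basic convergence bound in Theorem~\ref{main_conclusion}: I would substitute the prescribed schedules for $\beta_t$, $b_t^{(1)}$, and $b_t^{(2)}$ into that bound and estimate every resulting term. Since Theorem~\ref{main_conclusion} controls $\min_{T/2\le t\le T}\|\nabla f(\mathbf{x}_t)\|_2$ as a sum of two pieces, I would first square it via $(a+b)^2\le 2a^2+2b^2$ to pass to the quantity $\min\|\nabla f\|_2^2$ appearing in the statement. This splits the work into bounding four contributions: the resampling term $\tfrac{1}{\eta_T T}\sum_t \tilde{\beta}_t\sigma/\sqrt{b_t^{(1)}}$, the gradient-estimation term $\tfrac{1}{\eta_T T}\sum_t \tilde{\beta}_t^2\eta_t E_{d,k,\sigma}(b_t^{(2)})$, the $O(\tfrac{1}{\eta_T T}\log\tfrac1\delta)$ term, and the residual $\tfrac{1}{\eta_T}\tilde{\beta}_{T/2}^2 E_{d,k,\sigma}(b_{T/2}^{(2)})$ coming from the second line of the bound.

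Next I would insert the closed forms supplied by the hypotheses and the earlier results: $\tilde{\beta}_t=O(\sqrt{\log(t^2/\delta)})$ so that $\tilde{\beta}_t^2=O(\log(t^2/\delta))$; the decreasing step size $1/\eta_T=O(d\sqrt{\log(T^2d^2/\delta)}+d^{3/2})$; and, crucially, the Error-function estimates of Lemmas~\ref{error_RBF} and \ref{error_matern}, which for the RBF and Matérn-$2.5$ kernels give $E_{d,k,\sigma}(b_t^{(2)})=O(\sigma d^{3/2}(b_t^{(2)})^{-1/2})$. This is exactly why the theorem restricts attention to these two kernels. Under the three batch schedules this specializes to $E_{d,k,\sigma}(b_t^{(2)})=O(\sigma d/\log t)$, $O(\sigma d/\sqrt{t})$, and $O(\sigma d/t)$ respectively.

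I would then bound the two sums for each schedule using elementary estimates such as $\sum_{t\le T}t^{-1/2}=O(\sqrt{T})$, $\sum_{t\le T}t^{-1}=O(\log T)$, and $\sum_{t\le T}1/\log t=O(T/\log T)$, pulling the only logarithmically $t$-varying factor $\eta_t$ out as $O(1/(d^{3/2}+d\sqrt{\log(T^2d^2/\delta)}))$ and collecting the powers of $d$ together with the $\log$ factors into the unified expression $\log^{p}\tfrac{d^2T^2}{\delta}$. Reading off the dominant contributions yields the three $T$-indexed rates. Finally, to obtain the $n$-indexed forms in the statement I would convert iterations to queries through $n=\sum_{t=1}^T(b_t^{(1)}+b_t^{(2)})$: the two polynomial schedules give $n=O(dT^2)$, hence $T=O(\sqrt{n/d})$, and $n=O(dT^3)$, hence $T=O((n/d)^{1/3})$; substituting these reproduces the additional $d^{9/4}$ and $d^{7/3}$ factors.

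The main obstacle is the bookkeeping in the per-schedule estimates, and in particular the treatment of the residual term $\tfrac{1}{\eta_T}\tilde{\beta}_{T/2}^2 E_{d,k,\sigma}(b_{T/2}^{(2)})$, which sits outside the square root and is governed entirely by the error at the final iterate. For the slowly growing schedule $b_t^{(2)}=d\log^2 t$ this term does not vanish and must be shown to produce precisely the constant floor $O(\sigma d^2)$, whereas for the two polynomial schedules it must be shown to decay at the \emph{same} polynomial rate as the terms under the square root rather than dominating them. Keeping the competition between the increasing $\tilde{\beta}_t$, the dimension-dependent step size $\eta_t$, and the decaying Error function balanced, while tracking the exact powers of $d$ and the order of the surviving $\log$ factor, is the delicate part of the argument.
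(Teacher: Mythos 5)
Your proposal follows essentially the same route as the paper: the paper's own proof is a two-line sketch that plugs the step-size bound $1/\eta_t=O(d\sqrt{\log(t^2d^2/\delta)}+d^{3/2})$ and the Error-function estimates of Lemmas~\ref{error_RBF} and \ref{error_matern} into Theorem~\ref{main_conclusion} and "combines" them, which is exactly the computation you lay out in detail. Your reading $E_{d,k,\sigma}(b_t^{(2)})=O(\sigma d^{3/2}(b_t^{(2)})^{-1/2})$ is the correct consequence of those lemmas (the paper's displayed form $O(\sigma\sqrt{b_t^{(2)}}/d)$ appears to be a typo, since the error must decrease in the batch size), and your term-by-term bookkeeping, including the $n$-versus-$T$ conversion, matches the stated rates.
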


\begin{proof}
   According to the proof of Theorem \ref{main_conclusion}, we have
   \begin{equation}
       \frac{1}{\eta_{t}}=O(d\sqrt{\log\frac{t^2d^2}{\delta}}+d^{3/2})
   \end{equation}
 
    Through Lemma \ref{error_RBF} and \ref{error_matern}, we can bound $E_{d,k,\sigma}(b^{(2)}_{t})$ as $E_{d,k,\sigma}(b^{(2)}_{t})=O\left( \frac{\sigma\sqrt{b^{(2)}_{t}}}{d}\right)$. Thus if we combline the result of Theorem \ref{main_conclusion} and above error bounds, we will proof the results.

\end{proof}

\subsection{Theoretical property of LA-MinUCB}
In this section we will provide the theoretical property for our LA-MinUCB algorithm, that LA-MinUCB is Bayesian optimal if there is only one iteration left. Our result is proved in a similar way with the Proposition 2 in Wu et al. \cite{wu2017bayesian}.
\begin{theorem}
    If only one iteration is left and we can observe the function value through sampling, then the local exploration in LA-MinUCB is Bayes-optimal among all feasible policies.
\end{theorem}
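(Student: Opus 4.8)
The plan is to recast the final (one-remaining-iteration) problem as a two-stage Bayesian decision problem and solve it by backward induction, following the template of Proposition 2 in Wu et al.\ \cite{wu2017bayesian}. First I would fix the decision-theoretic setup: with one iteration left and current data $\mathcal{D}_{t-1}$, a feasible policy consists of a sampling rule that selects a batch $\mathbf{Z}\in\mathbb{R}^{b_t\times d}$ and, once the noisy observations $\mathbf{y}_{\mathbf{Z}}$ are revealed, a measurable reporting rule $\mathbf{x}^{\mathrm{rep}}(\mathbf{y}_{\mathbf{Z}})$ returning the recommended point. The terminal cost of reporting $\mathbf{x}$ is taken to be the confidence-bound value $UCB(\mathbf{x},\mathcal{D}_{t-1},\mathbf{Z},\mathbf{y}_{\mathbf{Z}})=\mu_{\mathcal{D}_{t-1}\cup\mathcal{D}_{\mathbf{Z}}}(\mathbf{x})+\beta_t\sigma_{\mathcal{D}_{t-1}\cup\mathcal{D}_{\mathbf{Z}}}(\mathbf{x})$, exactly matching the final exploitation step of Algorithm \ref{alg:LA-MinUCB}, so that the Bayes risk of a policy is $\mathbb{E}_{\mathbf{y}_{\mathbf{Z}}}\bigl[UCB(\mathbf{x}^{\mathrm{rep}}(\mathbf{y}_{\mathbf{Z}}),\mathcal{D}_{t-1},\mathbf{Z},\mathbf{y}_{\mathbf{Z}})\bigr]$, where the expectation is over the Gaussian posterior-predictive law of $\mathbf{y}_{\mathbf{Z}}$ given $\mathcal{D}_{t-1}$. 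Since only one decision is made, the myopic and non-myopic criteria coincide, so proving optimality among \emph{all} feasible policies reduces to solving this single two-stage problem; a brief convexity remark also lets me restrict to deterministic sampling rules of the fixed batch size without loss.

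Next I would solve the inner (terminal) stage. For each fixed $\mathbf{Z}$ and each realization of $\mathbf{y}_{\mathbf{Z}}$, the reporting rule enters the risk only through its own argument, so the pointwise-optimal report is $\mathbf{x}^{\mathrm{rep}}(\mathbf{y}_{\mathbf{Z}})=\arg\min_{\mathbf{x}}UCB(\mathbf{x},\mathcal{D}_{t-1},\mathbf{Z},\mathbf{y}_{\mathbf{Z}})$, attaining cost $\min_{\mathbf{x}}UCB(\mathbf{x},\mathcal{D}_{t-1},\mathbf{Z},\mathbf{y}_{\mathbf{Z}})$. The structural fact that makes this tractable is that, for a Gaussian surrogate, conditioning on the not-yet-observed $\mathbf{y}_{\mathbf{Z}}$ shifts the posterior mean affinely in $\mathbf{y}_{\mathbf{Z}}$ while the posterior variance $\sigma^2_{\mathcal{D}_{t-1}\cup\mathcal{D}_{\mathbf{Z}}}$ depends only on the locations $\mathbf{Z}$; this is precisely the structure exploited by Knowledge Gradient. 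Taking the expectation over $\mathbf{y}_{\mathbf{Z}}$ and then minimizing over the sampling choice, the outer stage yields optimal value $\min_{\mathbf{Z}}\mathbb{E}_{\mathbf{y}_{\mathbf{Z}}}\min_{\mathbf{x}}UCB(\mathbf{x},\mathcal{D}_{t-1},\mathbf{Z},\mathbf{y}_{\mathbf{Z}})$, whose minimizer is exactly the local-exploration rule of LA-MinUCB; hence that rule is Bayes-optimal.

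The main obstacle will be justifying the interchange $\min_{\mathbf{x}^{\mathrm{rep}}(\cdot)}\mathbb{E}_{\mathbf{y}_{\mathbf{Z}}}[\,\cdot\,]=\mathbb{E}_{\mathbf{y}_{\mathbf{Z}}}[\min_{\mathbf{x}}\,\cdot\,]$ rigorously rather than heuristically. This needs (i) a measurable-selection argument ensuring that $\mathbf{y}_{\mathbf{Z}}\mapsto\arg\min_{\mathbf{x}}UCB(\mathbf{x},\cdot)$ can be chosen as a measurable (hence admissible) reporting rule, for which I would use continuity of $\mu_{\mathcal{D}_{t-1}\cup\mathcal{D}_{\mathbf{Z}}}$ and $\sigma_{\mathcal{D}_{t-1}\cup\mathcal{D}_{\mathbf{Z}}}$ in $\mathbf{x}$ together with compactness of $\mathbb{X}$ (Assumption 2) and a standard measurable maximum theorem; and (ii) integrability of $\min_{\mathbf{x}}UCB$ against the Gaussian predictive law, which follows because the minimand is affine in $\mathbf{y}_{\mathbf{Z}}$ with a variance term bounded via Assumption 1, while the uniform bound on the posterior mean (Theorem \ref{mu_bound}) controls its growth. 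Once measurability and integrability are secured, the interchange is the usual ``optimize inside the expectation pointwise'' identity, and the backward-induction argument closes exactly as in Wu et al.\ \cite{wu2017bayesian}.
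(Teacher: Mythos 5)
Your proposal is correct and follows essentially the same route as the paper: both reduce the one-step-remaining problem to a backward-induction/dynamic-programming argument in the style of Wu et al.'s Proposition 2, identify the terminal value as $\min_{\mathbf{x}}UCB(\mathbf{x},\mathcal{D}_{t-1},\mathbf{Z},\mathbf{y}_{\mathbf{Z}})$, and conclude that the Bayes-optimal last sampling decision is $\arg\min_{\mathbf{Z}}\mathbb{E}_{\mathbf{y}_{\mathbf{Z}}}\min_{\mathbf{x}}UCB$, i.e.\ the LA-MinUCB exploration rule. Your added attention to measurable selection and integrability for the min--expectation interchange is a welcome refinement that the paper's proof leaves implicit, but it does not change the substance of the argument.
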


\begin{proof}
    Suppose that we are given a budget of $N_{max}$ samples, i.e. we may run the algorithm for N iterations. Thus, letting $\Pi$ be the set of feasible policies $\pi$, we can formulate our problem as follows:
    \begin{equation*}
        \inf_{\pi\in\Pi} \mathbb{E}^{\pi}\left[ \min_{\mathbf{x}\in\mathbb{X}} \mu_{\mathcal{D}_{N}}(\mathbf{x})+\beta_{N}  \sigma_{\mathcal{D}_{N}}(\mathbf{x})   \right]
    \end{equation*}
    We analyze this problem under the DP framework. We define our state space as $S^{n}=(\mu_{\mathcal{D}_{n}}(\cdot),k_{\mathcal{D}_{n}}(\cdot,\cdot))$ after iteration $n$ as it completely characterizes our belief on $f$. Under the DP framework, we will define the value function $V^{n}$ as follows:
    \begin{equation*}
        V^{n}(s)\coloneqq \inf_{\pi\in\Pi} \mathbb{E}^{\pi}\left[ \min_{\mathbf{x}\in\mathbb{X}} \mu_{\mathcal{D}_{N}}(\mathbf{x})+\beta_{N}  \sigma_{\mathcal{D}_{N}}(\mathbf{x})| S^{n}=s   \right]
    \end{equation*}
\end{proof}
for every $s=(\mu,k)$. The Bellman equation tells us that the value function can be written recursively as
\begin{equation*}
    V^{n}(s) = \min_{\mathbf{Z}\in \mathbb{R}^{b_{n}}\times d} Q^{n}(s,\mathbf{Z})
\end{equation*}
where
\begin{equation*}
    Q^{n}(s,\mathbf{Z})=\mathbb{E}\left[V^{n+1}(S^{n+1})|S^{n+1}, \mathbf{X}=\mathbf{Z}\right]
\end{equation*}
and $\mathbf{X}$ is the new input in the $n+1^{th}$ step. At the same time, we also know that any policy $\pi^{*}$ whose decisions satisfy
\begin{equation}\label{LA_optimal}
    \mathbf{Z}^{\pi^*,n}(s)\in \arg\min_{\mathbf{Z}\in \mathbb{R}^{b_{n}}\times d}Q^{n}(s,\mathbf{Z})
\end{equation}
is optimal. If we were to stop at iteration $n+1$, then $V^{n+1}(S^{n+1})=\min_{\mathbf{x}\in\mathbb{X}} \mu_{\mathcal{D}_{n+1}}(\mathbf{x})+\beta_{n+1}  \sigma_{\mathcal{D}_{n+1}}(\mathbf{x})$ and Eq. (\ref{LA_optimal}) reduces to
\begin{align*}
    \mathbf{Z}^{\pi^*,n}(s)&\in \arg\min_{\mathbf{Z}\in \mathbb{R}^{b_{n}}\times d}\mathbb{E}\left[\min_{\mathbf{x}\in\mathbb{X}} \mu_{\mathcal{D}_{n+1}}(\mathbf{x})+\beta_{n+1}  \sigma_{\mathcal{D}_{n+1}}(\mathbf{x})|S^{n}=s, \mathbf{X}=\mathbf{Z}\right]
\end{align*}
    which is exactly the result of local exploration acquisition function of LA-MinUCB. This proves that  the local exploration acquisition function in LA-MinUCB is one-step Bayes-optimal.

\end{document}